\documentclass[11pt]{article}
\usepackage{amsmath,amsthm,amssymb}
\usepackage{latexsym}
\usepackage{mathrsfs}
\usepackage[normalem]{ulem}

\usepackage{enumerate}
\usepackage[shortlabels]{enumitem}
\usepackage{verbatim}

\usepackage[english]{babel}

\usepackage{graphics,graphicx}
\usepackage{wrapfig} 
\usepackage{subcaption}
\usepackage{subfloat}
\usepackage{booktabs}
\usepackage{multirow}
\usepackage{threeparttable}
\allowdisplaybreaks[3] 
\usepackage{float}

\usepackage{algorithm}
\usepackage{algorithmic}

\usepackage{bbold}

\usepackage{appendix}

\usepackage{authblk}

\usepackage{tikz}
\usetikzlibrary{arrows.meta, positioning, decorations.pathmorphing, shapes.geometric}

\usepackage{color}
\usepackage{bm}
\usepackage{url}
\usepackage{hyperref}
\hypersetup{
  colorlinks=true,
  linkcolor=blue,
  filecolor=blue,
  anchorcolor=blue,
  urlcolor=blue,
  citecolor=purple
}

\usepackage{tikz}
\usepackage{pgfplots}
\usepackage{pifont}



\usepackage[margin=1in]{geometry}

\newtheorem{theorem}{Theorem}[section]

\newtheorem{lemma}[theorem]{Lemma}
\newtheorem{prop}[theorem]{Proposition}
\newtheorem{fact}[theorem]{Fact}
\newtheorem{definition}[theorem]{Definition}
\newtheorem{assumption}[theorem]{Assumption}
\theoremstyle{definition}
\newtheorem{remark}[theorem]{Remark}
\newtheorem*{example}{Example}

\numberwithin{equation}{section}


\newcommand{\bz}{\mathbf 0}
\newcommand{\bP}{\mathbb P}
\newcommand{\Q}{\mathbb Q}
\newcommand{\R}{\mathbb R}

\DeclareMathOperator{\prox}{prox}
\DeclareMathOperator{\proj}{proj}

\DeclareMathOperator{\dist}{dist}

\DeclareMathOperator*{\argmin}{argmin}
\DeclareMathOperator*{\argmax}{argmax}

\usepackage[dvipsnames]{xcolor}


\usepackage{amssymb,amsmath,amsfonts,bm}






















\def\1{\bm{1}}










\DeclareMathAlphabet{\mathsfit}{\encodingdefault}{\sfdefault}{m}{sl}
\SetMathAlphabet{\mathsfit}{bold}{\encodingdefault}{\sfdefault}{bx}{n}

















\begin{document}
\title{\bf Distributionally Robust Optimization via Iterative Algorithms in Continuous Probability Spaces}
\author[1]{Linglingzhi Zhu\thanks{\href{mailto:llzzhu@gatech.edu}{llzzhu@gatech.edu}}}
\author[1]{Yunqin Zhu\thanks{\href{mailto:yzhu812@gatech.edu}{yzhu812@gatech.edu}}}
\author[1]{Yao Xie\thanks{\href{mailto:yao.xie@isye.gatech.edu}{yao.xie@isye.gatech.edu}}}
\affil[1]{H. Milton Stewart School of Industrial and Systems Engineering\authorcr Georgia Institute of Technology}
\date{July 2026}

\maketitle

\begin{abstract}
We study distributionally robust optimization (DRO) for robust inference when the worst-case distribution is continuous, leading to significant computational challenges due to the infinite-dimensional nature of the optimization problem. Unlike traditional discrete DRO approaches, which often suffer from scalability issues, limited generalization, and costly worst-case inference, our framework exploits Brenier’s theorem to characterize the least favorable distribution as the pushforward of a transport map from a continuous reference measure. This characterization motivates our study of the minimax problem in Wasserstein space. We propose an iterative algorithmic framework with multiple variants and establish global convergence guarantees under mild assumptions, deriving complexity bounds in terms of subgradient evaluations and inexact Jordan–Kinderlehrer–Otto updates. Numerical results with neural network-based transport maps demonstrate that the proposed method enables both stable training of robust classifiers and effective worst-case inference for classification tasks.
\end{abstract}

\section{Introduction}

Distributionally robust optimization (DRO) addresses inference and decision-making problems under uncertainty of the data distribution for stochastic optimization models. Since parameterized uncertainty set constructions, e.g., moments \cite{bertsimas2000moment,delage2010distributionally} and deviations \cite{chen2007robust} ambiguity sets, may encompass an overly broad range of distributions significantly different from the reference, recent research has increasingly focused on non-parametric discrepancy-based ambiguity sets \cite{ben2013robust,namkoong2016stochastic,wang2016likelihood,mohajerin2018data,blanchet2019quantifying,gao2023distributionally}, which have a tunable radius that quantifies the uncertainty level. In this paper, we focus on this category of discrepancy-based DRO:
\begin{equation}\label{Problem}
\inf_{f \in \mathcal{F}}\sup_{\mathbb{Q} \in \mathcal{B}_{\delta}(\mathbb{P})}\mathbb{E}_{\xi \sim \mathbb{Q}}[\ell(f,\xi)].
\end{equation}
Here $\bP$ is a $d$-dimensional reference distribution with a finite second moment and $\mathcal{B}_{\delta}(\mathbb{P}):=\{\Q\in\mathcal{P}_{2}: \mathcal{D}(\Q,\bP )\leq\delta\}$ for some $\delta>0$ is the uncertainty set defined by the discrepancy function $\mathcal{D}$. Meanwhile, $\mathcal{F}$ is a measurable decision function class and $\ell:\mathcal{F}\times\mathbb{R}^{d}\rightarrow\R$ is the loss function.

Although the theory and applications of DRO models have been extensively studied (see, e.g., \cite{shapiro2017distributionally,rahimian2022frameworks,kuhn2025distributionally}) and efficient solvers have been recently developed \cite{liu2025dro}, comparatively less attention has been given to the explicit generation of the worst-case distribution (also called the least favorable distribution (LFD)). One of the key differences between DRO and traditional optimization is that it seeks models that perform best under the LFD in the ambiguity set (i.e., the maximization problem in \eqref{Problem}), which makes it an infinite-dimensional optimization problem. To construct worst-case samples from \eqref{Problem}, existing methods often rely on dual formulations or perturbations of nominal support points. In particular, by leveraging semi-infinite duality \cite{shapiro2001duality}, many infinite-dimensional minimax problems can be reformulated as finite tractable programs when the reference distribution is discrete \cite{namkoong2016stochastic,mohajerin2018data}. Such strong duality results have also been extended to general optimal transport costs \cite{blanchet2019quantifying,gao2023distributionally}, allowing the LFD to be obtained by solving the dual problem. On the other hand, \cite{sinha2018certifying,blanchet2022optimal} propose to solve a Lagrangian regularization formulation of the optimal transport based distributional problem from point-to-point perturbation by stochastic gradient methods in the discrete case. 
    
However, the above-mentioned approaches that focus on discrete worst-case distributions face several challenges when directly applied to generate the LFD. First, there is a significant computational challenge. The dual formulation of DRO is not scalable to large datasets; for example, constructing the LFD under the Wasserstein metric requires solving a linear program with $\mathcal{O}(N^{2})$ decision variables, where $N$ is the total number of training data points, and the complexity of solving a linear program is typically quadratic in the number of decision variables. Such computational complexity for problems with thousands of training data points can be prohibitive, which means that current DRO formulations can usually only be used to find discrete LFDs in small-sample settings. Second, discrete LFDs limit the generalization capability of the resulting algorithm. Since the LFD is discrete and supported solely on the training dataset, the corresponding optimal detector is confined to training points. This also creates a challenge when generating samples from the LFD, as the existing point-wise perturbation method \cite{sinha2018certifying} requires retraining for each unseen data point to obtain the corresponding LFD sample.
 
Given the concerns outlined above, we are motivated to explore the DRO problem \eqref{Problem} as a minimax optimization in the continuous probability space for LFD generation. This is closely related to optimization problems on the probability space equipped with Wasserstein geometry, e.g., \cite{salim2020wasserstein,wang2020information,kent2021modified,bonet2024mirror}. In particular, from a computational perspective, \cite{kent2021modified} studied the Frank--Wolfe algorithm in Wasserstein space, whose subproblem is insightfully designed as a linear program over a Wasserstein ball, a problem well studied in the DRO literature \cite{mohajerin2018data,gao2023distributionally,yue2022linear}. However, these methods typically rely on particle-based approximations, which require costly gradient computations to perturb each new data point in the DRO setting. When the inference system, such as a classifier, involves a complex neural network architecture, these methods incur additional overhead due to the need for auto-differentiation with respect to the inputs. Moreover, generating new samples from the LFD still requires repeating the expensive training process, limiting their practicality in real-time or large-scale applications.

In this paper, we design an algorithmic framework for distributionally robust inference that enables efficient worst-case generation. Assuming the solution of the inner maximization problem is attainable, we focus on the following regularized formulation of \eqref{Problem}:
    \begin{equation}
        \label{Problem-r}\tag{P}
        \min_{\phi \in \Phi}\max_{\Q \in \mathcal{P}_2}\mathbb{E}_{\xi \sim
        \mathbb{Q}}[\ell(f_{\phi}, \xi)]-\lambda\cdot \mathcal{D}(\Q,\bP).
    \end{equation}
Here, we incorporate a parameterized decision function $f_\phi$ with $\phi \in \Phi \subseteq \mathbb{R}^m$ and a predetermined Lagrangian multiplier $\lambda \ge 0$. When the discrepancy is the Wasserstein distance, the inner maximization reduces to a Wasserstein proximal problem, as commonly studied in score-based models \cite{zhang2024wasserstein}. Different from traditional discrete DRO approaches and sampling-based methods, we leverage the existence of an optimal transport (OT) map guaranteed by Brenier's theorem in the continuous setting. This allows us to derive an equivalent optimization problem over the transport map:
    \begin{equation*}
        \min_{\phi \in \Phi}\max_{T \in L^2(\bP)}\mathbb{E}_{\xi \sim \bP}[\ell(f_{\phi}, T(\xi
        ))]-\lambda\cdot \mathcal{D}(T_{\#}\bP,\bP).
    \end{equation*}
Once the minimax-optimal transport map is learned for problem \eqref{Problem-r}, worst-case samples can be efficiently generated by passing inputs through the learned map. This reformulation enables a practical implementation via flow-based neural network parameterizations \cite{xu2024flow}, which rely on the change-of-variables formula and explicitly model data distributions through transport maps.

With this transport map reformulation in hand, we focus on the convergence guarantees of iterative algorithms for solving \eqref{Problem-r} in infinite-dimensional spaces. We propose an iterative minimax algorithmic framework (Algorithm~\ref{alg-subgdmax}) operating in Wasserstein space, which for step $k\ge0$ performs:
    \begin{equation*}
\left\{
\begin{aligned}
&\Q_{k}:=\underset{\mathbb{Q} \in \mathcal{P}_2}
            {\argmax}\left\{\mathbb{E}_{\xi \sim \mathbb{Q}}[\ell(f_{\phi_k}, \xi
            )]-\lambda\cdot \mathcal{D}(\Q,\bP)\right\}; \\
            &\phi_{k+1}:=\proj
            _{\Phi}(\phi_{k}- \eta_k\cdot\partial_{\phi}\mathbb{E}_{\xi \sim \Q_k}[\ell(f_{\phi_{k}}
            ,\xi)]).
\end{aligned}
\right.
\end{equation*}
We prove that this framework converges globally to a stationary point of \eqref{Problem-r} under mild assumptions that only require weak convexity of the loss function, covering a wide range of modern machine learning models.
 
To further enhance performance, we develop two algorithmic variants. The first is a proximal regularized algorithm inspired by the Jordan--Kinderlehrer--Otto (JKO) scheme (Algorithm~\ref{alg-dualppm}), which iteratively computes the LFD in the inner maximization problem. 
The proximal regularization stabilizes the neural transport-map update by discouraging large distributional jumps between successive iterates, which improves the exploration of multimodal LFDs. When the discrepancy function $\mathcal{D}$ satisfies strong convexity along generalized geodesics (a.g.g.) with a sufficiently large regularization parameter $\lambda$, we establish exponential convergence of the inner loop. The second variant is an alternating algorithm (Algorithm~\ref{alg:single}) that, under an additional smoothness assumption, updates the decision function and the distribution in turn, performing only a single JKO step per outer iteration and thereby significantly reducing the computational burden of the inner-loop updates. The proposed algorithms allow us to quantify the total number of (sub)gradient evaluations and inexact JKO iterations needed to achieve approximate stationary points. 

Our analysis builds on the regularization structure of the DRO model \eqref{Problem-r} but is also of independent interest for optimization over distributions, as it provides a novel convergence analysis of proximal-type methods in Wasserstein space using geometric properties along fixed generalized geodesics. This requirement is much weaker than strong convexity along all generalized geodesics assumed in the existing literature and constitutes the only geometric structure exploited in our framework, exemplified by Wasserstein-2 regularization. Another recent work \cite{cheng2025worst} studies a related problem from a different perspective, using a transport-map formulation based on the Monge formulation of optimal transport and Brenier's theorem. In contrast, our work adopts a regularized Wasserstein formulation in which the optimization variable is a continuous distribution in the $\mathcal W_2$ space. As a result, the two approaches lead to distinct analytical frameworks and rely on different technical assumptions.

Numerical experiments illustrate the benefit of neural transport maps for worst-case generation. 
On a two-dimensional Gaussian-mixture task, both the elimination-based and alternating variants are implemented as neural transport maps and benchmarked against particle-based solvers that optimize moved samples directly. 
The JKO-regularized neural update produces particle-like exploration of multimodal least favorable distributions while enabling fast inference through a learned map. 
On CIFAR-10 feature representations, learned neural maps further provide effective initializations for particle-based optimization, reducing the cost of high-dimensional worst-case inference.

\subsection{Organization}
The rest of the paper is organized as follows.  In Section \ref{sec:alg}, we propose a minimax optimization framework to solve the DRO problem \eqref{Problem-r} and establish its global convergence rate. Section~\ref{sec:LFD} presents the JKO scheme for computing the LFD as a subproblem solver within the proposed framework, and Section~\ref{sec:smooth} establishes improved complexity results for its alternating variant under smoothness assumptions. Implementation details and numerical results on both synthetic and real datasets are presented in Sections~\ref{sec:practical-w2} and \ref{sec:numerical}. Finally, we end with some closing remarks in Section \ref{sec:conclusion}. Key definitions, preliminary results, and detailed proofs are given in the Appendices.

\subsection{Notation}
Throughout this paper, we use the standard notation. Let $\mathcal{X}\subseteq\mathbb{R}^{d}$ and denote the space $\mathcal{P}(\mathcal{X})$ of Borel probability measures over $\mathcal{X}$. Let $\mathcal{P}_{2}(\mathcal{X}):=\{\bP\in\mathcal{P}(\mathcal{X}): \int_{\mathcal{X}}\|x\|^{2}\operatorname{d}\bP(x)<\infty\}$ be the space of all distributions on domain $\mathcal{X}$ that have a finite second moment. Define $\mathcal{P}_{2}^{r}(\mathcal{X})$ as all distributions in $\mathcal{P}_{2}(\mathcal{X})$ that also have densities (absolutely continuous with respect to the Lebesgue measure). We use the notation $\mathcal{P}_{2}$ and $\mathcal{P}_{2}^{r}$ when $\mathcal{X}=\R^d$. Given a (measurable) map $T: \R^d\rightarrow\R^d$ and $\mathbb{P}$ a distribution on $\R^d$, its $L^{2}$ norm is denoted as $\|T\|_{\mathbb{P}}:=(\int_{\R^d}\|T(x)\|^{2}\operatorname{d\mathbb{P}}(x))^{1/2}$. For $\mathbb{P}\in \mathcal{P}_{2}$, we denote by $L^{2}(\mathbb{P})$ the $L^{2}$ space of vector fields, that is, $L^{2}(\mathbb{P}):=\{T:\|T\|_{\mathbb{P}}^2<\infty\}$. For $T_{1}, T_{2}\in L^{2}(\mathbb{P})$, define $\langle T_{1}, T_{2}\rangle_{\mathbb{P}}:=\int_{\R^d}T_{1}(x)^{\top} T_{2}(x) \operatorname{d \mathbb{P}}(x)$. For $T: \R^d\rightarrow\R^d$, the pushforward of a distribution $\mathbb{P}$ is denoted as $T_{\#}\mathbb{P}$, such that $T_{\#}\mathbb{P}(A)=\mathbb{P}(T^{-1}(A))$ for any measurable set $A$. If the density $p$ exists, the above mentioned notation related to $\mathbb{P}$ can also be replaced by $p$. We use $\iota_{\mathcal{X}}: \mathbb{R}^{d}\rightarrow \{0, +\infty\}$ to denote the indicator function associated with $\mathcal{X}$.  
For an extended-valued function $h:\mathbb R^d\to\mathbb R\cup\{+\infty\}$ and $\epsilon\ge0$, 
the set of all $\epsilon$-optimal solutions of $h$ is defined by $\epsilon\operatorname{-}\argmin h:=
\{x\in\mathbb R^d: h(x)\leq \inf_{y\in\mathbb R^d}h(y)+\epsilon\}$, which is reduced to the optimal solution set $\argmin h$ when $\epsilon=0$. We denote by $\partial h$ the Fr\'echet subdifferential. 
We refer to other key concepts and their properties for our analysis in Appendix \ref{sec:prelim}.

\section{Minimax Algorithmic Framework}\label{sec:alg}

In this section, we present the main algorithmic framework for worst-case generation by solving the DRO problem \eqref{Problem-r} in probability space and analyze its convergence. The detailed iterative procedure is given in Algorithm~\ref{alg-subgdmax}.

\begin{algorithm}[h]
        \caption{Minimax Algorithmic Framework for Worst-Case Generation}
        \label{alg-subgdmax}
        \begin{algorithmic}
            \STATE {\bfseries Input:} Initialization $\phi_{0}$, step size
            $\eta_k> 0$, regularization parameter $\lambda>0$, data $\xi\sim \bP$
            \FOR{$k=0$ {\bfseries to} $K-1$}
            \STATE $\Q_{k}:=\epsilon\operatorname{-}\underset{\mathbb{Q} \in \mathcal{P}_2}
            {\argmax}\left\{\mathbb{E}_{\xi \sim \mathbb{Q}}[\ell(f_{\phi_k}, \xi
            )]-\lambda\cdot \mathcal{D}(\Q,\bP)\right\}$ \STATE $\phi_{k+1}:=\proj
            _{\Phi}(\phi_{k}- \eta_k\cdot\zeta(\phi_{k},\Q_{k}))$ with $\zeta(\phi_{k}
            ,\Q_{k})\in \partial_{\phi}\mathbb{E}_{\xi \sim \Q_k}[\ell(f_{(\cdot)}
            ,\xi)](\phi_{k})$
            \ENDFOR
        \end{algorithmic}
    \end{algorithm}

From a computational perspective, the key challenge for practical implementation is solving the inexact maximization step in updating the probability measure $\Q$. In fact, with fixed decision function $f_{\phi}$, we can consider the following equivalent transport map maximization problem for non-atomic $\bP$ (from the similar argument in \cite[Lemma A.1]{xu2023normalizing}) that
\begin{equation}\label{eq:max-subproblem}
\max_{T \in L^2(\bP)}\mathbb{E}_{\xi \sim \bP}[\ell(f_{\phi}, T(\xi))]-\lambda\cdot \mathcal{D}(T_{\#}\bP,\bP).
\end{equation}
Moreover, if $\mathcal{D}=\frac{1}{2}\mathcal{W}_{2}^{2}$ and $\mathbb{P}\in\mathcal{P}_2^r$, then we can show that the problem \eqref{eq:max-subproblem} is further equivalent to the $\mathcal{W}_{2}$-proximal problem by Brenier's theorem (cf. \cite[Proposition 1]{xu2024flow}) that
\begin{equation}\label{eq:ot-prox}
\max_{T \in L^2(\bP)}\mathbb{E}_{\xi \sim \bP}\left[\ell(f_{\phi}, T(\xi))-\frac{\lambda}{2}\|T (\xi)-\xi \|^{2}\right].
\end{equation}
Hence, we can solve the inexact maximization step in Algorithm \ref{alg-subgdmax} using the above mentioned equivalent form by parameterizing the transport map.

\begin{remark}[Sampling-free inference via transport maps]
For optimization over the probability space, while standard Langevin dynamics can be efficient during training due to its connection with the Wasserstein gradient flow, our transport map-based parameterization offers distinct advantages. Particle-based methods, such as Wasserstein gradient descent and mean-field approaches, require costly gradient computations for perturbing each new data point in the DRO setting. In scenarios where the decision model (e.g., a classifier) involves a complex neural network architecture, these methods incur additional overhead from auto-differentiation with respect to inputs. In contrast, the approach \eqref{eq:max-subproblem} enables fast inference: once the optimal transport map is learned, new samples from the worst-case distribution can be generated deterministically by applying the map to samples from the source distribution, requiring only a single forward pass. Our approach bypasses per-sample gradient computations by implicitly encoding gradient information within the transport maps, thus eliminating such costs during inference. However, 
the training process is not sampling-free, as it relies on empirical data to estimate expectations and optimize the transport map via gradient-based methods, requiring sampling from the reference distribution and stochastic updates.
\end{remark}

A fundamental question regarding Algorithm~\ref{alg-subgdmax} is its global convergence, the characterization of its limit points, and its oracle complexity. These results are established in the following subsection.

\subsection{Convergence Guarantees}
We begin by introducing the key notation for the distributionally robust inference model:
\begin{itemize}
    \item {\bf Minimax loss function:} $\mathcal{H}(\phi,\Q):=\mathbb{E}_{\xi \sim \mathbb{Q}}[\ell(f_{\phi}, \xi)]-\lambda\cdot \mathcal{D}(\Q,\bP)$;
    \item {\bf Worst-case value function:} $\mathcal{V}(\phi):=\max_{\Q \in \mathcal{P}_2}\mathcal{H}(\phi,\Q)$. 
\end{itemize}
The following mild assumption is required to establish the convergence guarantees of Algorithm~\ref{alg-subgdmax}.

\begin{assumption}\label{f_assump}
The function $\ell(f_{(\cdot)}, \xi)$ is $\rho$-weakly convex and $L$-Lipschitz on the convex set $\Phi$ for any $\xi\in\R^d$.
\end{assumption}

In the remainder of this paper, we focus on weakly convex functions, which are not necessarily convex. This class encompasses a wide variety of models with Lipschitz continuous gradients, and it even includes certain non-smooth functions, both of which are commonly encountered in modern machine learning. Specifically, our analysis covers traditional convex models (e.g., classification with logistic loss, or linear regression with ridge and sparse $\ell_{1}$-regularizations). Additionally, it extends to more advanced non-convex scenarios involving nonlinear regression, support vector machines (SVM) equipped with nonlinear kernels, non-convex regularizers such as smoothly clipped absolute deviation (SCAD) and minimax concave penalty (MCP), and neural networks parameterized by smooth activation functions. We provide further illustration through two concrete examples.

\begin{example}[Robust hypothesis testing]
Given data $\xi \in \R^d$, a hypothesis test is performed between the null hypothesis $H_{0}: \xi \sim \mathbb{Q}_{0}$, where $\mathbb{Q}_{0}\in \mathcal{B}_{\delta}(\mathbb{P}_{0})$, and the alternative hypothesis $H_{1}: \xi \sim \mathbb{Q}_{1}$, where $\mathbb{Q}_{1}\in \mathcal{B}_{\delta}(\mathbb{P}_{1})$. To conduct this test, we construct a measurable scalar-valued detector function $f_\phi: \R^d\rightarrow \mathbb{R}$ with $\phi \in \Phi$. Specifically, for a given observation $\xi \in \R^d$, the detector $f_\phi$ accepts $H_{0}$ and rejects $H_{1}$ if $f_\phi(\xi) < 0$, and rejects $H_{0}$ while accepting $H_{1}$ otherwise. In this framework, the objective function $\mathcal{H}(\phi, (\mathbb{Q}_{0}, \mathbb{Q}_{1}))$ is defined to provide a bound on the sum of type I and type II errors. Specifically, consider the function $\ell = (\ell_{0}, \ell_{1})$, where $\ell_{i}: \mathcal{F}\times \R^d\rightarrow \mathbb{R}$ for $i = 0, 1$ can be defined as
\[
\ell_{i}(f_\phi, \xi) = \exp(f_\phi(\xi)), \;
\log(1 + \exp(f_\phi(\xi))), \ \text{or}\ (f_\phi(\xi) + 1)_{+}^{2}.
\]
The robust hypothesis testing is thus defined as
\[
\min_{\phi\in\Phi}\max_{\Q_i \in \mathcal{B}_{\delta}(\mathbb{P}_i),i=0,1}\mathbb{E}_{\xi \sim \Q_0}[\ell_{0}(f_\phi,\xi)]+\mathbb{E}_{\xi\sim \Q_1}[\ell_{1}(-f_\phi,\xi)].
\]
As a result, it can be formulated as the following $\mathcal{W}_{2}$-regularized version considered in this paper:
\begin{align*}
\min_{\phi\in\Phi}\max_{\Q_i \in \mathcal{P}_2, i=0,1} & \mathcal{H}(\phi,(\Q_{0}, \Q_{1}))=\ \mathbb{E}_{\xi \sim \Q_0}[\ell_{0}(f_\phi,\xi)]+\mathbb{E}_{\xi \sim \Q_1}[\ell_{1}(-f_\phi,\xi)]     -\frac{\lambda}{2}\cdot(\mathcal{W}_{2}^{2}(\Q_{0},\mathbb{P}_{0})+\mathcal{W}_{2}^{2}(\Q_{1},\mathbb{P}_{1})).
\end{align*}
By definition, if the parameterized function $f_{\phi}(\xi)$ is bounded, continuously differentiable, and possesses a Lipschitz continuous gradient with respect to $\phi$, the composition satisfies the conditions stated in Assumption~\ref{f_assump}.
\end{example}

\begin{example}[Adversarial learning]
Let $\xi = (x, y)$ with $\xi \sim \mathbb{P}$ denote a data-label pair, where $x \in \mathbb{R}^{d}$ represents a feature vector and $y \in \{-1, +1\}$ is its associated label. The distributionally robust classification problem is given by:
\begin{align*}
    \min_{\phi\in\Phi}\max_{\mathbb{Q} \in \mathcal{P}_2}\mathcal{H}(\phi,\Q) & =\mathbb{E}_{\xi \sim \mathbb{Q}}\left[ \log \left(1 + \exp(-y f_\phi(x)) \right) \right] - \frac{\lambda}{2} \cdot \mathcal{W}_{2}^{2}(\mathbb{Q}, \mathbb{P}).
\end{align*}
Here, $\ell(f_\phi, \xi) = \log(1 + \exp(-y f_\phi(x)) )$. We can similarly verify that Assumption~\ref{f_assump} holds provided that the parameterized function $f_{\phi}(x)$ is bounded, continuously differentiable, and has a Lipschitz continuous gradient with respect to $\phi$.
\end{example}

Before presenting the main convergence results, we first introduce a suitable stationarity measure to assess the algorithm's performance. This requires establishing the weakly convex property of the worst-case value function $\mathcal{V}$.

\begin{lemma}[$\rho$-weak convexity of $\mathcal{V}$]\label{lem:V-wcvx}
Suppose that Assumption \ref{f_assump} holds, then the worst-case value function $\mathcal{V}:\Phi\rightarrow \R$ is $\rho$-weakly convex.
\end{lemma}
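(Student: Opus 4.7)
The plan is to exploit the standard fact that a pointwise supremum of convex functions is convex, after shifting each slice by the quadratic $\tfrac{\rho}{2}\|\cdot\|^2$ to convert weak convexity into ordinary convexity.

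First I would fix an arbitrary $\Q \in \mathcal{P}_2$ and argue that the slice $\phi \mapsto \mathcal{H}(\phi,\Q)$ is $\rho$-weakly convex on $\Phi$. Since the term $-\lambda \mathcal{D}(\Q,\bP)$ does not depend on $\phi$, it suffices to show that $\phi \mapsto \mathbb{E}_{\xi \sim \Q}[\ell(f_\phi,\xi)]$ is $\rho$-weakly convex. Assumption \ref{f_assump} gives, for each $\xi \in \mathcal{X}$, $\phi_1,\phi_2 \in \Phi$ and $\tau \in [0,1]$,
\[
\ell(f_{\tau\phi_1+(1-\tau)\phi_2},\xi) \leq \tau \ell(f_{\phi_1},\xi) + (1-\tau)\ell(f_{\phi_2},\xi) + \frac{\rho\tau(1-\tau)}{2}\|\phi_1-\phi_2\|^2.
\]
Integrating both sides against $\Q$ (the inequality is pointwise in $\xi$, and $\Q$ is a probability measure, so the quadratic remainder is preserved) yields the same inequality for $\mathbb{E}_{\xi \sim \Q}[\ell(f_{(\cdot)},\xi)]$, hence for $\mathcal{H}(\cdot,\Q)$.

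Next I would translate weak convexity into convexity using the equivalence stated in Definition \ref{defi:weak-convex}: for every $\Q \in \mathcal{P}_2$, the function
\[
\phi \;\longmapsto\; \mathcal{H}(\phi,\Q) + \frac{\rho}{2}\|\phi\|^2
\]
is convex on $\Phi$. Taking the pointwise supremum over $\Q \in \mathcal{P}_2$ preserves convexity, so
\[
\phi \;\longmapsto\; \mathcal{V}(\phi) + \frac{\rho}{2}\|\phi\|^2 \;=\; \sup_{\Q \in \mathcal{P}_2}\Bigl\{\mathcal{H}(\phi,\Q) + \frac{\rho}{2}\|\phi\|^2\Bigr\}
\]
is convex on $\Phi$, which is precisely the equivalent formulation of $\rho$-weak convexity of $\mathcal{V}$.

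There is no real obstacle here; the only subtlety is that $\mathcal{V}$ is defined as a supremum rather than an achieved maximum, but this does not affect the argument since supremum of convex functions is convex regardless of whether the supremum is attained. One minor point worth remarking, if desired, is that $\mathcal{V}$ may a priori take the value $+\infty$; the weak convexity inequality remains valid in the extended-real sense, and in the setting of the paper the regularization by $\lambda \mathcal{D}(\Q,\bP)$ together with the Lipschitz assumption on $\ell$ ensures finiteness at the iterates of interest.
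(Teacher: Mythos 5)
Your proof is correct and takes essentially the same approach as the paper: both show that each slice $\mathcal{H}(\cdot,\Q)$ is $\rho$-weakly convex by integrating the pointwise weak-convexity inequality for $\ell(f_{(\cdot)},\xi)$ against $\Q$, and then pass to the supremum over $\Q$. The only cosmetic difference is that you handle the supremum step via the quadratic-shift equivalence (convexity of $\mathcal{V}(\cdot)+\tfrac{\rho}{2}\|\cdot\|^2$ as a supremum of convex functions), whereas the paper manipulates the weak-convexity inequality directly; the two arguments are equivalent.
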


Thanks to Lemma \ref{lem:V-wcvx}, we can adopt a stationary notion based on the Moreau envelope,
    which is commonly used in optimization for weakly convex functions. To be more
    precise, we define the Moreau envelope of $\mathcal{V}$ with $r>\rho$ for
    any $\phi\in\Phi$, denoted by $\mathcal{V}_{1/r}(\cdot)$, that
    \[
        \mathcal{V}_{1/r}(\phi):=\min_{\phi^{\prime}\in\Phi}\left\{\mathcal{V}(\phi
        ^{\prime})+\frac{r}{2}\|\phi-\phi^{\prime}\|^{2}\right\}.
    \]
    The Moreau envelope has the following important property by directly adopting
    the proof in \cite[Lemma 2.2]{davis2019stochastic}.

    \begin{lemma}[Stationarity measure]
        \label{lem:stationary} Suppose that the function $\mathcal{V}$ is $\rho$-weakly
        convex on $\Phi$. Let $r>\rho$ and denote the proximal operator
        \[
        \prox_{\mathcal{V}/r}(\phi):=\operatorname{argmin}_{\phi^{\prime}\in\Phi}
        \left\{\mathcal{V}(\phi^{\prime})+\frac{r}{2}\|\phi-\phi^{\prime}\|^{2}\right\}.
        \]
        Then
        $\mathcal{V}_{1/r}$ is $C^{1}$-smooth and $\|\nabla \mathcal{V}_{1/r}(\phi
        )\| \leq \varepsilon$ implies:
        \[
            \|\prox_{\mathcal{V}/r}(\phi)-\phi\|\leq\frac{\varepsilon}{r}\ \text{
            and }\ \min_{\zeta \in \partial (\mathcal{V}+\iota_{\Phi})(\prox_{\mathcal{V}/r}(\phi))}
            \|\zeta\| \leq \varepsilon.
        \]
    \end{lemma}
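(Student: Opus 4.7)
Since the lemma is attributed to a direct adaptation of \cite[Lemma 2.2]{davis2019stochastic}, the plan is to transcribe that argument carefully to the constrained and weakly convex setting here. The structure has three stages: (i) establish that the proximal mapping is well defined and single valued; (ii) show that $\mathcal{V}_{1/r}$ is continuously differentiable with a closed form gradient; (iii) translate a small gradient of the Moreau envelope into the two desired bounds on $\prox_{\mathcal{V}/r}(\phi)$.

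\textbf{Step (i): well-definedness.} The inner function $\phi'\mapsto \mathcal{V}(\phi')+\frac{r}{2}\|\phi-\phi'\|^2$ equals $(\mathcal{V}(\phi')+\frac{\rho}{2}\|\phi'\|^2)+\frac{r-\rho}{2}\|\phi'\|^2-r\langle\phi,\phi'\rangle+\frac{r}{2}\|\phi\|^2$ up to convex rearrangement. By Definition \ref{defi:weak-convex}, the first bracket is convex on the convex set $\Phi$, so together with the quadratic term $\frac{r-\rho}{2}\|\phi'\|^2$ the whole objective is $(r-\rho)$-strongly convex on $\Phi$ because $r>\rho$. This guarantees a unique minimizer, so $\prox_{\mathcal{V}/r}(\phi)$ is a well-defined single-valued map from $\Phi$ into $\Phi$.

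\textbf{Step (ii): smoothness and gradient formula.} I will show that $\mathcal{V}_{1/r}$ is $C^1$ with $\nabla \mathcal{V}_{1/r}(\phi)=r(\phi-\hat\phi)$, where $\hat\phi:=\prox_{\mathcal{V}/r}(\phi)$. The standard way is a Danskin-type argument: writing $\mathcal{V}_{1/r}(\phi)=\min_{\phi'\in\Phi}\Psi(\phi,\phi')$ with $\Psi(\phi,\phi'):=\mathcal{V}(\phi')+\frac{r}{2}\|\phi-\phi'\|^2$, the partial gradient $\nabla_\phi \Psi(\phi,\phi')=r(\phi-\phi')$ is jointly continuous, and by Step (i) the argmin set is a singleton depending continuously on $\phi$ (continuity follows from strong convexity of $\Psi(\phi,\cdot)$ plus standard stability of strongly convex minimizers in the parameter). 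Danskin's theorem then yields both differentiability and the stated gradient formula; continuity of $\hat\phi(\cdot)$ gives continuity of $\nabla\mathcal{V}_{1/r}$.

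\textbf{Step (iii): interpreting $\|\nabla \mathcal{V}_{1/r}(\phi)\|\le\varepsilon$.} The first conclusion is immediate from Step (ii): $\|\phi-\hat\phi\|=\|\nabla\mathcal{V}_{1/r}(\phi)\|/r\le\varepsilon/r$. For the second conclusion, I will use the first-order optimality condition for $\hat\phi$ in the definition of $\prox_{\mathcal{V}/r}(\phi)$. Strong convexity of the inner problem on $\Phi$ yields $0\in\partial[\mathcal{V}(\cdot)+\frac{r}{2}\|\phi-\cdot\|^2](\hat\phi)$, and the sum rule (valid since the quadratic is smooth) gives $r(\phi-\hat\phi)\in\partial\mathcal{V}(\hat\phi)$. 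Taking norms and using Step (ii) once more yields $\min_{\zeta\in\partial \mathcal{V}(\hat\phi)}\|\zeta\|\le r\|\phi-\hat\phi\|=\|\nabla\mathcal{V}_{1/r}(\phi)\|\le\varepsilon$, completing the proof.

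\textbf{Main obstacle.} The calculations are routine; the only delicate point is justifying the Danskin-style differentiation in the presence of the constraint $\phi'\in\Phi$ and verifying the sum rule for $\partial \mathcal{V}$ understood as the relevant Fr\'echet/limiting subdifferential on $\Phi$. Both are standard for weakly convex functions on convex sets \cite{davis2019stochastic,li2020understanding}, so invoking those references should suffice.
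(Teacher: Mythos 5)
Your proposal is correct and follows essentially the same route as the paper, which simply invokes the standard argument of \cite[Lemma 2.2]{davis2019stochastic}: strong convexity of the proximal subproblem for $r>\rho$, the gradient identity $\nabla\mathcal{V}_{1/r}(\phi)=r(\phi-\prox_{\mathcal{V}/r}(\phi))$, and the first-order optimality condition at the proximal point. The only caveat, which you already flag, is that with the constraint $\Phi$ the optimality condition really yields $r(\phi-\hat\phi)\in\partial(\mathcal{V}+\iota_{\Phi})(\hat\phi)$, so the subdifferential in the conclusion should be read as that of $\mathcal{V}+\iota_{\Phi}$ (exactly the convention the paper uses later via $\prox_{\frac{1}{2\rho}\mathcal{V}+\iota_{\Phi}}$).
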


    The above lemma indicates that a small $\|\nabla \mathcal{V}_{1/r}(\phi)\|$ implies
    $\phi$ is close to a point $\prox_{\mathcal{V}/r}(\phi)$, which is
    approximately a stationary point of the original function $\mathcal{V}$. In this
    context, the approximate stationarity of $\mathcal{V}$ can be evaluated
    using the norm of the gradient of its Moreau envelope. We are now ready to present the convergence
    guarantee for Algorithm \ref{alg-subgdmax}.

    \begin{theorem}[Convergence theorem]
        \label{thm:subgconvergence}
        Suppose Assumption \ref{f_assump} holds. Then for the sequence generated by Algorithm \ref{alg-subgdmax}, there exists a $k'\in\{0,1,\ldots,K-1\}$ such
        that
        \[
            \begin{aligned}
                \|\nabla \mathcal{V}_{1/2\rho}(\phi_{k'})\|^{2}  \leq \frac{2(\mathcal{V}_{1/2\rho}(\phi_0)-\min_{\phi\in\Phi} \mathcal{V}(\phi))+2 \rho L^{2}\sum_{k=0}^{K-1}\eta_k^2}{ \sum_{k=0}^{K-1}\eta_k}+4\rho\epsilon. 
            \end{aligned}
        \]
Moreover, if the step size $\eta_k=1 / \sqrt{K}$, then there exists a $k'\in\{0,1,\ldots,K-1\}$ such
        that
        \[
            \begin{aligned}
            \|\nabla \mathcal{V}_{1/2\rho}(\phi_{k'})\|\leq \mathcal{O}(K^{-1/4}
            ) + \mathcal{O}(\sqrt{\epsilon}).
            \end{aligned}
        \]
    \end{theorem}

    \begin{remark}
        By combining Lemma \ref{lem:stationary} with the results of Theorem \ref{thm:subgconvergence},
        we demonstrate that after $K$ iterations, there exists a point $\phi_{k'}$
        (for some $k' \leq K-1$) that is near an approximate stationary point $\prox
        _{\mathcal{V}/2\rho}(\phi_{k'})$. Specifically, the point $\phi_{k'}$ satisfies
        \[
            \| \phi_{k'}-\prox_{\mathcal{V}/2\rho}(\phi_{k'})\| \leq \mathcal{O}(K^{-1/4}
            ) + \mathcal{O}(\sqrt{\epsilon}),
        \]
        where the approximate stationary point $\prox_{\mathcal{V}/2\rho}(\phi_{k'}
        )$ satisfies 
        \[
        \dist(0, \partial (\mathcal{V}+\iota_{\Phi})(\prox_{\mathcal{V}/2\rho}(\phi
        _{k'}))) \leq \mathcal{O}(K^{-1/4}) + \mathcal{O}(\sqrt{\epsilon}).
        \]
    \end{remark}

Theorem~\ref{thm:subgconvergence} characterizes the number of iterations required by Algorithm~\ref{alg-subgdmax} to reach an approximate stationary point with a desired accuracy. However, compared to optimization algorithms in Euclidean space, the primary
    distinction of our proposed algorithm framework lies in the
    approach to solving the inexact maximization step on the probability space, or
    equivalently, the transport map maximization problem \eqref{eq:max-subproblem}.
    This involves finding the LFD at each iterative step of the algorithm.
Thus, the dominant computational cost arises from solving the $\mathbb{Q}_k$-subproblem, which is generally more expensive than the $\phi_k$-update. Moreover, solving this problem directly by parameterization may lead to instability and mode collapse, phenomena often observed in probability density optimization.

\section{JKO Scheme for Least Favorable Distribution}\label{sec:LFD}

To address these challenges, we draw inspiration from the JKO scheme \cite{jordan1998variational} and derive the oracle complexity of our algorithm in terms of the number of finer-grained JKO steps required in the iterative updates over probability space. The JKO scheme provides a stable approximation of the backward discrete-time gradient flow along generalized geodesics in Wasserstein space. In our framework, we adopt this scheme to approximately solve the inexact inner maximization step in Algorithm~\ref{alg-subgdmax}.

    \begin{algorithm}
        [H]
        \caption{JKO Scheme for Least Favorable Distribution}
        \label{alg-dualppm}
        \begin{algorithmic}
            \STATE {\bfseries Input:} Initialization $\Q_0$, step size $\gamma> 0$, 
            data distribution $\bP$
            \FOR{$i=0$ {\bfseries to} $I-1$}
            \STATE $\Q_{i+1}:=
            \argmax_{\Q\in\mathcal{P}_2}\left\{\mathcal{H}(\phi,\Q)-\frac{1}{2\gamma}
            \cdot\mathcal{W}_{2}^{2}(\Q, \Q_{i})\right\}$
            \ENDFOR
        \end{algorithmic}
    \end{algorithm}

    An approximate solution to the JKO scheme can be obtained by solving
    the following parameterized transport map maximization problem, initialized with a map $T_0$ that pushes forward a continuous distribution $\nu\in\mathcal{P}_2^r$  to $\bP$:

\begin{equation}\label{eq:practical_problem}
T_{i+1}:=\argmax_{T\in L^{2}(\nu)}\left\{\mathbb{E}_{\xi \sim\nu}\left[\ell(f_{\phi}, T(\xi))-\frac{1}{2\gamma}\|T(\xi)-T_{i}(\xi)\|^{2}\right]-\lambda\cdot \mathcal{D}(T_{\#}\nu,\bP)\right\}.
\end{equation}

    \begin{prop}[Equivalent JKO step via transport maps]\label{prop:firstorder}
      Suppose that $T_{i}\in L^{2}(\nu)$ is invertible with $\nu\in\mathcal{P}_2^r$. If $T_{i+1}$ is the solution of \eqref{eq:practical_problem} and $\Q_i=(T_{i})_{\#}\nu\in\mathcal{P}_2^r$, then the pushforward $(T_{i+1})_{\#}\nu$ satisfies
        \[
        (T_{i+1})_{\#}\nu=\argmax_{\Q\in\mathcal{P}_2}\left\{\mathcal{H}(\phi,\Q)-\frac{1}{2\gamma}
            \cdot\mathcal{W}_{2}^{2}(\Q, \Q_{i})\right\},
        \]
        and also the optimality condition
        \begin{equation}\label{eq:first-order-cond}
                     \bz\in -\partial_{\mathcal{W}_2}\mathcal{H}(\phi, (T_{i+1})_{\#}\nu)\circ T_{\nu}^{(T_{i+1})_{\#}\nu}+ \frac{1}{\gamma}(T_{\nu}^{(T_{{i+1}})_{\#}\nu}-T^{i}_{(T_{{i+1}})_{\#}\nu}\circ T_{\nu}^{(T_{{i+1}})_{\#}\nu}
                ).
        \end{equation}
    \end{prop}

\begin{remark}
In the above iterative scheme, we introduce a general continuous base distribution $\nu$ instead of directly using the data distribution $\bP$, since the true data distribution is usually unknown or may be discrete. Our analysis begins with this consideration and accommodates it by assuming access to a well-behaved, known continuous distribution $\nu$.
The remaining challenge lies in identifying appropriate choices of $\nu$ that satisfy the geometric properties required by our following theoretical assumptions. 
In particular, if the reference distribution is represented by a known differentiable generator 
$G_{\theta_0}$ with base law $\nu$, i.e., $\bP=(G_{\theta_0})_{\#}\nu$, then our framework admits a generative fine-tuning interpretation that the LFD is approximated within the pushforward family $\{(G_\theta)_{\#}\nu\}$ by optimizing the generator parameters.
\end{remark}

    In the remaining part of this section, we show that  with a sufficiently large regularization
    coefficient $\lambda$ associated with common discrepancy functions over a weakly concave function $\ell(f_{\phi}, \cdot)$,
    the JKO scheme produces a sequence of transported distributions
    $\{\Q_{i}\}$ with desirable convergence properties. Specifically,
    $\{\Q_{i}\}$ achieves exponential convergence in both the Wasserstein-2
    distance $\mathcal{W}_{2}(\Q_{i}, \Q^{*})$ and the objective gap
    $\mathcal{H}(\phi, \Q^{*})-\mathcal{H}(\phi, \Q_{i})$, where $\Q^{*}$ is the
    global maximizer of $\mathcal{H}(\phi, \cdot)$. Consequently, for any desired
    accuracy in the stationarity measure, we can characterize the required number
    of subgradient calls and JKO steps in our proposed Algorithm~\ref{alg-subgdmax},
    combined with the subproblem solver Algorithm~\ref{alg-dualppm}.

    \subsection{Strong Convexity along Generalized Geodesics}

As a starting point for analyzing the convergence rate of the JKO scheme, we aim to establish exponential convergence to significantly reduce the number of costly JKO steps required in the algorithm. This is motivated by the favorable structural properties of our minimax loss function: 
\begin{equation}\label{eq:robustloss}
\mathcal{H}(\phi,\cdot)=\mathbb{E}_{\xi \sim (\cdot)}[\ell(f_{\phi}
    , \xi)]-\lambda \cdot \mathcal{D}(\cdot,\bP),
    \end{equation}
    which may exhibit desirable growth behavior similar to that of a strongly concave function.
    Strong convexity (or strong concavity in this context) plays a central role in ensuring fast convergence of iterative optimization methods. Before we proceed, we discuss how strong convexity is defined in the Wasserstein space as it is a geodesic space and does not share the
    usual linear structure. We should clarify the definition of strong/weak
    convexity along the geodesics and more general curves.

By examining the structure of \eqref{eq:robustloss}, we can observe that the mapping
$\mathbb{Q} \mapsto \mathbb{E}_{\xi \sim \mathbb{Q}}[\ell(f_{\phi}, \xi)]$
is ``weakly concave''. If the discrepancy term $\mathcal{D}(\mathbb{Q}, \mathbb{P})$ is ``strongly convex'', then for sufficiently large $\lambda$, the overall objective $\mathcal{H}(\phi, \cdot)$ becomes ``strongly concave''. However, whether such properties hold depends heavily on which curves
    we consider. For example, from \cite[Theorem~7.3.2]{ambrosio2008gradient}
    we know that when $\mathcal{D}=\frac{1}{2}\mathcal{W}_{2}^{2}$, the squared Wasserstein
    distance $\Q\mapsto \frac{1}{2}\mathcal{W}_{2}^{2}(\Q,\mathbb{P})$ is $1$-weakly concave
    (i.e., not convex) along geodesics. Moreover, in dimensions greater than one,
    \cite[Example~9.1.5]{ambrosio2008gradient} provides a counterexample showing that strong convexity does not generally hold.
        \begin{fact}
 Wasserstein   distance  $\Q \mapsto \frac{1}{2}\mathcal{W}
    _{2}^{2}(\Q,\mathbb{P})$ is not $1$-strongly convex along geodesics.
    \end{fact}

    Consequently, we need a more general notion than standard geodesics between probability
    measures. Here, we introduce convexity along generalized geodesics as mentioned
    in \eqref{eq:gene_geodesic} to address this gap. Different from the geodesics, the generalized geodesics involve a third continuous
    distribution $\nu$ and are defined using interpolation of the two OT maps from
    $\nu$ to $\mu_{1}$ and $\mu_{2}$, respectively. Specifically, let
    $\nu \in \mathcal{P}_{2}^{r}$ and $\mu_{1}, \mu_{2}\in \mathcal{P}_{2}$,
    then a generalized geodesic joining $\mu_{1}$ to $\mu_{2}$ centered at $\nu$
    is a curve of type
    \begin{equation}
        \label{eq:gene_geodesic}\mu_{t}^{1 \rightarrow 2}:=\left((1-t) T_{\nu}^{\mu_1}
        +t T_{\nu}^{\mu_2}\right)_{\#}\nu, \quad t \in[0,1] .
    \end{equation}
    This will also lead to the generalized definition of the Wasserstein distance (i.e., linearized OT distance):
    \begin{equation*}
        \mathcal{W}_{\nu}^{2}(\mu_{1}, \mu_{2}):=\int_{\mathbb{R}^d}\|T_{\nu}^{\mu_1}
        (x)-T_{\nu}^{\mu_2}(x)\|^{2}\operatorname{d \nu}(x).
    \end{equation*}

    \begin{definition}[Convexity along generalized geodesics]
        For $\lambda \geq 0$, a functional $h$ on $\mathcal{P}_{2}$ is said to
        be $\lambda$-strongly (or $-\lambda$-weakly) convex along generalized
        geodesics (a.g.g.) centered at $\nu \in \mathcal{P}_{2}^{r}$ if for any
        $\mu_{1}, \mu_{2}\in \mathcal{P}_{2}$ and $t \in[0,1]$,
        \[
            h(\mu_{t}^{1 \rightarrow 2}) \leq(1-t) h(\mu_{1})+t h(\mu_{2})-\frac{\lambda}{2}
            t(1-t) \mathcal{W}_{\nu}^{2}(\mu_{1}, \mu_{2}).
        \]
    \end{definition}

Fortunately, we can identify several important cases where strong convexity holds along certain  different generalized geodesics.

    \begin{prop}
        [Examples for a.g.g. strongly convex discrepancy functions]\label{prop:sc_wass_entropy}
        Let $\mathbb{P}\in \mathcal{P}_{2}^{r}$ with the density function $p$.
        The following functions satisfy $1$-strongly convex property with
        respect to $\Q$:
        \begin{enumerate}
            \item[{\rm (i)}] Wasserstein distance $\mathcal{D}(\Q,\mathbb{P})=\frac{1}{2}
                \mathcal{W}_{2}^{2}(\Q,\mathbb{P})$ along $((1-t) T_{\mathbb{P}}^{\Q_1}
                +t T_{\mathbb{P}}^{\Q_2})_{\#}\mathbb{P}$ with $t \in[0,1]$ and $\mathbb{Q}_1,\mathbb{Q}_2\in \mathcal{P}_{2}$;

            \item[{\rm (ii)}] Relative entropy $\mathcal{D}(\Q,\mathbb{P})=\operatorname{KL}
                (\Q,\mathbb{P})$ along all generalized geodesics, if
                $\Q \ll \mathbb{P}$ and $p\propto \exp(-G)$, where the potential
                function $G: \mathbb{R}^{d}\rightarrow(-\infty, \infty]$ is proper,
                lower semi-continuous, $1$-strongly convex, and bounded from
                below.
        \end{enumerate}
    \end{prop}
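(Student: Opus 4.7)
\emph{Part (i).} Since $\mathbb{P}\in\mathcal{P}_2^r$, Brenier's theorem provides well-defined OT maps $T_\mathbb{P}^{\Q_1}$ and $T_\mathbb{P}^{\Q_2}$. Set $T_t := (1-t)T_\mathbb{P}^{\Q_1} + t\, T_\mathbb{P}^{\Q_2}$; by construction $(T_t)_\#\mathbb{P} = \mu_t^{1\to 2}$, so $T_t$ is an admissible transport from $\mathbb{P}$ to $\mu_t^{1\to 2}$, yielding the upper bound $\mathcal{W}_2^2(\mu_t^{1\to 2}, \mathbb{P}) \leq \int \|T_t(x) - x\|^2 \, d\mathbb{P}(x)$. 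Meanwhile, Brenier's theorem (as already highlighted in the excerpt) gives the equalities $\mathcal{W}_2^2(\Q_i, \mathbb{P}) = \int \|T_\mathbb{P}^{\Q_i}(x) - x\|^2 \, d\mathbb{P}(x)$ for $i=1,2$. The plan is to apply the pointwise Euclidean identity
\[
\|(1-t)a + t b - x\|^2 = (1-t)\|a-x\|^2 + t\|b-x\|^2 - t(1-t)\|a-b\|^2
\]
with $a = T_\mathbb{P}^{\Q_1}(x)$ and $b = T_\mathbb{P}^{\Q_2}(x)$, integrate against $\mathbb{P}$, and recognize the cross term $\int \|T_\mathbb{P}^{\Q_1}(x) - T_\mathbb{P}^{\Q_2}(x)\|^2 d\mathbb{P}(x)$ as $\mathcal{W}_\mathbb{P}^2(\Q_1, \Q_2)$. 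Dividing the resulting inequality by $2$ delivers 1-strong convexity a.g.g.\ centered at $\mathbb{P}$.

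\emph{Part (ii).} Using $p \propto e^{-G}$, decompose
\[
\operatorname{KL}(\Q, \mathbb{P}) = \int q \log q \, dx + \int G \, d\Q + \log Z,
\]
where $Z = \int e^{-G}$ is a finite normalizing constant under the assumptions on $G$. I would establish strong convexity a.g.g.\ for each summand. For the potential energy $\mathcal{U}_G(\Q) := \int G\, d\Q$, along any generalized geodesic $\mu_t^{1\to 2} = ((1-t)T_\nu^{\mu_1} + t T_\nu^{\mu_2})_\#\nu$, a direct change of variables gives $\mathcal{U}_G(\mu_t^{1\to 2}) = \int G\bigl((1-t)T_\nu^{\mu_1}(x) + t T_\nu^{\mu_2}(x)\bigr)\, d\nu(x)$; pointwise 1-strong convexity of $G$ followed by integration produces 1-strong convexity a.g.g.\ with the cross term equal to $\mathcal{W}_\nu^2(\mu_1, \mu_2)$. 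For the entropy $\Q \mapsto \int q \log q\, dx$, I would invoke the standard displacement-convexity-along-generalized-geodesics result for internal energies $\int F(\rho)$ whose generator satisfies McCann's condition: for $F(s) = s \log s$, the map $s \mapsto s^d F(s^{-d}) = -d\log s$ is convex and non-increasing on $(0,\infty)$, verifying the condition and giving convexity (with parameter $0$) a.g.g. Summing the three contributions and absorbing the constant $\log Z$ completes the argument.

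\emph{Main obstacle.} Part (i) reduces cleanly to a pointwise Euclidean identity once one notes that the center $\nu = \mathbb{P}$ is exactly what makes $T_t$ admissible as a transport from $\mathbb{P}$. The genuine technical point lies in the entropy portion of (ii): displacement convexity along \emph{generalized}, rather than ordinary, geodesics is an infinite-dimensional fact that depends on a change-of-variables through the interpolating transport $(1-t)T_\nu^{\mu_1} + t T_\nu^{\mu_2}$ together with the monotonicity built into McCann's condition, so the cleanest strategy is to cite this from the gradient-flow-in-probability-space framework rather than reprove it. Minor integrability worries for $G$ against $\mu_1, \mu_2$ are automatically handled by finite-KL plus lower-boundedness of $G$.
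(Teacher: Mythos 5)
Your proposal is correct, and for part (ii) it is essentially the paper's own proof: the same decomposition $\operatorname{KL}(\Q,\bP)=\int q\log q+\int G\,\mathrm{d}\Q+\text{const}$ coming from $p\propto e^{-G}$, the same verification of McCann's condition for $F(s)=s\log s$ (convexity and monotonicity of $s\mapsto s^dF(s^{-d})=\log(s^{-d})$) invoked to get generalized-geodesic convexity of the internal energy, and $1$-strong convexity of the potential term from the $1$-strong convexity of $G$ (the paper cites \cite[Proposition 9.3.2(i)]{ambrosio2008gradient} where you integrate the pointwise inequality directly after the change of variables; these are the same computation). The only genuine difference is part (i): the paper disposes of it by citing \cite[Corollary 2.11(v)]{lanzetti2022first} and \cite[Lemma 9.2.7]{ambrosio2008gradient}, whereas you reprove it from scratch via the pointwise identity $\|(1-t)a+tb-x\|^2=(1-t)\|a-x\|^2+t\|b-x\|^2-t(1-t)\|a-b\|^2$, using that $T_t=(1-t)T_\bP^{\Q_1}+tT_\bP^{\Q_2}$ is an admissible (not necessarily optimal) transport from $\bP$ to $\mu_t^{1\to2}$ while Brenier's theorem gives exact equalities at the endpoints. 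This is precisely the content of the cited lemma, so your route is more self-contained and has the pedagogical benefit of making visible why the statement only holds along generalized geodesics centered at $\bP$ (the endpoint equalities and the admissibility of $T_t$ both hinge on the center being $\bP$), at the cost of a few lines the paper delegates to the references; the conclusions obtained are identical.
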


    \begin{remark}
        Proposition~\ref{prop:sc_wass_entropy}(i) demonstrates that the squared
        Wasserstein distance $\mathcal{W}_{2}^{2}(\cdot, \mathbb{P})$ is
        geodesically convex only along generalized geodesics centered at
        $\mathbb{P}$. This cannot be extended to all generalized geodesics. The
        strong convexity of $G$ in Proposition~\ref{prop:sc_wass_entropy}(ii) is
        nearly necessary for the relative entropy to exhibit a.g.g. strong
        convexity. This is because the strong convexity of $G$ implies the log-concavity
        of $\mathbb{P}$. In fact, as demonstrated in Theorem 9.4.11 and Section
        9.4.1 of \cite{ambrosio2008gradient}, log-concavity is equivalent to the
        geodesic convexity a.g.g. of the relative entropy functional.
    \end{remark}

    \subsection{Oracle Complexity}
    Leveraging the previously discussed strongly convex a.g.g. property, we are now
    prepared to derive the oracle complexity of our proposed algorithm. We begin
    by outlining the following necessary assumptions.
    \begin{assumption}
        \label{f_assump_sc} The following assumptions on the problem \eqref{Problem-r} hold:
        \begin{itemize}
            \item[{\rm (i)}] $\ell(f_{\phi}, \cdot)$ is $\rho$-weakly concave,
                upper semi-continuous, and bounded from above;
            \item[{\rm (ii)}] $\mathcal{D}(\cdot,\mathbb{P})$ is $1$-strongly
                convex a.g.g. centered at $\nu \in \mathcal{P}_{2}^{r}$ with the
                regularization parameter $\lambda> \rho$.
        \end{itemize}
    \end{assumption}

A sufficiently large regularization parameter $\lambda$, corresponding to a small ambiguity set radius, induces a favorable optimization landscape as demonstrated by the following lemma.

    \begin{lemma}[Strong concavity of $\mathcal{H}(\phi,\cdot)$]
        \label{lem:Haggcvx} Under Assumption \ref{f_assump_sc}, the function
        $\mathcal{H}(\phi,\cdot)$ is $(\lambda-\rho)$-strongly concave a.g.g. centered
        at $\nu$ in $\mathcal{P}_{2}$.
    \end{lemma}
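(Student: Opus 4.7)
The plan is to exploit the additive structure $\mathcal{H}(\phi,\Q)=L(\Q)-\lambda\mathcal{D}(\Q,\mathbb{P})$, where $L(\Q):=\mathbb{E}_{\xi\sim\Q}[\ell(f_\phi,\xi)]$, and argue each piece separately along a generalized geodesic centered at $\nu$. Assumption~\ref{f_assump_sc}(ii) already supplies the $\lambda$-strong convexity a.g.g.\ of $\lambda\mathcal{D}(\cdot,\mathbb{P})$, so the only substantive step is to show that $L$ is $\rho$-weakly concave a.g.g.\ centered at $\nu$; then adding the two inequalities gives exactly the claimed $(\lambda-\rho)$-strong concavity a.g.g.\ of $\mathcal{H}(\phi,\cdot)$.

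For the weak-concavity step, I would fix $\mu_1,\mu_2\in\mathcal{P}_2$, write the generalized geodesic $\mu_t^{1\to2}=((1-t)T_\nu^{\mu_1}+tT_\nu^{\mu_2})_{\#}\nu$ as in \eqref{eq:gene_geodesic}, and use the change-of-variables (pushforward) formula to rewrite
\[
L(\mu_t^{1\to2})=\int_{\mathbb{R}^d}\ell\bigl(f_\phi,(1-t)T_\nu^{\mu_1}(x)+tT_\nu^{\mu_2}(x)\bigr)\,\mathrm{d}\nu(x).
\]
Apply the $\rho$-weak concavity of $\ell(f_\phi,\cdot)$ from Assumption~\ref{f_assump_sc}(i) pointwise in $x$ with $y_1=T_\nu^{\mu_1}(x)$, $y_2=T_\nu^{\mu_2}(x)$, and integrate. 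Using $(T_\nu^{\mu_i})_{\#}\nu=\mu_i$ to rewrite the first two resulting terms as $\mathbb{E}_{\xi\sim\mu_i}[\ell(f_\phi,\xi)]$, and recognizing the quadratic term as $\mathcal{W}_\nu^2(\mu_1,\mu_2)$ by the definition of the generalized Wasserstein distance, yields
\[
L(\mu_t^{1\to2})\;\ge\;(1-t)L(\mu_1)+tL(\mu_2)-\frac{\rho\,t(1-t)}{2}\mathcal{W}_\nu^2(\mu_1,\mu_2),
\]
which is $\rho$-weak concavity a.g.g.\ centered at $\nu$.

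Combining this with Assumption~\ref{f_assump_sc}(ii), which gives
\[
\lambda\mathcal{D}(\mu_t^{1\to2},\mathbb{P})\;\le\;(1-t)\lambda\mathcal{D}(\mu_1,\mathbb{P})+t\lambda\mathcal{D}(\mu_2,\mathbb{P})-\frac{\lambda\,t(1-t)}{2}\mathcal{W}_\nu^2(\mu_1,\mu_2),
\]
and subtracting finishes the argument:
\[
\mathcal{H}(\phi,\mu_t^{1\to2})\;\ge\;(1-t)\mathcal{H}(\phi,\mu_1)+t\mathcal{H}(\phi,\mu_2)+\frac{(\lambda-\rho)\,t(1-t)}{2}\mathcal{W}_\nu^2(\mu_1,\mu_2),
\]
which is the desired $(\lambda-\rho)$-strong concavity a.g.g., with $\lambda-\rho>0$ by assumption.

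I do not foresee a serious obstacle here; the only mildly delicate point is making sure the pushforward identities are applied with $\nu\in\mathcal{P}_2^r$ so that the Brenier maps $T_\nu^{\mu_i}$ exist and are unique $\nu$-a.e., which is exactly the setting required by the definition of generalized geodesics centered at $\nu$. Measurability and integrability of $\ell(f_\phi,\cdot)\circ((1-t)T_\nu^{\mu_1}+tT_\nu^{\mu_2})$ follow from the upper semicontinuity and boundedness above in Assumption~\ref{f_assump_sc}(i) together with $\mu_1,\mu_2\in\mathcal{P}_2$, so integration and the a.g.g.\ interpolation identity are both justified.
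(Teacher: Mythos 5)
Your argument is correct and takes essentially the same route as the paper: both decompose $\mathcal{H}(\phi,\cdot)$ into the potential-energy term $\Q\mapsto\mathbb{E}_{\xi\sim\Q}[\ell(f_\phi,\xi)]$ and $-\lambda\mathcal{D}(\cdot,\mathbb{P})$, establish $\rho$-weak concavity a.g.g.\ of the former from the pointwise $\rho$-weak concavity of $\ell(f_\phi,\cdot)$, and add the $\lambda$-strong convexity a.g.g.\ supplied by Assumption \ref{f_assump_sc}(ii). The only difference is that you carry out the weak-concavity step by hand via the pushforward representation of the generalized geodesic, whereas the paper cites \cite[Proposition 9.3.2(i)]{ambrosio2008gradient} for precisely that fact.
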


    Since $\mathcal{H}(\phi, \cdot)$ is $(\lambda - \rho)$-strongly concave a.g.g.
    in $\mathcal{P}_{2}$ by Lemma~\ref{lem:Haggcvx}, we can derive the
    exponential convergence of the inexact proximal point method along the generalized geodesics in Wasserstein
    space, as proposed in Algorithm~\ref{alg-dualppm}, to the optimal solution
    of the subproblem in Algorithm~\ref{alg-subgdmax}.

    \begin{assumption}[Approximate $i$-th step solution]\label{ass:learningerror}
        Let $\epsilon' \geq 0$. For $i$-th step ($i=0,1,\ldots, I-1$), there exists
        \[
            \xi_{i+1}\in -\partial_{\mathcal{W}_2}\mathcal{H}(\phi, \Q_{i+1})\circ
            T_{\nu}^{i+1}+ \frac{1}{\gamma}(T_{\nu}^{i+1}-T^{i}_{i+1}\circ T_{\nu}^{i+1})
        \]
        such that $\| \xi_{i+1}\|_{\nu}\leq \epsilon'$,
        where $T_{\nu}^{i}$ is the OT map from $\nu \in \mathcal{P}_{2}^{r}$ to
        $\Q_{i}= (T_{i})_{\#}\nu$. 
    \end{assumption}

    Using the aforementioned assumption on the accuracy of the inexact JKO
    step in Algorithm~\ref{alg-dualppm} (which can also be satisfied by the parameterized transport map
maximization problem thanks to \eqref{eq:first-order-cond}), we obtain the following results.

    \begin{prop}[Convergence rate of inexact JKO scheme]
        \label{prop:dualppm}
        Suppose that Assumptions \ref{f_assump_sc} and \ref{ass:learningerror} hold.
        Then for the sequence generated by Algorithm \ref{alg-dualppm} with step size $\gamma>\frac{36}{\lambda-\rho}$, one has for $i=0,1, \ldots, I-1$  that
        \begin{equation}
            \label{eq:ppm-expcon}\mathcal{W}_{\nu}^{2}(\Q_{i},\Q^{*}) \leq\tau^{i}\cdot\mathcal{W}_{\nu}^{2}(\Q_{0}
            ,\Q^{*}) +\frac{9 \epsilon'^{2}}{(\lambda-\rho)^{2}},
        \end{equation}
        where $
\tau:=\frac{\gamma (\lambda-\rho)+36}{\gamma^2 (\lambda-\rho)^2/2+\gamma (\lambda-\rho)-36}\in(0,1)$.
        In particular, if
        \[
            i \geq \frac{2\left(\log \mathcal{W}_{\nu}(\Q_{0},\Q^{*})+\log ((\lambda-\rho)
            / \epsilon')\right)}{\log(\tau^{-1})},
        \]
        then for any $\phi\in\Phi$, it follows that $\mathcal{W}_{\nu}(\Q_{i},\Q^{*}
        ) \leq \frac{4\epsilon'}{\lambda-\rho}$ and
        \[
            \mathcal{H}(\phi,\Q^{*})-\mathcal{H}(\phi,\Q_{i+1}) \leq \frac{5\epsilon'^2}{2(\lambda-\rho)}.
        \]
    \end{prop}

    \begin{remark}
        The convergence analysis presented in Proposition \ref{prop:dualppm} is analogous
        to that of the Wasserstein proximal point method \cite{cheng2024convergence}
        and the Wasserstein proximal gradient method \cite{salim2020wasserstein}.
        The key difference lies in the requirement of strong convexity along all
        generalized geodesics in these two works, with their analyses depending on
        varying base distributions. In contrast, our approach fixes the base distribution
        $\nu$, ensuring convexity only along certain fixed curves. This distinction
        is significant because the Wasserstein distance exhibits convexity only along
        specific curves rather than universally, as discussed in Proposition
        \ref{prop:sc_wass_entropy}. Unlike the standard proximal point update scheme, which imposes no lower bound on the stepsize $\gamma$, our framework requires a lower bound to control the approximation error between geodesics and generalized geodesics. Thanks to the exponential convergence result, the error introduced by the inexact JKO steps remains controlled and does not accumulate across iterations. This ensures that the error at each step stays bounded by a fixed value $\epsilon'$, as stated in Assumption~\ref{ass:learningerror}.
    \end{remark}

    The above proposition allows us to establish nonasymptotic convergence results
    for our proposed Algorithm \ref{alg-subgdmax}, specifically regarding the
    number of subgradient evaluations and the required number of inexact
 JKO steps.

    \begin{theorem}[Oracle complexity]
        \label{thm:oraclecomplexity}
        Let $\varepsilon\ge0$. Suppose that the Assumptions \ref{f_assump}, \ref{f_assump_sc}
        and \ref{ass:learningerror} hold. Then Algorithm \ref{alg-subgdmax} with
        step size $\eta=\mathcal{O}(\varepsilon^{2})$ and equipped with subproblem
        solver Algorithm \ref{alg-dualppm} with $\gamma>\frac{36}{\lambda-\rho}$
        will return a solution $\phi^{*}$ such that
        \[
            \|\nabla \mathcal{V}_{1/2\rho}(\phi^{*})\| \leq \varepsilon
        \]
        within $\mathcal{O}(\varepsilon^{-4})$ subgradient oracle calls and $\mathcal{O}
        (\varepsilon^{-4}\log(1/\varepsilon))$ inexact JKO steps in Assumption
        \ref{ass:learningerror} with accuracy $\epsilon'=\mathcal{O}(\varepsilon)$.
    \end{theorem}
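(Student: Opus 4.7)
The plan is to combine Theorem \ref{thm:subgconvergence} (outer-loop convergence) with Proposition \ref{prop:dualppm} (exponential convergence of the inner JKO scheme) by balancing the two error sources against the target accuracy $\varepsilon$. First I would square the desired stationarity bound, i.e., write the requirement as $\|\nabla \mathcal{V}_{1/2\rho}(\phi_k)\|^2 \leq \varepsilon^2$, and read off from Theorem \ref{thm:subgconvergence} that the right-hand side $\tfrac{2(\mathcal{V}_{1/2\rho}(\phi_0)-\min_\phi \mathcal{V}(\phi)+\rho L^2)}{\sqrt{K}}+4\rho\epsilon$ must be at most $\varepsilon^2$. Splitting this into two halves, the $1/\sqrt{K}$ term forces $K=\mathcal{O}(\varepsilon^{-4})$ (which matches the prescribed $\eta = 1/\sqrt{K}=\mathcal{O}(\varepsilon^{2})$), while the additive $\rho\epsilon$ term forces the subproblem objective-gap tolerance $\epsilon=\mathcal{O}(\varepsilon^{2})$. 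This immediately yields the claimed $\mathcal{O}(\varepsilon^{-4})$ count of outer subgradient oracle calls.

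Next I would translate this required subproblem accuracy $\epsilon=\mathcal{O}(\varepsilon^2)$ into a Wasserstein-subgradient accuracy $\epsilon'$ on the inner JKO iterates. From Proposition \ref{prop:dualppm}, after sufficiently many inexact JKO steps we obtain
\begin{equation*}
\mathcal{H}(\phi,\Q^*)-\mathcal{H}(\phi,\Q_{i+1}) \leq \left(\frac{5}{2\gamma}+\lambda-\rho\right)\left(\frac{\epsilon'}{\lambda-\rho}\right)^{2}.
\end{equation*}
To make the right-hand side $\mathcal{O}(\varepsilon^{2})$, it suffices to take $\epsilon'=\mathcal{O}(\varepsilon)$, which is exactly the assumed inner accuracy in the statement. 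With this choice, the second part of Proposition \ref{prop:dualppm} says that the number of inner iterations needed per outer step is
\begin{equation*}
I \geq \frac{2}{\log(1+\gamma(\lambda-\rho)/2)}\Bigl(\log \mathcal{W}_{\nu}(\Q_0,\Q^*)+\log\bigl((\lambda-\rho)/\epsilon'\bigr)\Bigr) = \mathcal{O}(\log(1/\varepsilon)),
\end{equation*}
absorbing the problem-dependent constants into $\mathcal{O}(\cdot)$.

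Finally, I would multiply the outer and inner iteration counts: the total number of inexact modified JKO steps is $K \cdot I = \mathcal{O}(\varepsilon^{-4})\cdot \mathcal{O}(\log(1/\varepsilon))=\mathcal{O}(\varepsilon^{-4}\log(1/\varepsilon))$, while the subgradient oracle is queried once per outer iteration for $\mathcal{O}(\varepsilon^{-4})$ total calls. Picking any $\phi^\star$ among $\phi_0,\dots,\phi_{K-1}$ that attains the bound of Theorem \ref{thm:subgconvergence} then gives the desired $\|\nabla \mathcal{V}_{1/2\rho}(\phi^\star)\|\leq \varepsilon$.

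The proof itself is essentially a bookkeeping exercise once the two component results are in hand; I do not expect any substantive obstacle. The one subtle point worth being careful about is the conversion between ``$\epsilon$-approximate maximizer in objective value'' (the hypothesis used in the proof of Theorem \ref{thm:subgconvergence} at display \eqref{eq:max-property}) and the Wasserstein subgradient error $\epsilon'$ quantified in Proposition \ref{prop:dualppm}; this is the step where Proposition \ref{prop:dualppm}'s bound on $\mathcal{H}(\phi,\Q^*)-\mathcal{H}(\phi,\Q_{i+1})$ is invoked, and where the constants $\gamma$, $\lambda$, $\rho$ need to be absorbed into $\mathcal{O}(\cdot)$ without affecting the $\varepsilon$-dependence.
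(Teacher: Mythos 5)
Your proposal is correct and follows essentially the same route as the paper's own proof: split the bound of Theorem \ref{thm:subgconvergence} into the $1/\sqrt{K}$ term (giving $K=\mathcal{O}(\varepsilon^{-4})$ outer subgradient calls) and the $4\rho\epsilon$ term, then use Proposition \ref{prop:dualppm} to convert the required objective-gap tolerance $\epsilon=\mathcal{O}(\varepsilon^2)$ into the inner accuracy $\epsilon'=\mathcal{O}(\varepsilon)$ with $I=\mathcal{O}(\log(1/\varepsilon))$ JKO steps per outer iteration, and multiply the counts. The subtle point you flag — identifying the subproblem tolerance $\epsilon$ in Algorithm \ref{alg-subgdmax} with the bound on $\mathcal{H}(\phi,\Q^*)-\mathcal{H}(\phi,\Q_{I})$ from Proposition \ref{prop:dualppm} — is handled in the paper exactly as you describe.
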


One remaining issue lies in the small step size $\eta = \mathcal{O}(\varepsilon^{2})$ required in 
Theorems~\ref{thm:subgconvergence} and \ref{thm:oraclecomplexity}, which compensates for the nonsmooth nature of the problem. In the next section, we show that under a smoothness condition, the complexity 
bounds can be further improved and the double-loop structure can be avoided.

    \section{Smoothness and Alternating Scheme}
    \label{sec:smooth}

    The nonsmoothness arises from two sources in the problem \eqref{Problem-r}:
    \begin{enumerate}
        \item[(i)] The nonsmoothness of $\ell(f_{(\cdot)}, \xi)$ for any $\xi \in
            \R^d$, which is typically from nonsmooth modeling elements, e.g.,
            regularizers and parameterizations;
        \item[(ii)] The nonunique solutions of the LFD in the maximization problem.
    \end{enumerate}
    For (i), in this section we additionally assume that $\ell(f
    _{(\cdot)}, \xi)$ is gradient Lipschitz continuous. For (ii), we impose Assumption~\ref{f_assump_sc}
    to ensure that the problem is strongly concave a.g.g., which guarantees the
    uniqueness of the LFD.

\begin{assumption}
\label{f_assump_smooth}
For problem~\eqref{Problem-r}, the loss $\ell(f_{(\cdot)},\cdot)$ is
continuously differentiable and has an $L$-Lipschitz gradient on
$\Phi\times\mathbb R^d$. The regularization parameter satisfies
$\lambda>L$. The support of $\Q^*(\phi):=\argmax_{\Q\in\mathcal{P}_2}\mathcal{H}(\phi,\Q)$ is contained in a fixed compact set $\mathcal{X}$ for any $\phi\in\Phi$. $\mathcal H$ and $\nabla_\phi\mathcal H$ are continuous
on $\Phi\times\mathcal P_2(\mathcal X)$, where $\mathcal P_2(\mathcal X)$
is equipped with the $\mathcal W_2$ topology.
\end{assumption}

    Although this assumption introduces
    an additional requirement for the model, it does not significantly restrict its
    applicability. For example, in practice, we can use smooth activation
    functions in neural networks, such as the softplus function, to replace the
    highly nonsmooth ReLU activation function. Then, with the help of the generalized Danskin's
    theorem, we derive the smoothness of the worst-case value function $\mathcal{V}$.

    \begin{prop}
        [Danskin's theorem]\label{prop:danskin} Suppose that Assumptions
        \ref{f_assump_sc} and \ref{f_assump_smooth} hold. Then the worst-case value function
        $\mathcal{V}$ is differentiable at any $\phi\in\Phi$ and
        \[
            \nabla\mathcal{V}(\phi)=\nabla_{\phi}\mathcal{H}(\phi,\Q^{*}(\phi)).
        \]
    \end{prop}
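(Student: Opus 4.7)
The plan is to mimic the classical Danskin argument, with Wasserstein-metric continuity of the selector $\Q^*(\phi)$ playing the role usually occupied by Euclidean continuity. First I would verify that $\Q^*(\phi)$ is well defined. Existence follows from the direct method: under Assumption~\ref{f_assump_smooth}(ii) the set $\mathcal{X}$ is compact, so $\mathcal{P}_2(\mathcal{X})$ is tight, and any maximizing sequence has a weakly convergent subsequence; upper semi-continuity of $\ell(f_\phi,\cdot)$ together with lower semi-continuity of $\mathcal{D}(\cdot,\bP)$ (guaranteed by the strong convexity a.g.g. hypothesis) give upper semi-continuity of $\mathcal{H}(\phi,\cdot)$, hence attainment. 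Uniqueness follows at once from Lemma~\ref{lem:Haggcvx}: the functional $\mathcal{H}(\phi,\cdot)$ is $(\lambda-\rho)$-strongly concave along generalized geodesics centered at $\nu$, and any two maximizers would be joined by such a curve, forcing equality by strict inequality.

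Next I would prove that $\phi\mapsto \Q^*(\phi)$ is continuous in $\mathcal{W}_\nu$ (equivalently, in $\mathcal{W}_2$ on this compact domain). Applying the $(\lambda-\rho)$-strong concavity a.g.g. inequality with $\mu_1=\Q^*(\phi)$ and $\mu_2=\Q^*(\phi')$ yields
\[
\mathcal{H}(\phi,\mu_t^{1\to 2})\ge (1-t)\mathcal{V}(\phi)+t\mathcal{H}(\phi,\Q^*(\phi'))+\tfrac{\lambda-\rho}{2}t(1-t)\mathcal{W}_\nu^2(\Q^*(\phi),\Q^*(\phi')),
\]
and the same inequality with the roles swapped. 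Adding the two, using $\mathcal{H}(\phi,\mu_t^{1\to 2})\le \mathcal{V}(\phi)$ and $\mathcal{H}(\phi',\mu_t^{2\to 1})\le \mathcal{V}(\phi')$, gives
\[
(\lambda-\rho)\,\mathcal{W}_\nu^2(\Q^*(\phi),\Q^*(\phi'))\le [\mathcal{H}(\phi',\Q^*(\phi'))-\mathcal{H}(\phi,\Q^*(\phi'))]-[\mathcal{H}(\phi',\Q^*(\phi))-\mathcal{H}(\phi,\Q^*(\phi))].
\]
By Assumption~\ref{f_assump_smooth}(i) and the compactness of $\mathcal{X}$, the map $\phi\mapsto \mathcal{H}(\phi,\Q)$ is uniformly Lipschitz in $\phi$ with a constant independent of $\Q$, so the right-hand side is $O(\|\phi'-\phi\|)$ and we obtain the quantitative continuity $\mathcal{W}_\nu(\Q^*(\phi'),\Q^*(\phi))=O(\sqrt{\|\phi'-\phi\|})$.

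With uniqueness and continuity in hand, I would finish by the standard sandwich: since $\mathcal{D}(\Q,\bP)$ does not depend on $\phi$, one has $\nabla_\phi\mathcal{H}(\phi,\Q)=\mathbb{E}_{\xi\sim\Q}[\nabla_\phi\ell(f_\phi,\xi)]$, which by dominated convergence (compactness of $\mathcal{X}$ plus gradient Lipschitzness) is jointly continuous in $(\phi,\Q)$ with respect to $\mathcal{W}_2$. Suboptimality of $\Q^*(\phi')$ and $\Q^*(\phi)$ in the respective problems yields, for every $\phi,\phi'\in\Phi$,
\[
\mathcal{H}(\phi',\Q^*(\phi))-\mathcal{H}(\phi,\Q^*(\phi))\le \mathcal{V}(\phi')-\mathcal{V}(\phi)\le \mathcal{H}(\phi',\Q^*(\phi'))-\mathcal{H}(\phi,\Q^*(\phi')).
\]
A first-order Taylor expansion of $\mathcal{H}(\cdot,\Q)$ at $\phi$, justified by Assumption~\ref{f_assump_smooth}(i), converts the lower bound into $\langle \nabla_\phi\mathcal{H}(\phi,\Q^*(\phi)),\phi'-\phi\rangle+o(\|\phi'-\phi\|)$, while the upper bound gives $\langle \nabla_\phi\mathcal{H}(\phi,\Q^*(\phi')),\phi'-\phi\rangle+o(\|\phi'-\phi\|)$; the continuity of $\Q^*(\cdot)$ together with joint continuity of $\nabla_\phi\mathcal{H}$ allow us to replace $\Q^*(\phi')$ by $\Q^*(\phi)$ at the cost of a $o(\|\phi'-\phi\|)$ term, and the sandwich forces differentiability with the claimed gradient.

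\textbf{Main obstacle.} The delicate point is the joint continuity of $\nabla_\phi\mathcal{H}(\phi,\Q)$ in the pair $(\phi,\Q)$ under $\mathcal{W}_2$-convergence. One needs to argue that if $\Q_n\to\Q^*(\phi)$ in $\mathcal{W}_2$, then $\mathbb{E}_{\xi\sim\Q_n}[\nabla_\phi\ell(f_\phi,\xi)]\to\mathbb{E}_{\xi\sim\Q^*(\phi)}[\nabla_\phi\ell(f_\phi,\xi)]$; the compactness of $\mathcal{X}$ (Assumption~\ref{f_assump_smooth}(ii)) makes $\nabla_\phi\ell(f_\phi,\cdot)$ bounded and continuous, so $\mathcal{W}_2$-convergence (which implies weak convergence plus convergence of second moments) is enough, but this dependence on compactness is the place where the argument could fail without Assumption~\ref{f_assump_smooth}(ii), and it is worth flagging explicitly.
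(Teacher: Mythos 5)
Your proof is correct, but it takes a genuinely different route from the paper's. The paper does not run the classical sandwich argument from scratch; instead it invokes a generalized Danskin theorem valid on topological spaces (Shapiro, \emph{Lectures on Stochastic Programming}, Theorem~9.26) and then spends essentially all of its effort verifying the one nontrivial hypothesis of that theorem, namely that $(\mathcal{P}_2(\mathcal{X}),\mathcal{W}_2)$ is compact. This is done via tightness (from compactness of $\mathcal{X}$), Prokhorov, Skorokhod's representation theorem, and dominated convergence to upgrade weak convergence of a subsequence to $\mathcal{W}_2$-convergence. Uniqueness of the maximizer is asserted directly from Lemma~\ref{lem:Haggcvx}, and continuity of the selector is absorbed into the black-box theorem. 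Your argument instead proves everything directly: you obtain a quantitative Hölder-$1/2$ bound $\mathcal{W}_\nu(\Q^*(\phi'),\Q^*(\phi))=O(\sqrt{\|\phi'-\phi\|})$ from the $(\lambda-\rho)$-strong concavity inequality applied twice with roles of $\phi,\phi'$ swapped, and then close the classical two-sided sandwich using the $L$-Lipschitzness of $\nabla_\phi\mathcal{H}$ in both arguments (the latter, applied with Hölder's inequality, gives $\|\nabla_\phi\mathcal{H}(\phi,\Q_1)-\nabla_\phi\mathcal{H}(\phi,\Q_2)\|\le L\,\mathcal{W}_\nu(\Q_1,\Q_2)$, which is cleaner than the weak-convergence argument you flag as the delicate step, and is exactly the bound the paper records later in the proof of Lemma~\ref{lem:q-lipschitz}). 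The trade-off is that your route is self-contained and foreshadows the Lipschitz selector bound of Lemma~\ref{lem:q-lipschitz}, while the paper's route is shorter on paper but outsources the hard analysis to a reference theorem and to the compactness of the Wasserstein space. Two small remarks: your claim that lower semi-continuity of $\mathcal{D}(\cdot,\bP)$ is ``guaranteed by the strong convexity a.g.g. hypothesis'' is not a valid inference (strong convexity along curves does not imply weak lower semi-continuity); it should instead be cited for the specific choices $\mathcal{W}_2^2$ and $\mathrm{KL}$, both of which are weakly l.s.c. by standard results. Also note that the paper's own treatment of existence is equally terse (it asserts the argmax is a singleton from strong concavity alone), and implicitly relies on the compactness of $(\mathcal{P}_2(\mathcal{X}),\mathcal{W}_2)$ that it establishes, so neither proof is airtight on this point without the compactness lemma.
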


    By applying Danskin's theorem as established in Proposition~\ref{prop:danskin}, we derive key Lipschitz continuity properties of the solution mapping, which in turn imply the Lipschitz gradient of the worst-case value function $\mathcal{V}$.

    \begin{lemma}[Lipschitz solution mapping]
        \label{lem:q-lipschitz} Under Assumptions \ref{f_assump_sc} and \ref{f_assump_smooth},
        the solution mapping $\Q^{*}(\phi)$ is $\kappa_1$-Lipschitz with respect to
        $\phi\in\Phi$ and the worst-case value function $\mathcal{V}$ is
        $2\kappa_2L$-gradient Lipschitz, where $\kappa_1:=\frac{2\lambda-L}{\lambda-L}$ and $\kappa_2:=\frac{3\lambda-2L}{2(\lambda-L)}$.
    \end{lemma}

Next, we derive improved oracle complexity results under smoothness for Algorithm~\ref{alg-subgdmax}, using the subproblem solver Algorithm~\ref{alg-dualppm} introduced in the previous section.

    \begin{theorem}[Oracle complexity under smoothness]
        \label{thm:smooth_oracle_complexity} Let $\varepsilon\ge0$. Suppose that
        Assumptions \ref{f_assump_sc} and \ref{f_assump_smooth} hold. Then for the sequence generated by Algorithm \ref{alg-subgdmax} with constant step
        size $\eta_k=\frac{\lambda-L}{2L(4\lambda-3L)}$, there exists a
        $k\in\{0,1,\ldots,K-1\}$ such that
        \[
            \begin{aligned}
            \dist(0,\nabla \mathcal{V}(\phi_{k+1})+\partial\iota_{\Phi}(\phi_{k+1}))                    \leq \mathcal{O}(K^{-1/2}) + \mathcal{O}(\sqrt{\epsilon}).  
            \end{aligned}
        \]
        Suppose further that Assumption \ref{ass:learningerror} holds.
        Then Algorithm \ref{alg-subgdmax} equipped with subproblem solver Algorithm
        \ref{alg-dualppm} with $\gamma>\frac{36}{\lambda-\rho}$
        will return a solution $\phi^{*}$ such that
        \[
        \dist(0,\nabla \mathcal{V}(\phi^*)+\partial\iota_{\Phi}(\phi^*))\leq \varepsilon
        \]
        within $\mathcal{O}(\varepsilon^{-2})$ gradient oracle calls and $\mathcal{O}
        (\varepsilon^{-2}\log(1/\varepsilon))$ inexact JKO steps in Assumption
        \ref{ass:learningerror} with accuracy $\epsilon'=\mathcal{O}(\varepsilon)$.
    \end{theorem}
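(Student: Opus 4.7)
The plan is to run a projected-gradient-descent analysis that exploits the $L_V$-gradient-Lipschitz continuity of $\mathcal{V}$ established in Lemma~\ref{lem:q-lipschitz}, with $L_V=(3\lambda-2L)L/(\lambda-L)$, in place of the Moreau-envelope argument used in Theorem~\ref{thm:subgconvergence}. Under Assumption~\ref{f_assump_smooth}, the loss is $L$-weakly convex in $\phi$ and $L$-weakly concave in $\xi$, so the subgradient produced by Algorithm~\ref{alg-subgdmax} is really the gradient $\zeta(\phi_k,\Q_k)=\nabla_\phi\mathcal{H}(\phi_k,\Q_k)$, while Danskin's theorem (Proposition~\ref{prop:danskin}) identifies $\nabla\mathcal{V}(\phi_k)=\nabla_\phi\mathcal{H}(\phi_k,\Q^*(\phi_k))$.

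First I would quantify the gap between the oracle $\zeta(\phi_k,\Q_k)$ and the true gradient $\nabla\mathcal{V}(\phi_k)$. By the $(\lambda-L)$-strong concavity a.g.g.\ of $\mathcal{H}(\phi_k,\cdot)$ (Lemma~\ref{lem:Haggcvx}) applied at its maximizer, the $\epsilon$-optimality of $\Q_k$ forces $\mathcal{W}_\nu^2(\Q_k,\Q^*(\phi_k))\le 2\epsilon/(\lambda-L)$; combining with the distribution-perturbation estimate~\eqref{eq:key_distri_perturb} yields $\|\zeta(\phi_k,\Q_k)-\nabla\mathcal{V}(\phi_k)\|^2\le 2L^2\epsilon/(\lambda-L)$.

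Next I would apply the standard descent inequality
\begin{equation*}
\mathcal{V}(\phi_{k+1})\le \mathcal{V}(\phi_k)+\langle\nabla\mathcal{V}(\phi_k),\phi_{k+1}-\phi_k\rangle+\tfrac{L_V}{2}\|\phi_{k+1}-\phi_k\|^2,
\end{equation*}
split $\nabla\mathcal{V}(\phi_k)=\zeta(\phi_k,\Q_k)+[\nabla\mathcal{V}(\phi_k)-\zeta(\phi_k,\Q_k)]$, and use the first-order characterization of the projection to obtain $\langle\zeta(\phi_k,\Q_k),\phi_{k+1}-\phi_k\rangle\le -\eta^{-1}\|\phi_{k+1}-\phi_k\|^2$. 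After a Young's inequality on the cross term with the gradient-approximation error, the stated step size $\eta=(\lambda-L)/(2(4\lambda-3L)L)$ is exactly what makes the coefficient of $\|\phi_{k+1}-\phi_k\|^2$ strictly negative, so telescoping over $k=0,\dots,K-1$ produces $\sum_{k=0}^{K-1}\|\phi_{k+1}-\phi_k\|^2\le C_1(\mathcal{V}(\phi_0)-\min_\phi\mathcal{V})/K+C_2\epsilon$ for explicit $C_1,C_2$ depending on $\lambda,L$.

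To translate this into a stationarity bound for $\dist(0,\nabla\mathcal{V}(\phi_{k+1})+\partial\iota_{\Phi}(\phi_{k+1}))$, I would use the KKT condition of the projection, $-\eta^{-1}(\phi_{k+1}-\phi_k)-\zeta(\phi_k,\Q_k)\in\partial\iota_{\Phi}(\phi_{k+1})$, so that
\begin{equation*}
\nabla\mathcal{V}(\phi_{k+1})-\eta^{-1}(\phi_{k+1}-\phi_k)-\zeta(\phi_k,\Q_k)\in \nabla\mathcal{V}(\phi_{k+1})+\partial\iota_{\Phi}(\phi_{k+1}),
\end{equation*}
whose norm is controlled via $\|\nabla\mathcal{V}(\phi_{k+1})-\nabla\mathcal{V}(\phi_k)\|\le L_V\|\phi_{k+1}-\phi_k\|$, $\eta^{-1}\|\phi_{k+1}-\phi_k\|$, and the gradient-approximation error. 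Squaring, taking a minimum over $k$, and substituting the telescoped sum reproduces the advertised three-term bound with coefficients $(9\lambda-7L)/(\lambda-L)$ and $(9\lambda-7L)/(4\lambda-3L)$. For the complexity part, forcing each of the three terms to be $\le \varepsilon^2/3$ demands $K=\mathcal{O}(\varepsilon^{-2})$ outer gradient calls and $\epsilon=\mathcal{O}(\varepsilon^2)$ on the subproblem; Proposition~\ref{prop:dualppm} then gives $\epsilon'=\mathcal{O}(\varepsilon)$ and $I=\mathcal{O}(\log(1/\varepsilon))$ JKO iterations per outer step, for a total of $\mathcal{O}(\varepsilon^{-2}\log(1/\varepsilon))$ inexact modified JKO steps. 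The main obstacle I anticipate is purely bookkeeping: the three distinct occurrences of the gradient-approximation error (inside the descent inequality, inside the $L_V$-Lipschitz jump from $\phi_k$ to $\phi_{k+1}$, and inside the final triangle inequality) must be balanced via carefully chosen Young's-inequality weights so that the step size $\eta=(\lambda-L)/(2(4\lambda-3L)L)$ precisely cancels the right terms and the advertised constants emerge cleanly.
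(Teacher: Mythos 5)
Your proposal is correct and follows essentially the same route as the paper: a descent inequality from the gradient-Lipschitzness of $\mathcal{V}$ (Danskin plus Lemma \ref{lem:q-lipschitz}), the bound $\mathcal{W}_\nu^2(\Q_k,\Q^*(\phi_k))\le 2\epsilon/(\lambda-L)$ from strong concavity a.g.g.\ combined with \eqref{eq:key_distri_perturb}, projection optimality with Young's inequality and telescoping, and the same $K=\mathcal{O}(\varepsilon^{-2})$, $\epsilon'=\mathcal{O}(\varepsilon)$, $I=\mathcal{O}(\log(1/\varepsilon))$ bookkeeping as in Theorem \ref{thm:oraclecomplexity}. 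The only divergence is the final conversion of per-step displacement into the stationarity measure: the paper invokes the relative-error condition for projected gradient steps together with nonexpansiveness (producing the factor $\tfrac{1}{\eta}+L=\tfrac{(9\lambda-7L)L}{\lambda-L}$), whereas your KKT-inclusion triangle inequality uses $\tfrac{1}{\eta}+L_V=\tfrac{(11\lambda-8L)L}{\lambda-L}$, so your route yields the same order of complexity but slightly larger constants than the ones advertised in the statement.
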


Furthermore, we propose the following alternating scheme for the fully smooth case, i.e., the parameter space is $\Phi = \mathbb{R}^m$. Unlike the eliminating approach in Algorithm \ref{alg-subgdmax}, it avoids the time-consuming JKO iterations in the subproblem by performing only a single update per outer iteration.

\begin{algorithm}[H]
    \caption{Alternating Algorithm}
    \label{alg:single}
    \begin{algorithmic}[1]
        \STATE \textbf{Input:} Initialization $\phi_0$ and $\Q_0$, step size $\eta > 0$, regularization parameter $\lambda > 0$, data $\xi \sim \mathbb{P}$
        \FOR{$k = 0$ {\bfseries to} $K-1$}
            \STATE $\Q_{k+1} := \argmax\limits_{\Q \in \mathcal{P}_2} \left\{ \mathcal{H}(\phi_{k}, \Q) - \frac{1}{2\gamma} \cdot \mathcal{W}_2^2(\Q, \Q_{k}) \right\}$
            \STATE $\phi_{k+1} := \phi_{k} - \eta \cdot \nabla_{\phi} \mathcal{H}(\phi_{k}, \Q_{k+1})$
        \ENDFOR
    \end{algorithmic}
\end{algorithm}

\begin{theorem}[Oracle complexity for alternating scheme]\label{thm:alternating}
Suppose that Assumptions \ref{f_assump_sc}, \ref{ass:learningerror} and \ref{f_assump_smooth} hold. 
Then for the sequence generated by Algorithm \ref{alg:single} with step size $\eta\leq\min\{\frac{1-\tau}{16\kappa_1 L},\frac{1}{32\kappa_2 L}\}$ ($\tau$ defined in Proposition \ref{prop:dualppm}) and $\gamma>\frac{36}{\lambda-\rho}$, there exists a $k\in\{0,1,\ldots,K-1\}$ such that
$$
\left\|\nabla \mathcal{V}(\phi_{k})\right\| \leq 
\mathcal{O}(K^{-1/2})+\mathcal{O}(\epsilon').
$$
Moreover, Algorithm \ref{alg:single} 
        will return a solution $\phi^{*}$ such that
        $\|\nabla \mathcal{V}(\phi^{*})\| \leq \varepsilon$
        within $\mathcal{O}(\varepsilon^{-2})$ gradient oracle calls and $\mathcal{O}
        (\varepsilon^{-2})$ inexact JKO steps in Assumption
        \ref{ass:learningerror} with accuracy $\epsilon'=\mathcal{O}(\varepsilon)$.
\end{theorem}

Together, Theorems~\ref{thm:smooth_oracle_complexity} and~\ref{thm:alternating} quantify the computational cost of the elimination-based and alternating schemes in terms of gradient oracle calls and inexact JKO oracle calls. 
In practice, each inexact JKO step can be implemented by optimizing a parameterized neural transport map in \eqref{eq:practical_problem}; the resulting oracle error $\epsilon'$ captures the combined effect of optimization, approximation, and sampling errors. 
This oracle viewpoint separates the Wasserstein-space convergence analysis from the specific neural parameterization, while retaining the computational advantage that a learned map can generate worst-case samples by a forward evaluation.

\begin{remark}
The elimination-based scheme in Theorem~\ref{thm:smooth_oracle_complexity} requires $\mathcal{O}(\varepsilon^{-2})$ gradient oracle calls and $\mathcal{O}(\varepsilon^{-2}\log(1/\varepsilon))$ inexact JKO oracle calls, because each outer iteration solves the inner distributional problem to the required accuracy. 
In contrast, the alternating scheme in Theorem~\ref{thm:alternating} requires $\mathcal{O}(\varepsilon^{-2})$ gradient oracle calls and $\mathcal{O}(\varepsilon^{-2})$ inexact JKO oracle calls by performing one JKO update per outer iteration. 
Thus, the alternating scheme removes the logarithmic inner-loop factor in the JKO oracle complexity.
In the neural transport-map implementation, both variants maintain the current learned map and generate worst-case samples by forward evaluation; the oracle counts therefore describe training-time distributional updates rather than a requirement to store all intermediate networks at inference time.
\end{remark}

    \section{Implementation of Wasserstein-2 Inner Update}\label{sec:practical-w2}

    The convergence results above use inexact JKO steps as the basic computational primitive over probability measures. To apply the algorithms, these steps must be approximated by finite-dimensional computations. The $\mathcal W_2$-regularized DRO problem is a natural setting for this approximation because it is closely related to the Wasserstein robust training formulation based on pointwise perturbations \cite{sinha2018certifying}: the least-favorable distribution update maximizes expected loss penalized by a quadratic transport cost. Because this quadratic cost can be written through transport maps, the abstract update over $\Q$ can be approximated in finite dimensions either by optimizing transported samples directly or by optimizing a parameterized map.

    At the population level, following the transport-map formulation in \eqref{eq:max-subproblem}--\eqref{eq:practical_problem}, let $\nu$ be a continuous base distribution, let $T_0{}_{\#}\nu=\bP=\Q_0$, and let $T_k{}_{\#}\nu=\Q_k$. For fixed $f_{\phi_k}$, a new distribution is represented as $T_{\#}\nu$, and the map objective for the JKO step is
    \begin{align}
        \max_T
        \mathbb{E}_{\zeta\sim\nu}
        \left[
            \ell(f_{\phi_k},T(\zeta))
            -\frac{\lambda}{2}\|T(\zeta)-T_0(\zeta)\|^2
            -\frac{1}{2\gamma}\|T(\zeta)-T_k(\zeta)\|^2
        \right].
        \label{eq:flow-ot}
    \end{align}
    Setting $\gamma=\infty$ removes the proximal term and recovers the direct inner maximization in Algorithm~\ref{alg-subgdmax}; finite $\gamma$ gives the proximal update used by the alternating scheme.

    A common implementation is the direct-sampling specialization of this formulation. When samples can be drawn directly from $\bP$, one may take $\nu=\bP$ and $T_0=I_d$. Then the base sample $\zeta$ above is a reference sample, written as $\xi_i^0\sim\bP$ in a finite batch $i = 1, 2\ldots, n$. The current map gives $\xi_i^k=T_k(\xi_i^0)$, and the empirical inner problem chooses updated samples $\xi_i$ by maximizing
    \[
        \max_{\xi_1,\ldots,\xi_n}
        \frac{1}{n}\sum_{i=1}^n
        \left[
            \ell(f_{\phi_k},\xi_i)
            -\frac{\lambda}{2}\|\xi_i-\xi_i^0\|^2
            -\frac{1}{2\gamma}\|\xi_i-\xi_i^k\|^2
        \right].
    \]
    This empirical problem leads to two implementation families. Particle methods optimize the updated samples $\xi_i$ directly and provide finite-sample baselines for the direct inner maximization, especially in the elimination case $\gamma=\infty$. Neural transport maps instead impose the shared form $\xi_i=T_\theta(\xi_i^0)$, which amortizes the update across samples and can be applied immediately to newly drawn data. This is particularly useful for the alternating scheme, where the proximal term can be evaluated on new samples by comparing $T_\theta(\xi_i^0)$ with $T_k(\xi_i^0)$.

    For supervised learning, a sample can be written as $\xi=(x,y)$. The transport may keep the label fixed and move only the feature vector, so the quadratic costs are computed in feature space. If $\xi_i^0=(x_i,y_i)$, we write the updated sample as $\xi_i=(z_i,y_i)$. Particle methods optimize the feature vectors $z_i$ directly, while neural maps can use the label-conditioned form $z_i=T_{\theta,y_i}(x_i)$.

    \section{Numerical Experiments}\label{sec:numerical}

    We now evaluate the implementation choices above on two tasks. The two-dimensional Gaussian mixture isolates the inner $\Q$-update and visualizes the transported samples, while the CIFAR-10 feature-space experiment tests the same inner maximization in a higher-dimensional representation. The comparisons focus on the effect of the JKO proximal term for neural maps and, in the CIFAR-10 experiment, on whether neural maps are better used directly or as initializers for particle L-BFGS.
The cross-entropy losses used below satisfy the assumptions in the theoretical analysis under the boundedness and smoothness conditions induced by the compact parameter/feature domains and the smooth SiLU networks. Together with the Wasserstein-2 regularizer, whose a.g.g. strong convexity is established, this verifies the additional assumptions used by the JKO and alternating schemes.
The synthetic experiment verifies that neural transport maps can qualitatively match particle-based least-favorable movements while preserving the fast inference advantage of a learned map. The CIFAR-10 experiment tests the same mechanism in a higher-dimensional setting and shows that neural maps are useful as standalone solvers, while being particularly effective as initializers for particle methods.

    \subsection{Two-Dimensional Gaussian Mixture}
    \paragraph{Setup.} We instantiate the Wasserstein-2 DRO problem on a binary two-dimensional classification task. Class 0 follows $\mathbb{P}(x|{y=0})=\mathcal{N}(0,I_2)$, and class 1 follows a 25-component Gaussian mixture $\mathbb{P}(x|{y=1})$ with means on $\{-10,-5,0,5,10\}^2$ and covariance $0.1I_2$. For $\xi=(x,y)$, $y\in\{0,1\}$, we use the binary cross-entropy $\ell(f_\phi,(x,y))=-\log\frac{\exp(f_{\phi,y}(x))}{\exp(f_{\phi,0}(x))+\exp(f_{\phi,1}(x))}$.
    For numerical illustrations, we consider the example where only one class sample is perturbed.
    The inner maximization is applied only to class-0 samples; class-1 samples remain fixed. Thus below $T_\theta$ denotes $T_{\theta,0}$. At outer iteration $k$, we draw class-0 samples $\{x_i\}_{i=1}^n$ and class-1 samples $\{x'_j\}_{j=1}^n$. We denote the moved class-0 points by $z_i$: particle methods optimize $z_i$ directly, while neural-map methods set $z_i=T_\theta(x_i)$. After the inner update, the classifier is updated using $\widehat{\mathbb{Q}}_k = \frac{1}{2n}\sum_{i=1}^{n}\delta_{(z_i,0)} + \frac{1}{2n}\sum_{j=1}^{n}\delta_{(x'_j,1)}$. The corresponding empirical minimax objective is
    \[
        \widehat{\mathcal{H}}_k
        =
        \frac{1}{2n}\sum_{i=1}^{n}\ell(f_{\phi_k},(z_i,0))
        +
        \frac{1}{2n}\sum_{j=1}^{n}\ell(f_{\phi_k},(x'_j,1))
        -
        \frac{\lambda}{4n}\sum_{i=1}^{n}\|z_i-x_i\|^2 .
    \]
    The first two terms are the empirical classifier loss.
    For the inner stationarity metric, define the particle objective $h_i(z;\phi_k):=\ell(f_{\phi_k},(z,0))-\frac{\lambda}{2}\|z-x_i\|^2$.
    For a fixed batch, the particle inner problem is $\max_{z_1,\ldots,z_n}\sum_i h_i(z_i;\phi_k)$. The full setup is given in Appendix Table~\ref{tab:gm-exp-details}.

    \paragraph{Methods.}
    We compare particle updates with neural transport maps. The particle baselines are \textbf{L-BFGS}, implemented as one batched solve over the stacked vector $(z_1,\ldots,z_n)$, and \textbf{GD}, which follows the rescaled update of \cite{sinha2018certifying} by minimizing $-\lambda^{-1}\ell(f_\phi,(z,y))+\frac{1}{2}\|z-x\|^2$ with diminishing steps and the gradient warm start from \cite{sinha2018certifying}. We also report an unrescaled GD variant in Figure~\ref{fig:gm-gd-comparison}. The neural methods use residual maps $T_\theta(x)=x+R_\theta(x)$. \textbf{Elim-$B$} performs $B\in\{5,10,15\}$ optimizer steps on
    \[
        \max_\theta
        \frac{1}{n}\sum_{i=1}^{n}
        \left[
        \ell(f_{\phi_k},(T_\theta(x_i),0))
        -
        \frac{\lambda}{2}\|T_\theta(x_i)-x_i\|^2
        \right],
    \]
    which corresponds to \eqref{eq:flow-ot} with $\gamma=\infty$. \textbf{Alt-$B$} uses the same $B$ values and adds the JKO proximal term $-\frac{1}{2\gamma}\|T_\theta(x_i)-T_{\theta_k}(x_i)\|^2$ with $\gamma=5.0$. All neural maps are warm-started from the previous outer iteration.

    \begin{figure}[H]
        \centering
        \includegraphics[width=\columnwidth]{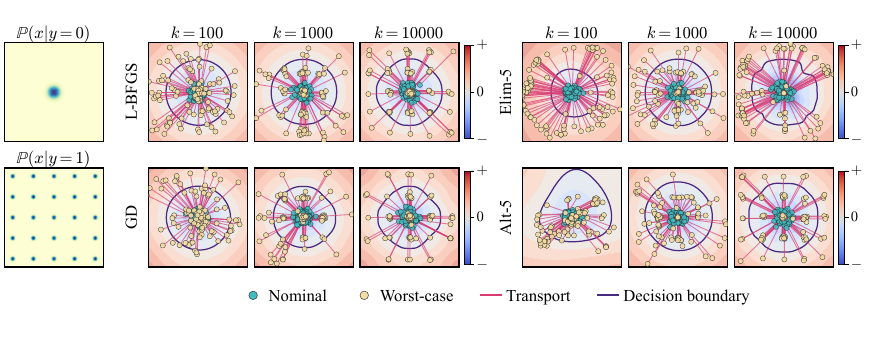}
        \caption{\textbf{Worst-case sample generation on the two-dimensional Gaussian mixture task.} The left panels show the class-0 conditional distribution $\mathbb{P}(x|{y=0})$ and the class-1 conditional mixture $\mathbb{P}(x|{y=1})$. The remaining panels compare L-BFGS, GD, Elim-5, and Alt-5 for $\lambda=2\times10^{-2}$ at outer iterations $k\in\{100,1000,10000\}$. Cyan points are original class-0 samples $x_i$, yellow points are moved samples $z_i$, and magenta segments connect each displayed pair $(x_i,z_i)$. The background color shows the classifier logit difference $f_{\phi,1}(x)-f_{\phi,0}(x)$, and the purple zero contour is the decision boundary.}\label{fig:synthetic_results}
    \end{figure}

    \paragraph{Metrics.}
    We report two convergence metrics: the classifier gradient norm $\|\nabla_\phi\widehat{\mathcal{H}}\|$, computed before gradient clipping, and the RMS particle-gradient norm
    \[
        \|\nabla_z h\|_{\mathrm{RMS}}
        :=
        \left(
        \frac{1}{n}\sum_{i=1}^{n}
        \|\nabla_z h_i(z_i;\phi_k)\|^2
        \right)^{1/2},
    \]
    which measures stationarity of the finite-particle inner problem.

    \begin{figure}[H]
        \centering
        \includegraphics[width=0.86\textwidth]{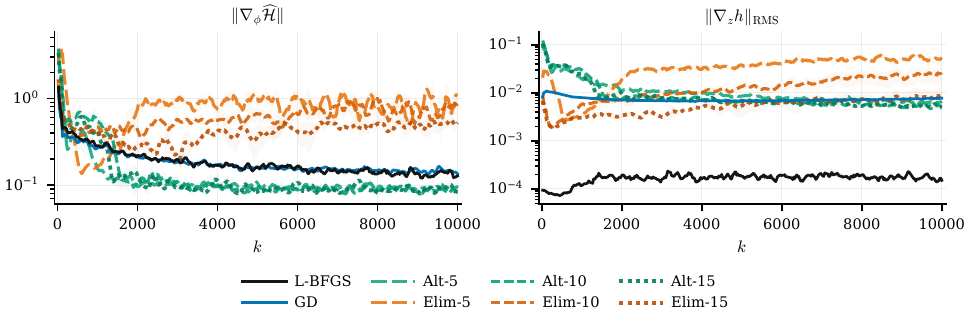}
        \caption{\textbf{Convergence curves on the two-dimensional Gaussian mixture task.} We plot the classifier gradient norm $\|\nabla_\phi\widehat{\mathcal{H}}\|$ and the RMS particle-gradient norm $\|\nabla_z h\|_{\mathrm{RMS}}$ over outer iteration $k$. Curves show means over three seeds, with shaded standard errors.}\label{fig:gm-main-convergence}
    \end{figure}

    \paragraph{Results.}
    The two-dimensional experiment supports the first claim: the JKO proximal term stabilizes the neural $\Q$-update. Figure~\ref{fig:synthetic_results} shows the moved samples: the particle methods and Alt-5 move class-0 samples toward high-loss regions near the class-1 modes, while Elim-5 is less aligned with the particle updates. Figure~\ref{fig:gm-main-convergence} shows the same pattern in the convergence metrics. With the same neural-map update budget, Alt-$B$ ends with smaller classifier gradients than Elim-$B$ ($0.085$--$0.107$ versus $0.53$--$1.61$) and smaller $\|\nabla_z h\|_{\mathrm{RMS}}$ ($5.0\times10^{-3}$--$6.7\times10^{-3}$ versus $1.0\times10^{-2}$--$4.9\times10^{-2}$). Additional loss and objective curves are given in Figure~\ref{fig:gm-objective-reference}.

    \subsection{CIFAR-10 Image Classification}
    \paragraph{Setup.} We next consider CIFAR-10 feature vectors. Images are encoded by a fixed ViT-base model, and the classifier is trained in the resulting feature space. For $\xi=(x,y)$ with $y\in\{0,\ldots,9\}$, we use the multiclass cross-entropy $\ell(f_\phi,(x,y))=-\log\frac{\exp(f_{\phi,y}(x))}{\sum_{c=0}^{9}\exp(f_{\phi,c}(x))}$.
    For a mini-batch $\{(x_i,y_i)\}_{i=1}^{n}$, the particle objective is
    \[
        \max_{z_1,\ldots,z_n}
        \frac{1}{n}\sum_{i=1}^{n}
        \left[
        \ell(f_{\phi_k},(z_i,y_i))
        -
        \frac{\lambda}{2}\|z_i-x_i\|^2
        \right].
    \]

    \paragraph{Methods.}
    We compare ERM, GD, L-BFGS, Neural, and Neural+L-BFGS. GD and L-BFGS optimize the particles, i.e., feature vectors, directly. Neural uses a label-conditioned transport map $T_{\theta,y_i}(x_i)$. Neural+L-BFGS uses this map output, $T_{\theta,y_i}(x_i)$, to initialize the L-BFGS solve. We sweep $\lambda\in\{10^{-3},10^{-2},10^{-1},1\}$ over five seeds. Robustness is evaluated by 50-step $\ell_2$ PGD in feature space with budget $0.2$ times the average feature norm. The full setup is given in Appendix Table~\ref{tab:cifar10-exp-details}.

    \paragraph{Metrics.}
    We report clean test accuracy, feature-space PGD-50 accuracy, and CIFAR-10-C \cite{hendrycks2018benchmarking} mean accuracy. The CIFAR-10-C mean averages the Weather, Blur, Noise, and Digital corruption groups. For computational cost, we report L-BFGS iterations and function evaluations. An L-BFGS iteration is one accepted quasi-Newton update of the stacked particles, while a function evaluation is one batched objective and gradient evaluation. These counts differ because line search can evaluate several candidate step lengths before accepting an update.

    \begin{figure}[H]
        \centering
        \includegraphics[width=0.72\textwidth]{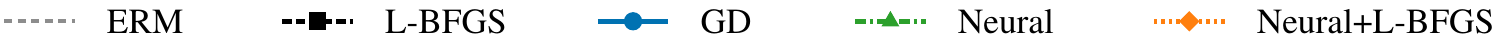}
        \vspace{0.15em}

        \begin{subfigure}{0.31\textwidth}
            \centering
            \includegraphics[width=\linewidth]{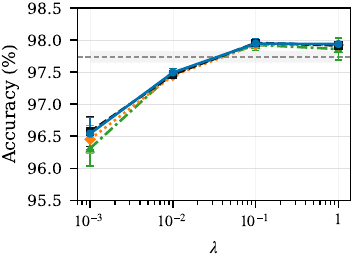}
            \caption{Clean}
        \end{subfigure}\hfill
        \begin{subfigure}{0.31\textwidth}
            \centering
            \includegraphics[width=\linewidth]{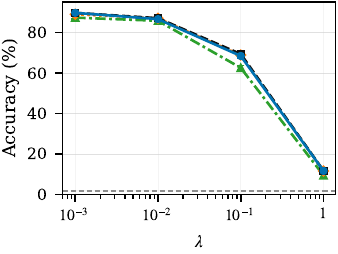}
            \caption{PGD-50}
        \end{subfigure}\hfill
        \begin{subfigure}{0.31\textwidth}
            \centering
            \includegraphics[width=\linewidth]{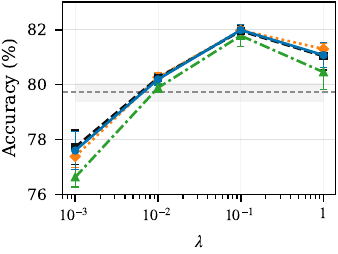}
            \caption{CIFAR-10-C}
        \end{subfigure}
        \caption{\textbf{CIFAR-10 feature-space robustness for different $\lambda$.} Markers show mean accuracy and error bars show standard deviation over five seeds. ERM is trained by empirical risk minimization on the original samples and is independent of $\lambda$.}\label{fig:cifar10-accuracy-sweep}
    \end{figure}

    \begin{figure}[H]
        \centering
        \includegraphics[width=0.72\textwidth]{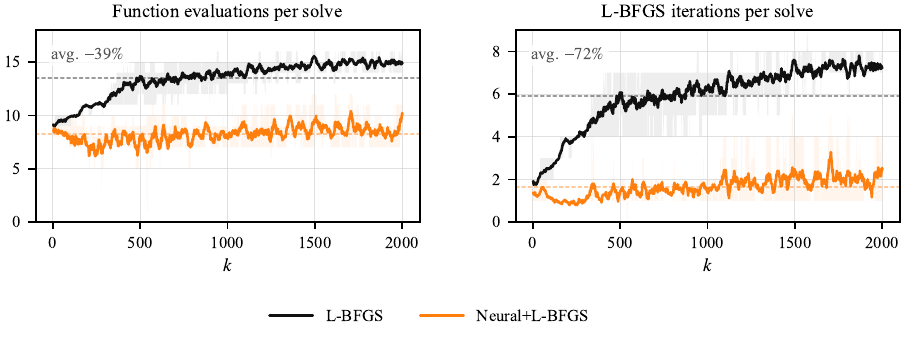}
        \caption{\textbf{Cost of L-BFGS particle optimization on CIFAR-10.} For $\lambda=10^{-3}$, curves show the number of function evaluations and accepted L-BFGS iterations per inner solve over training. Dashed horizontal lines mark averages over inner solves, and annotations report the average reduction from neural initialization.}\label{fig:cifar10-lbfgs-efficiency}
    \end{figure}

    \paragraph{Results.}
    Figure~\ref{fig:cifar10-accuracy-sweep} shows the same pattern on both robustness benchmarks. Particle GD and L-BFGS give the strongest direct inner updates, while Neural alone is consistently weaker in the high-dimensional feature space. Using the neural map as an initialization largely closes this gap: Neural+L-BFGS is within $0.7$ percentage points of GD and L-BFGS on PGD-50 and is essentially tied with them on CIFAR-10-C at the best $\lambda$ values. Clean accuracy remains near $98\%$, so the robustness improvement does not come from sacrificing nominal performance. Figure~\ref{fig:cifar10-lbfgs-efficiency} further shows that the neural initialization reduces both L-BFGS iterations and function evaluations under the same stopping criteria. Overall, these results suggest that neural maps are most useful here as amortized warm starts for particle optimization, rather than as standalone replacements for direct particle updates.

    \section{Conclusions}
    \label{sec:conclusion}
    We have investigated iterative algorithms for solving the DRO problem from the
    perspective of optimization in Wasserstein space. Unlike prior works that analyze dynamics in continuous time, we adopt a viewpoint based on transforming probability distributions via iterative
    steps. In addition to being a robust classifier, the solution derived from our algorithm
    can also be utilized to efficiently generate samples from the LFD. A future
    direction involves extending our framework to the general nonlinear DRO problems \cite{sheriff2024nonlinear}, which are receiving increasing attention due to their relevance in real-world applications.

\section*{Acknowledgment}

This work is partially supported by an NSF CAREER CCF-1650913, NSF DMS-2134037, CMMI-2015787, CMMI-2112533, DMS-1938106, DMS-1830210, ONR N000142412278, and the Coca-Cola Foundation. The authors thank Professors Xiuyuan Cheng and Johannes Milz for helpful discussions.


    \bibliography{references}
    \bibliographystyle{alpha}

    \newpage
    \appendix

    \section{Definitions and Basic Facts}
    \label{sec:prelim}
    Let us first introduce weakly convex functions, the class of nonconvex functions studied in this
    paper.
    \begin{definition}[Weak convexity]
        \label{defi:weak-convex} A function $h:\R^{d}\rightarrow \R$ is said to be
        $\rho$-weakly convex on an arbitrary set $\mathcal{X}\subseteq \R^{d}$ for
        some constant $\rho \ge 0$ if for any $x, y \in \mathcal{X}$ and $\tau \in
        [0,1]$, we have
        \[
            h(\tau x+(1-\tau) y) \leq \tau h(x)+(1-\tau) h(y)+\frac{\rho \tau(1-\tau)}{2}
            \|x-y\|^{2}.
        \]
        The above definition is equivalent to the convexity of the function
        $h(\cdot) + \tfrac{\rho}{2}\| \cdot \|^{2}$ on $\mathcal{X}$.
    \end{definition}

    Next, we will introduce the discrepancy functions considered in the paper.
    There are two natural ways to model changes in distributions to build
    distributional ambiguity sets. The first way is to assume that the likelihood
    of the baseline model (often uniformly distributed over an observed data set)
    is corrupted. In this case, we may account for model misspecification using likelihood
    ratios. This leads to $\varphi$-divergence-based uncertainty sets. The second
    way is to consider integral probability metrics, e.g., Wasserstein distance,
    to account for potential corruptions in the baseline model by considering perturbations
    in the actual outcomes.

    \begin{definition}[Relative entropy]
        Let $\mathbb{P}, \Q$ be Borel probability measures on $\mathcal{X}$.
        Then the relative entropy of $\Q$ with respect to $\mathbb{P}$ is
        defined as
        \[
            \mathcal{D}_{\varphi}(\Q, \mathbb{P}):=
            \begin{cases}
                \int_{\mathcal{X}}\varphi\left(\frac{{\operatorname{d\Q}}}{\operatorname{d\mathbb{P}}}\right) \operatorname{d\mathbb{P}} & \text{ if }\Q \ll \mathbb{P} \\
                +\infty                                                                                                                  & \text{ otherwise. }
            \end{cases}
        \]
        where $\varphi$ is a nonnegative, lower semi-continuous, extend-valued, strictly
        convex function. Specifically, it becomes KL divergence when
        \[
            \varphi(s):=
            \begin{cases}
                s(\log s-1)+1 & \text{ if }s>0  \\
                1             & \text{ if }s=0  \\
                +\infty       & \text{ if }s<0.
            \end{cases}
        \]
    \end{definition}

    \begin{definition}[Wasserstein distance]
        The Wasserstein distance of order 2 ($\mathcal{W}_{2}$-distance) between
        two distributions in $\mathcal{P}_{2}$ is defined by
        \[
            \mathcal{W}_{2}^{2}(\bP, \Q):=\inf_{\pi \in \Pi(\bP, \Q)}\int_{\mathbb{R}^d
            \times \mathbb{R}^d}\|x-y\|^{2}\operatorname{d \pi}(x, y),
        \]
        where $\Pi(\bP, \Q)$ denotes the family of all joint couplings with $\bP,
        \Q \in \mathcal{P}_{2}$ as marginal distributions.
    \end{definition}

    When $\bP \in \mathcal{P}_{2}^{r}$, Brenier's theorem
    \cite{brenier1991polar} guarantees a well-defined and unique optimal
    transport (OT) map from $\bP$ to any $\Q \in \mathcal{P}_{2}$. Specifically,
    let $T_{\bP}^{\Q}$ denote the OT map from $\bP$ to $\Q$, which is defined $\bP$-almost
    everywhere and satisfies $(T_{\bP}^{\Q})_{\#}\bP = \Q$. For any map
    $T: \mathbb{R}^{d}\to \mathbb{R}^{d}$ such that $T_{\#}\bP = \Q$, the measure
    $(I_{d}, T)_{\#}\bP$ forms a coupling of $\bP$ and $\Q$. Consequently, we know
    that $\mathcal{W}_{2}^{2}(\Q, \bP) \leq \mathbb{E}_{\xi \sim \bP}\|\xi - T(\xi
    )\|^{2}$. Minimizing the right-hand side over all $T$ that push forward
    $\bP$ to $\Q$ is known as the Monge problem. By Brenier's theorem, when
    $\bP \in \mathcal{P}_{2}^{r}$, the OT map $T_{\bP}^{\Q}$ solves the Monge problem,
    leading to the following important identity for the Wasserstein distance in
    this case:
    \[
        \mathcal{W}_{2}^{2}(\Q, \bP) = \mathbb{E}_{\xi \sim \bP}\|\xi - T_{\bP}^{\Q}
        (\xi)\|^{2}.
    \]

    Finally, because our approach involves optimization over a probability space
    equipped with the Wasserstein metric, it is natural to introduce first-order
    concepts such as subdifferentials. Consider a proper, lower semi-continuous functional
    $\mathcal{V}: \mathcal{P}_{2}\to (-\infty, \infty]$, and let its domain be
    $\operatorname{dom}(\mathcal{V}) := \{\mu \in \mathcal{P}_{2}: \mathcal{V}(\mu
    ) < \infty\}.$
    We adopt the Fr\'echet subdifferential of $\mathcal{V}$ from \cite[Definition
    10.1.1]{ambrosio2008gradient}, focusing on its strong form. Under this definition,
    the Wasserstein subgradients provide linear approximations even if the
    perturbations do not arise from the optimal transport \cite[Proposition
    2.12]{lanzetti2025first}.

    \begin{definition}[Strong Wasserstein subdifferential]
        Given $\mathbb{P}\in \mathcal{P}_{2}$, a vector field
        $\Gamma \in L^{2}(\mathbb{P})$ is a strong Fr\'{e}chet subgradient of $\mathcal{V}$
        at $\mathbb{P}$ if for $\Delta \in L^{2}(\mathbb{P})$,
        \[
            \mathcal{V}((I_{d}+\Delta)_{\#}\mathbb{P})-\mathcal{V}(\mathbb{P}) \geq
            \langle\Gamma, \Delta\rangle_{\mathbb{P}}+o(\|\Delta\|_{\mathbb{P}})
            .
        \]
        The set $\partial_{\mathcal{W}_2}\mathcal{V}(\mathbb{P})$ including all
        strong Fr\'{e}chet subgradients is called the strong Fr\'{e}chet
        subdifferential of $\mathcal{V}$ at $\mathbb{P}$.
    \end{definition}

\section{Proofs for Section \ref{sec:alg}}

    \subsection*{Proof of Lemma \ref{lem:V-wcvx}}
        Since for each $\xi\in\R^d$ the function $\ell(f_{(\cdot)}, \xi)$
        is $\rho$-weakly convex on $\Phi$ by Assumption \ref{f_assump}, we can
        directly check by definition that the function
        $\mathbb{E}_{\xi \sim \mathbb{Q}}[\ell(f_{(\cdot)}, \xi)]=\int_{\R^d}
        \ell(f_{(\cdot)}, \xi) \operatorname{d \Q}(\xi)$
        is a $\rho$-weakly convex function on $\Phi$ for each
        $\mathbb{Q}\in \mathcal{P}_{2}$. Then we know that $\mathcal{H}(\cdot,\Q)$
        is $\rho$-weakly convex on $\Phi$, which implies for any
        $\phi,\psi\in\Phi$ and $\alpha \in[0,1]$ that
        \[
            \begin{aligned}
                       & \mathcal{H}(\alpha \phi+(1-\alpha) \psi,\Q)                                                                                             \\
                \leq\  & \alpha \mathcal{H}(\phi,\Q)+(1-\alpha) \mathcal{H}(\psi,\Q)+\frac{\rho \alpha(1-\alpha)}{2}\|\phi-\psi\|^{2}                            \\
                \leq\  & \max_{\Q'}\left\{\alpha \mathcal{H}(\phi,\Q')+(1-\alpha) \mathcal{H}(\psi,\Q')+\frac{\rho \alpha(1-\alpha)}{2}\|\phi-\psi\|^{2}\right\} \\
                \leq\  & \alpha\max_{\Q'}\mathcal{H}(\phi,\Q')+(1-\alpha) \max_{\Q'}\mathcal{H}(\psi,\Q')+\frac{\rho\alpha(1-\alpha)}{2}\|\phi-\psi\|^{2}        \\
                =\     & \alpha \mathcal{V}(\phi)+(1-\alpha) \mathcal{V}(\psi)+\frac{\rho\alpha(1-\alpha)}{2}\|\phi-\psi\|^{2},
            \end{aligned}
        \]
        where the first inequality is by the definition of the weakly convex
        function. Taking the supremum of the left hand side over $\Q$ gives the desired
        result:
        \[
            \mathcal{V}(\alpha \phi+(1-\alpha) \psi) \leq \alpha \mathcal{V}(\phi
            )+(1-\alpha) \mathcal{V}(\psi)+\frac{\rho\alpha(1-\alpha)}{2}\|\phi-\psi
            \|^{2}.
        \]
        The proof is complete.

    \subsection*{Proof of Theorem \ref{thm:subgconvergence}}
        From the proof of Lemma~\ref{lem:V-wcvx}, we know that $\mathcal{H}(\cdot
        ,\Q)$ is $\rho$-weakly convex. Furthermore, since $\Q_{k}$ is an $\epsilon$-approximate
        maximizer of $\max_{\Q \in \mathcal{P}_2}\mathcal{H}(\phi_{k},\Q)$, it
        follows for any $\phi \in \Phi$ and
        $\zeta(\phi_{k},\Q_{k})\in \partial_{\phi}\mathbb{E}_{\xi \sim \Q_k}[\ell
        (f_{(\cdot)},\xi)](\phi_{k})$
        that
        \begin{equation}
            \label{eq:max-property}
            \begin{aligned}
                \mathcal{V}(\phi) \geq \mathcal{H}(\phi, \Q_{k}) & =\mathbb{E}_{\xi \sim \mathbb{Q}_k}[\ell(f_{\phi}, \xi)]-\lambda\cdot \mathcal{D}(\Q_{k},\bP)                                                                                                             \\
                                                                 & \geq \mathbb{E}_{\xi \sim \mathbb{Q}_k}[\ell(f_{\phi_k}, \xi)+\langle\partial_{\phi}\ell(f_{\phi_k}, \xi), \phi-\phi_{k}\rangle]-\frac{\rho}{2}\|\phi-\phi_{k}\|^{2}-\lambda\cdot \mathcal{D}(\Q_{k},\bP) \\
                                                                 & = \mathcal{H}(\phi_{k}, \Q_{k})+\langle\zeta(\phi_{k},\Q_{k}), \phi-\phi_{k}\rangle-\frac{\rho}{2}\|\phi-\phi_{k}\|^{2}                                                                                   \\
                                                                 & \geq \mathcal{V}(\phi_{k})-\epsilon+\langle\zeta(\phi_{k},\Q_{k}), \phi-\phi_{k}\rangle-\frac{\rho}{2}\|\phi-\phi_{k}\|^{2},
            \end{aligned}
        \end{equation}
        where the first inequality is from the $\rho$-weakly convexity of the
        function $\ell(f_{(\cdot)}, \xi)$. On the other hand, we know from the iterate of Algorithm
        \ref{alg-subgdmax} that
        \begin{equation}
            \label{eq:sub-key00}
            \begin{aligned}
                \|\phi_{k+1}-\prox_{\mathcal{V}/2\rho}(\phi_{k})\|^{2} & = \|\proj_{\Phi}(\phi_{k}-\eta_k\cdot\zeta(\phi_{k},\Q_{k}))-\prox_{\mathcal{V}/2\rho}(\phi_{k})\|^{2}                                                              \\
                & \leq \|\phi_{k}-\eta_k\cdot\zeta(\phi_{k},\Q_{k})-\prox_{\mathcal{V}/2\rho}(\phi_{k})\|^{2}                                                                         \\
                & \leq \|\phi_{k}-\prox_{\mathcal{V}/2\rho}(\phi_{k})\|^{2}+2 \eta_k\langle\zeta(\phi_{k},\Q_{k}), \prox_{\mathcal{V}/2\rho}(\phi_{k})-\phi_{k}\rangle+\eta_k^{2}L^{2}, 
            \end{aligned}
        \end{equation}
        where the first inequality is from the nonexpansiveness of projection
        onto convex sets, and the second one is from $\|\zeta(\phi_{k},\Q_{k})\|\leq
        L$ by $L$-Lipschitz continuity of $\ell(f_{(\cdot)}, \xi)$ on $\Phi$ for
        any $\xi\in\R^d$. Combining \eqref{eq:sub-key00} and \eqref{eq:max-property}
        with $\phi=\prox_{\mathcal{V}/2\rho}(\phi_{k})$ indicates that
        \begin{equation}
            \label{eq:sub-key1}
            \begin{aligned}
                \mathcal{V}_{1/2\rho}(\phi_{k+1}) & \leq \mathcal{V}(\prox_{\mathcal{V}/2\rho}(\phi_{k}))+\rho\|\phi_{k+1}-\prox_{\mathcal{V}/2\rho}(\phi_{k})\|^{2}    \\
                                                  & \leq \mathcal{V}_{1/2\rho}(\phi_{k})+2 \eta_k \rho\langle\zeta(\phi_{k},\Q_{k}), \prox_{\mathcal{V}/2\rho}(\phi_{k})-\phi_{k}\rangle+\eta_k^{2}\rho L^{2}      \\
                                                  & \leq \mathcal{V}_{1/2\rho}(\phi_{k})+2 \eta_k \rho\left(\mathcal{V}(\prox_{\mathcal{V}/2\rho}(\phi_{k}))-\mathcal{V}(\phi_{k})+\epsilon+\frac{\rho}{2}\|\phi_{k}-\prox_{\mathcal{V}/2\rho}(\phi_{k})\|^{2}\right)+\eta_k^{2}\rho L^{2},
            \end{aligned}
        \end{equation}
        Taking the sum over $k=0,\ldots,K-1$ of \eqref{eq:sub-key1} we obtain
        \begin{align*}
              \mathcal{V}_{1/2\rho}(\phi_{K}) -\mathcal{V}_{1/2\rho}(\phi_{0}) 
             \leq\ &2  \rho \sum_{k=0}^{K-1}\eta_k\left(\mathcal{V}(\prox_{\mathcal{V}/2\rho}(\phi_{k}))-\mathcal{V}(\phi_{k})+\epsilon+\frac{\rho}{2}\|\phi_{k}-\prox_{\mathcal{V}/2\rho}(\phi_{k})\|^{2}\right)\\
             &+\rho L^{2}\sum_{k=0}^{K-1}\eta_k^2,
        \end{align*}
        which implies that
        \begin{equation}
            \label{eq:key-convergence}
            \begin{aligned}
            &\frac{1}{\sum_{k=0}^{K-1}\eta_k}\sum_{k=0}^{K-1}\eta_k\left(\mathcal{V}
            (\phi_{k})-\mathcal{V}(\prox_{\mathcal{V}/2\rho}(\phi_{k}))-\frac{\rho}{2}
            \|\phi_{k}-\prox_{\mathcal{V}/2\rho}(\phi_{k})\|^{2}\right) \\
            \leq\ & \frac{\mathcal{V}_{1/2\rho}(\phi_{0})-\min_{\phi\in\Phi}\mathcal{V}(\phi)}{2
             \rho \sum_{k=0}^{K-1}\eta_k}+\frac{ L^{2}\sum_{k=0}^{K-1}\eta_k^2}{2\sum_{k=0}^{K-1}\eta_k}+\epsilon.
             \end{aligned}
        \end{equation}
        Moreover, since $\mathcal{V}(\cdot)+\rho\|\cdot-\phi_{k}\|^{2}$ is
        $\rho$-strongly convex, we have
        \[
            \begin{aligned}
                       & \mathcal{V}(\phi_{k})-\mathcal{V}(\prox_{\mathcal{V}/2\rho}(\phi_{k}))-\frac{\rho}{2}\|\phi_{k}-\prox_{\mathcal{V}/2\rho}(\phi_{k})\|^{2}                                                                                        \\
                =\     & \mathcal{V}(\phi_{k})+\rho\|\phi_{k}-\phi_{k}\|^{2}-\mathcal{V}(\prox_{\mathcal{V}/2\rho}(\phi_{k}))-\rho\|\prox_{\mathcal{V}/2\rho}(\phi_{k})-\phi_{k}\|^{2}+\frac{\rho}{2}\|\phi_{k}-\prox_{\mathcal{V}/2\rho}(\phi_{k})\|^{2} \\
                =\     & \mathcal{V}(\phi_{k})+\rho\|\phi_{k}-\phi_{k}\|^{2}-\min_{\phi\in\Phi}\left\{ \mathcal{V}(\phi)+\rho\|\phi-\phi_{k}\|^{2}\right\}+\frac{\rho}{2}\|\phi_{k}-\prox_{\mathcal{V}/2\rho}(\phi_{k})\|^{2}                             \\
                \geq\  & \rho\|\phi_{k}-\prox_{\mathcal{V}/2\rho}(\phi_{k})\|^{2}=\frac{1}{4 \rho}\|\nabla \mathcal{V}_{1/2\rho}(\phi_{k})\|^{2},
            \end{aligned}
        \]
        where the last equality is from the explicit form of the gradient of the
        Moreau envelope \cite[Lemma 2.2]{davis2019stochastic} that $\nabla \mathcal{V}_{1/2\rho}
        (\phi)=2\rho(\phi-\prox_{\frac{1}{2\rho}\mathcal{V}+\iota_{\Phi}}(\phi))$.
        Plugging this in \eqref{eq:key-convergence}, it implies that there
        exists a $k'\in\{0,1,\ldots,K-1\}$ such that
        \[
            \begin{aligned}
                \|\nabla \mathcal{V}_{1/2\rho}(\phi_{k'})\|^{2}
                &\leq\frac{1}{\sum_{k=0}^{K-1}\eta_k}\sum_{k=0}^{K-1}\eta_k\|\nabla \mathcal{V}_{1/2\rho}(\phi_{k})\|^{2} \\
                & \leq \frac{4\rho}{\sum_{k=0}^{K-1}\eta_k}\sum_{k=0}^{K-1}\eta_k\left(\mathcal{V}(\phi_{k})-\mathcal{V}(\prox_{\mathcal{V}/2\rho}(\phi_{k}))-\frac{\rho}{2}\|\phi_{k}-\prox_{\mathcal{V}/2\rho}(\phi_{k})\|^{2}\right)                                           \\
                                                               & \leq \frac{2(\mathcal{V}_{1/2\rho}(\phi_0)-\min_{\phi\in\Phi} \mathcal{V}(\phi))+2 \rho L^{2}\sum_{k=0}^{K-1}\eta_k^2}{ \sum_{k=0}^{K-1}\eta_k}+4\rho\epsilon\\
                                                               &= \frac{2(\mathcal{V}_{1/2\rho}(\phi_0)-\min_\phi \mathcal{V}(\phi)+\rho L^2) }{\sqrt{K}}+4 \rho \epsilon,
            \end{aligned}
        \]
        where the equality is derived from $\eta_k=1 / \sqrt{K}$. The proof is complete.

\section{Proofs for Section \ref{sec:LFD}}

\subsection*{Proof of Proposition \ref{prop:firstorder}}
Denote $F_{i+1}$ the mapping such that $T_{i+1}= F_{i+1}\circ T_{i}$ since $T_{i}$ is non-degenerate. Then we know that 
    \begin{align*}
       &\max_{T\in L^{2}(\nu)}\left\{\mathbb{E}_{\xi \sim
            \nu}\left[\ell(f_{\phi}, T(\xi))-\frac{1}{2\gamma}\|T
            (\xi)-T_{i}(\xi)\|^{2}\right]-\lambda\cdot \mathcal{D}(T
            _{\#}\nu,\bP)\right\}\\
       =\ &\max_{F\in L^2(\Q_i)}\Bigg\{ \mathbb{E}_{\xi \sim \nu}\left[  \ell(f_{\phi}, F\circ T_{i}(\xi))\right] -\lambda\cdot \mathcal{D}((F\circ T_{i}
            )_{\#}\nu,\bP)- \frac{1}{2\gamma}\mathbb{E}_{\xi \sim \nu}\left[\|F\circ T_{i}(\xi) - T_{i}(\xi)\|^{2}\right] \Bigg\}\\ 
             =\ &\max_{F\in L^2(\Q_i)}\left\{\mathcal{H}(\phi,(F\circ T_{i})_{\#}\nu)-\frac{1}{2\gamma}\mathbb{E}_{\xi \sim \nu}\left[\|(F-I)\circ T_i(\xi)\|^{2}\right]\right\}\\
            =\ &\max_{F\in L^2(\Q_i)}\left\{\mathcal{H}(\phi,F_{\#}((T_{i})_{\#}\nu))-\frac{1}{2\gamma}\mathbb{E}_{\xi \sim (T_{i})_{\#}\nu}\left[\|F(\xi)-\xi\|^{2}\right]\right\}\\
             =\ &\max_{F\in L^2(\Q_i)}\left\{\mathcal{H}(\phi,F_{\#}\Q_i)-\frac{1}{2\gamma}\mathbb{E}_{\xi \sim \Q_i}\left[\|F(\xi)-\xi\|^{2}\right]\right\}\\
             =\ &\max_{\Q\in\mathcal{P}_2}\left\{\mathcal{H}(\phi,\Q)-\frac{1}{2\gamma}
            \cdot\mathcal{W}_{2}^{2}(\Q, \Q_{i})\right\},
    \end{align*}
where the third equality is from the chain rule for pushforward map and the last one is from Brenier's theorem (cf. \cite[Proposition 1]{xu2024flow}).

Next, from the optimality condition of the problem $\max_{\Q\in\mathcal{P}_2}\{\mathcal{H}(\phi,\Q)-\frac{1}{2\gamma}\cdot\mathcal{W}_{2}^{2}(\Q, \Q_{i})\}$ with the calculus rule in \cite[Proposition 2.15]{lanzetti2025first}, we know that 
\[
                0\in -\partial_{\mathcal{W}_2}\mathcal{H}(\phi, (T_{{i+1}})_{\#}
                \nu))+ \frac{1}{\gamma}(I- T^{i}_{(T_{{i+1}})_{\#}\nu}).
                \]
                Then we have
\begin{align*}
0
&\in -\partial_{\mathcal{W}_2}\mathcal{H}(\phi, (T_{{i+1}})_{\#}\nu)+ \frac{1}{\gamma}(I- T^{i}_{(T_{{i+1}})_{\#}\nu})\\
&= -\partial_{\mathcal{W}_2}\mathcal{H}(\phi, (T_{{i+1}})_{\#}\nu)\circ T_{\nu}^{(T_{{i+1}})_{\#}\nu}+ \frac{1}{\gamma}(T_{\nu}^{(T_{{i+1}})_{\#}\nu}-T^{i}_{(T_{{i+1}})_{\#}\nu}\circ T_{\nu}^{(T_{{i+1}})_{\#}\nu}
                ).
            \end{align*} 
        The proof is complete.

    \subsection*{Proof of Proposition \ref{prop:sc_wass_entropy}}
        The first part about the $1$-strong convexity a.g.g. of the function
        $\frac{1}{2}\mathcal{W}_{2}^{2}(\cdot,\bP)$ is directly from \cite[Corollary
        2.19(v)]{lanzetti2025first} and \cite[Lemma 9.2.7]{ambrosio2008gradient}.
        For the proof of the second part, we know from $\Q \ll \mathbb{P}$ that $\Q$
        has the density $q$ and
        \begin{equation}
            \label{eq:KL_equi}\operatorname{KL}(\Q,\mathbb{P})=\int_{\R^d}
            \frac{\operatorname{d\Q}}{\operatorname{d\mathbb{P}}}\log \left(\frac{\operatorname{d\Q}}{\operatorname{d\mathbb{P}}}
            \right) \operatorname{d\mathbb{P}}=\int_{\R^d}(q\log (q)-q\log
            (p))=c+\int_{\R^d}(q\log (q)+G q),
        \end{equation}
        where $c$ is a constant and the last equality is by $p\propto \exp(-G)$.

        Since $s \mapsto \log(s^{-d})= s^{d}F(s^{-d})$ with $F(s)= s \log s$ is convex
        and non-increasing on $(0,+\infty)$, we know from \cite[Proposition 9.3.9]{ambrosio2008gradient} that $q\mapsto
        \int_{\R^d}F(q)=\int_{\R^d}q\log (q)$ is convex a.g.g. in
        $\mathcal{P}_{2}^{r}$. On the other hand, since $G$ is $1$-strongly
        convex and also proper, lower semi-continuous, bounded from below, by \cite[Proposition
        9.3.2(i)]{ambrosio2008gradient} it follows that $q\mapsto\int_{\R^d}
        G q$ is $1$-convex along any interpolation curve, which implies $1$-strong
        convexity a.g.g. in $\mathcal{P}_{2}^{r}$. Hence, taking these into \eqref{eq:KL_equi}
        we have the conclusion that $\operatorname{KL}(\cdot,\mathbb{P})$ is $1$-convex
        a.g.g. in $\mathcal{P}_{2}^{r}$.

    \subsection*{Proof of Lemma \ref{lem:Haggcvx}}
        By Assumption \ref{f_assump_sc} we know that $\mathbb{E}_{\xi \sim \Q}\left
        [\ell(f_{\phi}, \xi)\right]$ is $\rho$-weakly concave a.g.g. along all generalized geodesics with
        respect to $\Q$ from \cite[Proposition 9.3.2(i)]{ambrosio2008gradient}.
        This together with the fact that $\lambda\mathcal{D}(\cdot,\mathbb{P})$ is
        $\lambda$-strongly convex a.g.g. centered at $\nu$ with the regularization parameter
        $\lambda> \rho$ implies that the function
        $\mathcal{H}(\phi,\cdot)=\mathbb{E}_{\xi \sim (\cdot)}\left[\ell(f_{\phi}
        , \xi)\right]-\lambda\mathcal{D}(\cdot,\bP)$
        is $(\lambda-\rho)$-strongly concave a.g.g. centered
        at $\nu$ in $\mathcal{P}_{2}$ by definition.
        The proof is complete.

    \subsection*{Proof of Proposition \ref{prop:dualppm}}
        The OT map from $\nu \in \mathcal{P}_{2}^{r}$ to $\Q_{i}$ is uniquely defined $\nu$-a.e. by Brenier's theorem. From the calculus rule
        of the Wasserstein subdifferential
        with the assumption on $\xi_{i+1}$ by Assumption \ref{ass:learningerror},
        we have for each $i$ that there exists
        $\eta_{i+1}\in \partial_{\mathcal{W}_2}(-\mathcal{H}(\phi,\cdot))(\Q_{i+1}
        )$
        such that
        \begin{equation}
            \label{eq:sub_bridge}\gamma (\xi_{i+1}- \eta_{i+1}\circ T_{\nu}^{i+1})+T^{i}_{i+1}\circ T_{\nu}^{i+1}-T_{\nu}^i
            =T_{\nu}^{i+1}-T_{\nu}^{i}, \quad \nu \text{-a.e. }
        \end{equation}
        Let $T_{\nu}^{*}$ denote the unique OT map from $\nu$ to $\Q^{*}$. Then
        \[
            \begin{aligned}
                \|T_{\nu}^{i+1}-T_{\nu}^{*}\|_{\nu}^{2} & =\|T_{\nu}^{i+1}-T_{\nu}^{i}+T_{\nu}^{i}-T_{\nu}^{*}\|_{\nu}^{2}                                                                                        \\
                                                        & =\|T_{\nu}^{i}-T_{\nu}^{*}\|_{\nu}^{2}+2\langle T_{\nu}^{i}-T_{\nu}^{*}, T_{\nu}^{i+1}-T_{\nu}^{i}\rangle_{\nu}+\|T_{\nu}^{i+1}-T_{\nu}^{i}\|_{\nu}^{2} \\
                                                        & \leq\|T_{\nu}^{i}-T_{\nu}^{*}\|_{\nu}^{2}+2\langle T_{\nu}^{i+1}-T_{\nu}^{*}, T_{\nu}^{i+1}-T_{\nu}^{i}\rangle_{\nu},
            \end{aligned}
        \]
        where the last inequality is from $\|T_{\nu}^{i+1}-T_{\nu}^{i}\|_{\nu}^{2}
        \geq 0$.
        This together with \eqref{eq:sub_bridge} implies that
        \begin{equation}
            \label{eq:sub_key_ineq}
            \begin{aligned}
                 & \|T_{\nu}^{i+1}-T_{\nu}^{*}\|_{\nu}^{2}\leq \|T_{\nu}^{i}-T_{\nu}^{*}\|_{\nu}^{2}+2\gamma\left\langle T_{\nu}^{i+1}-T_{\nu}^{*}, \xi_{i+1}- \eta_{i+1}\circ T_{\nu}^{i+1}+\frac{1}{\gamma}(T^{i}_{i+1}\circ T_{\nu}^{i+1}-T_{\nu}^i)\right\rangle_{\nu}.
            \end{aligned}
        \end{equation}
        On the other hand, since from Assumption \ref{f_assump_sc} we know that
        $\lambda\mathcal{D}(\cdot,\mathbb{P})$ is $\lambda$-strongly convex a.g.g.
        centered at $\nu$ with the regularization parameter $\lambda> \rho$, it
        follows from the proof of \cite[Lemma 4.1]{cheng2024convergence} (with $\mathcal{W}_{\nu}$
        replacing $\mathcal{W}_{2}$) that
        \begin{equation}
            \label{eq:sub_sc_key}
            \begin{aligned}
            & -\mathcal{H}(\phi,\Q^{*})+\mathcal{H}(\phi,\Q_{i+1}) 
                \geq\langle T_{\nu}^{*}-T_{\nu}^{i+1}, \eta_{i+1}\circ T_{\nu}^{i+1}\rangle_{\nu}+\frac{\lambda-\rho}{2}\cdot\mathcal{W}_{\nu}^{2}(\Q_{i+1}, \Q^{*}).
            \end{aligned}
        \end{equation}
        Meanwhile, by Cauchy--Schwarz inequality,
        \begin{equation}
            \label{eq:sub_innerproduct}
            \begin{aligned}
                &\left|\left\langle T_{\nu}^{*}-T_{\nu}^{i+1}, \xi_{i+1}+\frac{1}{\gamma}(T^{i}_{i+1}\circ T_{\nu}^{i+1}-T_{\nu}^i)\right\rangle_{\nu}\right|\\
                 \leq\ &\|T_{\nu}^{*}-T_{\nu}^{i+1}\|_{\nu}\cdot\left\|\xi_{i+1}+\frac{1}{\gamma}(T^{i}_{i+1}\circ T_{\nu}^{i+1}-T_{\nu}^i)\right\|_{\nu}                                                                                       \\
                                                                            \leq\ & \|T_{\nu}^{*}-T_{\nu}^{i+1}\|_{\nu}\cdot \left(\epsilon'+\frac{1}{\gamma}\|T^{i}_{i+1}\circ T_{\nu}^{i+1}-T_{\nu}^i\|_{\nu}\right)\\
                                                                            \leq\ &\frac{\lambda-\rho}{4}\|T_{\nu}^{*}-T_{\nu}^{i+1}\|_{\nu}^{2}+\frac{2}{\lambda-\rho}\left(\epsilon'^2+\frac{1}{\gamma^2}\|T^{i}_{i+1}\circ T_{\nu}^{i+1}-T_{\nu}^i\|_{\nu}^2\right),
            \end{aligned}
        \end{equation}
        where the second inequality is by
        $\|\xi_{i+1}\|_{\nu}\leq \epsilon'$ from Assumption
        \ref{ass:learningerror}.
        On the other hand, we know that
\begin{equation}
\label{eq:threepoint}
\begin{aligned}
\frac{1}{\gamma^2}\|T^{i}_{i+1}\circ T_{\nu}^{i+1}-T^{i}_{\nu}\|_\nu^2
&= \frac{1}{\gamma^2}\|(T^{i}_{i+1}-I)\circ T_{\nu}^{i+1}+T_{\nu}^{i+1}-T_{\nu}^*+T_{\nu}^*-T^{i}_{\nu}\|_\nu^2\\
&\leq \frac{3}{\gamma^2}(\mathcal{W}_2^2(\Q_i,\Q_{i+1})+\|T_{\nu}^{i+1}-T_{\nu}^*\|_{\nu}^2+\|T_{\nu}^{i}-T_{\nu}^*\|_{\nu}^2)\\
&\leq \frac{9}{\gamma^2}(\|T_{\nu}^{i+1}-T_{\nu}^*\|_{\nu}^2+\|T_{\nu}^{i}-T_{\nu}^*\|_{\nu}^2),
\end{aligned}
\end{equation}
where the last inequality is from 
\[
\mathcal{W}_2^2(\Q_i,\Q_{i+1})\leq 2\mathcal{W}_2^2(\Q_{i+1},\Q^*)+2\mathcal{W}_2^2(\Q_{i},\Q^*)\leq 2(\|T_{\nu}^{i+1}-T_{\nu}^*\|_{\nu}^2+\|T_{\nu}^{i}-T_{\nu}^*\|_{\nu}^2).
\]
Inserting \eqref{eq:sub_sc_key}, \eqref{eq:sub_innerproduct} and \eqref{eq:threepoint} into \eqref{eq:sub_key_ineq} gives
        \begin{equation}
            \label{eq:ppm_EVI}
            \begin{aligned}
                       & \left(1-\frac{\gamma (\lambda-\rho)}{2}-\frac{36}{\gamma (\lambda-\rho)}\right)\|T_{\nu}^{i+1}-T_{\nu}^{*}\|_{\nu}^{2}                                                                                                                              \\
                \leq\  &\left(1+\frac{36}{\gamma (\lambda-\rho)}\right) \|T_{\nu}^{i}-T_{\nu}^{*}\|_{\nu}^{2}+2 \gamma\left(-\mathcal{H}(\phi,\Q^{*})+\mathcal{H}(\phi,\Q_{i+1})-\frac{\lambda-\rho}{2}\cdot W_{\nu}^{2}(\Q_{i+1}, \Q^{*})\right)+\frac{4 \gamma\epsilon'^2 }{\lambda-\rho},
            \end{aligned}
        \end{equation}
        which implies that
        \[
            \begin{aligned}
                \left(1+\frac{\gamma (\lambda-\rho)}{2}-\frac{36}{\gamma (\lambda-\rho)}\right)\|T_{\nu}^{i+1}-T_{\nu}^{*}\|_{\nu}^{2}\leq \left(1+\frac{36}{\gamma (\lambda-\rho)}\right)\|T_{\nu}^{i}-T_{\nu}^{*}\|_{\nu}^{2}+\frac{4 \gamma\epsilon'^2}{\lambda-\rho}.
            \end{aligned}
        \]
Since $\gamma(\lambda-\rho)> 36$, we know that
\[
\tau:=\frac{\gamma (\lambda-\rho)+36}{\gamma^2 (\lambda-\rho)^2/2+\gamma (\lambda-\rho)-36}=\frac{1+\frac{36}{\gamma (\lambda-\rho)}}{1+\frac{\gamma (\lambda-\rho)}{2}-\frac{36}{\gamma (\lambda-\rho)}}<1
\]
By recursively applying the procedure from $i = 0$ to $i = I-1$, we obtain
        that
        \begin{equation*}
            \label{eq:jko_linear}
            \begin{aligned}
                \|T_{\nu}^{I}-T_{\nu}^{*}\|_{\nu}^{2} & \leq \tau^{I}\cdot\|T_{\nu}^{0}-T_{\nu}^{*}\|_{\nu}^{2}+\frac{4 \gamma\epsilon'^2}{\lambda-\rho}\cdot\frac{\left(1+\frac{\gamma (\lambda-\rho)}{2}-\frac{36}{\gamma (\lambda-\rho)}\right)^{-1}}{1-\tau} \\
                                                      & = \tau^{I}\cdot\|T_{\nu}^{0}-T_{\nu}^{*}\|_{\nu}^{2}+\frac{8\epsilon'^2}{(\lambda-\rho)^2-\frac{144}{\gamma^2 }}\\
                                                      & \leq \tau^{I}\cdot\|T_{\nu}^{0}-T_{\nu}^{*}\|_{\nu}^{2}+\frac{9\epsilon'^2}{(\lambda-\rho)^2},
            \end{aligned}
        \end{equation*}
        which is the desired result \eqref{eq:ppm-expcon}. From equation~\eqref{eq:ppm-expcon}, it follows that to ensure
        \[
            \|T_{\nu}^{i}- T_{\nu}^{*}\|_{\nu}^{2}\leq \frac{10\epsilon'^{2}}{(\lambda
            - \rho)^{2}},
        \]
        we require
        \begin{equation*}
            i \geq \frac{2}{\log(\tau^{-1})}\left( \log \mathcal{W}
            _{\nu}(\Q_{0}, \Q^{*}) + \log \left( \frac{\lambda - \rho}{\epsilon'}
            \right) \right).
        \end{equation*}
        Additionally, from \eqref{eq:ppm_EVI} and the condition $\|T_{\nu}^{i}- T
        _{\nu}^{*}\|_{\nu}^{2}\leq \frac{10\epsilon'^{2}}{(\lambda - \rho)^{2}}$ and $\gamma(\lambda-\rho) > 36$,
        we have
        \begin{equation*}
            \begin{aligned}
                  2 \gamma (\mathcal{H}(\phi,\Q^{*})-\mathcal{H}(\phi,\Q_{i+1}))
                  &\leq \left(1+\frac{36}{\gamma (\lambda-\rho)}\right)\|T_{\nu}^{i}-T_{\nu}^{*}\|_{\nu}^{2}+\frac{4 \gamma\epsilon'^2}{\lambda-\rho}\\
                  &<2\|T_{\nu}^{i}-T_{\nu}^{*}\|_{\nu}^{2}+\frac{4 \gamma\epsilon'^2}{\lambda-\rho}\\
                  &\leq \frac{20\epsilon'^{2}}{(\lambda - \rho)^{2}}+\frac{4 \gamma\epsilon'^2}{\lambda-\rho}\leq \frac{5\gamma\epsilon'^2}{\lambda-\rho}.
            \end{aligned}
        \end{equation*}
        The proof is complete.        

    \subsection*{Proof of Theorem \ref{thm:oraclecomplexity}}
        With the help of Theorem~\ref{thm:subgconvergence} and $\eta=\mathcal{O}(\varepsilon^{2})$, it suffices to obtain
        \begin{equation}
            \label{eq:complexityrequire}\|\nabla \mathcal{V}_{1/2\rho}(\phi_{k})\|
            ^{2}\lesssim \frac{2(\mathcal{V}_{1/2\rho}(\phi_{0})-\min_{\phi}\mathcal{V}(\phi)+\rho
            L^{2}K\varepsilon^4)}{K\varepsilon^2}+4 \rho \epsilon\leq\varepsilon^{2}.
        \end{equation}
        We divide this requirement into two components. For the first term
        \[
            \frac{2(\mathcal{V}_{1/2\rho}(\phi_{0})-\min_{\phi}\mathcal{V}(\phi)+\rho
            L^{2}K\varepsilon^4)}{K\varepsilon^2}\leq\frac{\varepsilon^{2}}{2},
        \]
        it leads to $K= \Omega(\varepsilon^{-4})$,
        which provides the lower bound for the number of outer iterations
        required by Algorithm~\ref{alg-subgdmax}.

        On the other hand, for the second term in \eqref{eq:complexityrequire}, which
        concerns the inner JKO steps by Algorithm \ref{alg-dualppm} that
        approximately solve the subproblem, Proposition~\ref{prop:dualppm}
        indicates that if
        $I-1 \geq \frac{2}{\log(\tau^{-1})}\left(\log \mathcal{W}
        _{\nu}(\Q_{0},\Q^{*})+\log ((\lambda-\rho) / \epsilon')\right)$, then
        \[
            \mathcal{H}(\phi, \Q^{*})-\mathcal{H}(\phi, \Q_{I})  \leq \frac{5\epsilon'^2}{2(\lambda - \rho)}.
        \]
        Since the accuracy $\epsilon$ of the subproblem in Algorithm~\ref{alg-subgdmax}
        is also defined as $\mathcal{H}(\phi, \Q^{*})-\mathcal{H}(\phi, \Q_{I})
        \leq \epsilon$, to ensure that
        $4\rho\epsilon \leq \frac{\varepsilon^{2}}{2}$, it suffices to require
        \[
            4\rho\cdot \frac{5\epsilon'^2}{2(\lambda - \rho)}\leq\frac{\varepsilon^{2}}{2}.
        \]
        This leads us to choose $\epsilon' = \mathcal{O}(\varepsilon)$.
        By combining these two components, we obtain the desired oracle complexity
        results. Specifically, the number of subgradient evaluations is equal to
        the number of outer iterations, and the inexact JKO steps
        require a product of the number of outer iterations $K$ and the number
        of inner iterations $I$.

\section{Proofs for Section \ref{sec:smooth}}

    \subsection*{Proof of Proposition \ref{prop:danskin}}
        Since $\mathcal{V}(\phi)=\max_{\Q \in \mathcal{P}_2}\mathcal{H}(\phi,\Q)=\max_{\Q \in \mathcal{P}_2(\mathcal{X})}\mathcal{H}(\phi,\Q)$ for any $\phi\in\Phi$ from Assumption  \ref{f_assump_smooth}, we focus on the function $\mathcal{H}(\phi,\cdot)$ with the space $\mathcal{P}_2(\mathcal{X})$.
        By Assumption~\ref{f_assump_sc} and Lemma~\ref{lem:Haggcvx} we know that
        $\mathcal{H}(\phi, \cdot)$ is strongly concave a.g.g. centered at
        $\nu \in \mathcal{P}_{2}^{r}$ for each $\phi \in \Phi$. Consequently, by
        definition, the set $\argmax_{\Q \in \mathcal{P}_2}\mathcal{H}(\phi, \Q)$
        is a singleton. On the other hand, from Assumption \ref{f_assump_smooth}
        we know that
        $\mathcal{H}(\cdot,\Q)=\mathbb{E}_{\xi \sim \Q}[\ell(f_{(\cdot)}, \xi)]-\lambda
        \mathcal{D}(\Q,\bP)$
        is differentiable for every $\Q\in\mathcal{P}_{2}$, and $\nabla_{\phi}\mathcal{H}
        (\phi,\Q)=\mathbb{E}_{\xi \sim \Q}[\nabla_{\phi}\ell(f_{\phi}, \xi)]$ is
        Lipschitz continuous with respect to $\phi\times \Q\in\Phi\times \mathcal{P}
        _{2}$. We then focus on applying Danskin's theorem \cite[Theorem~9.26]{shapiro2021lectures},
        which extends Danskin's theorem to general compact topological spaces. In order
        to derive the desired results, it suffices to verify that the space $\mathcal{P}
        _{2}(\mathcal{X})$ is a compact topological space.

        To proceed, we prove that the space of probability measures $\mathcal{P}_{2}
        (\mathcal{X})$ equipped with the Wasserstein-2 metric $\mathcal{W}_{2}$ is
        compact. Let $\{\mu_{i}\}_{i \geq 1}$ be an arbitrary sequence in
        $\mathcal{P}_{2}(\mathcal{X})$. Since $\mathcal{X}$ is compact, every
        sequence of probability measures on $\mathcal{X}$ is tight. By Prokhorov's
        theorem \cite[Theorem 2.4]{van2000asymptotic}, the sequence $\{\mu_{i}\}$ has a convergent subsequence that
        converges weakly to some probability measure
        $\mu \in \mathcal{P}_{2}(\mathcal{X})$. Then there exists a probability space
        $(\Omega, \mathcal{F}, \nu)$ and random variables $X, X_{1}, X_{2}, \ldots$
        with distribution $\mu,\mu_{1},\mu_{2},\ldots$ such that $\lim_{i
        \rightarrow \infty}\|X_{i}- X\| = 0$ almost surely under $\nu$ by Skorokhod's
        representation theorem. Since $\mathcal{X}$ is compact, there exists a constant
        $c > 0$ such that $\|X_{i}- X\| \leq c$ for all $i \geq 1$. Hence, we know
        from the dominated convergence theorem that $\lim_{i\rightarrow\infty}\mathbb{E}
        [\|X_{i}-X\|^{2}]= 0.$ Consequently, for the couplings
        $\pi_{i}\in \Pi(\mu_{i}, \mu)$ one has that
        \[
            \lim_{i\rightarrow\infty}\int_{\mathcal{X} \times \mathcal{X}}\|x_i-x\|
            ^{2}\mathrm{~d}\pi_i(x_i, x) = 0.
        \]
        Taking the infimum of the left-hand side over $\Pi(\mu_{i}, \mu)$ and
        recalling the definition of the Wasserstein distance, we deduce that
        $\lim_{i\rightarrow\infty}\mathcal{W}_{2}(\mu_{i}, \mu) = 0$, which
        means that $\mu$ is also a limit point of $\{\mu_{i}\}_{i \geq 1}$ for
        the Wasserstein-$2$ metric. Then $(\mathcal{P}_{2}(\mathcal{X}), \mathcal{W}
        _{2})$ is compact, which completes the proof.

    \subsection*{Proof of Lemma \ref{lem:q-lipschitz}}
        From Lemma \ref{lem:Haggcvx} we know that the function $\mathcal{H}(\phi,\cdot
        )$ is $(\lambda-L)$-strongly concave a.g.g. centered at
        $\nu \in \mathcal{P}_{2}^{r}$ in $\mathcal{P}_{2}$. Then
        \begin{align}
             & -\mathcal{H}(\phi_{1},\Q^{*}(\phi_{2}))+\mathcal{H}(\phi_{1},\Q^{*}(\phi_{1}))\geq \frac{\lambda-L}{2}\cdot\mathcal{W}_{\nu}^{2}(\Q^{*}(\phi_{1}),\Q^{*}(\phi_{2})),\label{q-strongcvxineq1} \\
             & -\mathcal{H}(\phi_{2},\Q^{*}(\phi_{1}))+\mathcal{H}(\phi_{2},\Q^{*}(\phi_{2}))\geq \frac{\lambda-L}{2}\cdot\mathcal{W}_{\nu}^{2}(\Q^{*}(\phi_{1}),\Q^{*}(\phi_{2})).\label{q-strongcvxineq2} 
        \end{align}
        Moreover, by the $L$-Lipschitz continuity of
        $\nabla_{\phi}\mathcal{H}(\cdot, \Q)$ for any $\Q \in \mathcal{P}_{2}$
        we have
        \begin{equation}
            \begin{aligned}
                \label{q-yconcave}-\mathcal{H}(\phi_{2},\Q^{*}(\phi_{1}))+\mathcal{H}(\phi_{1},\Q^{*}(\phi_{1}))\leq \langle \nabla_{\phi}\mathcal{H}(\phi_{1},\Q^{*}(\phi_{1})), \phi_{1}-\phi_{2}\rangle+\frac{L}{2}\|\phi_{2}-\phi_{1}\|^{2} 
            \end{aligned}
        \end{equation}
        and
        \begin{equation}
            \begin{aligned}
                \label{q-ylip}-\mathcal{H}(\phi_{1},\Q^{*}(\phi_{2}))+\mathcal{H}(\phi_{2},\Q^{*}(\phi_{2}))\leq \langle \nabla_{\phi}\mathcal{H}(\phi_{2},\Q^{*}(\phi_{2})), \phi_{2}-\phi_{1}\rangle+\frac{L}{2}\|\phi_{2}-\phi_{1}\|^{2}. 
            \end{aligned}
        \end{equation}
        Incorporating \eqref{q-strongcvxineq1}--\eqref{q-ylip}, we obtain
        \begin{equation}
            \label{q-perturbe-key}
            \begin{aligned}
                 & (\lambda-L)\cdot\mathcal{W}_{\nu}^{2}(\Q^{*}(\phi_{1}),\Q^{*}(\phi_{2})) \leq \langle\nabla_{\phi}\mathcal{H}(\phi_{1},\Q^{*}(\phi_{1}))-\nabla_{\phi}\mathcal{H}(\phi_{2},\Q^{*}(\phi_{2})), \phi_{1}-\phi_{2}\rangle +L\|\phi_{2}-\phi_{1}\|^{2}.
            \end{aligned}
        \end{equation}
        Recall that
        $\mathcal{H}(\phi,\Q)=\mathbb{E}_{\xi \sim \mathbb{Q}}[\ell(f_{\phi}, \xi
        )]-\lambda\cdot \mathcal{D}(\Q,\bP)$,
        it follows for any transport map $T$ satisfying $T_{\#}\nu=\Q$ that
        \[
            \nabla_{\phi}\mathcal{H}(\phi,\Q)=\mathbb{E}_{\xi \sim \mathbb{Q}}[\nabla
            _{\phi}\ell(f_{\phi}, \xi)]=\mathbb{E}_{\xi \sim \nu}[\nabla_{\phi}
            \ell(f_{\phi}, T(\xi))].
        \]
        Since $\nabla_{\phi}\ell(f_{\phi}, \cdot)$ is $L$-Lipschitz continuous,
        we have for the OT maps $T_{1}$, $T_{2}$ satisfy $(T_{1})_{\#}\nu=\Q^{*}(
        \phi_{1})$ and $(T_{2})_{\#}\nu=\Q^{*}(\phi_{2})$ that
        \[
            \|\nabla_{\phi}\ell(f_{\phi}, T_{1}(\xi))-\nabla_{\phi}\ell(f_{\phi},
            T_{2}(\xi))\|\leq L\|T_{1}(\xi)-T_{2}(\xi)\|\quad \text{for } \nu\text{-a.e.}\ \xi.
        \]
        Integrating with respect to $\nu$, we derive
        \begin{equation}
            \label{eq:key_distri_perturb}
            \begin{aligned}
                       & \|\nabla_{\phi}\mathcal{H}(\phi,\Q^{*}(\phi_{1}))-\nabla_{\phi}\mathcal{H}(\phi,\Q^{*}(\phi_{2}))\|                    \\
                =\     & \|\mathbb{E}_{\xi\sim\nu}[\nabla_{\phi}\ell(f_{\phi}, T_{1}(\xi))-\nabla_{\phi}\ell(f_{\phi}, T_{2}(\xi))]\|           \\
                \leq\  & \mathbb{E}_{\xi\sim\nu}[\|\nabla_{\phi}\ell(f_{\phi}, T_{1}(\xi))-\nabla_{\phi}\ell(f_{\phi}, T_{2}(\xi))\|]           \\
                \leq\  & L\cdot\mathbb{E}_{\xi\sim\nu}[\|T_{1}(\xi)-T_{2}(\xi)\|]\leq L\cdot\mathcal{W}_{\nu}(\Q^{*}(\phi_{1}),\Q^{*}(\phi_{2})),
            \end{aligned}
        \end{equation}
        where the last inequality is from Jensen's inequality. Then it
        follows that
        \begin{align*}
            \|\nabla_{\phi}\mathcal{H}(\phi_{1},\Q^{*}(\phi_{1}))-\nabla_{\phi}\mathcal{H}(\phi_{2},\Q^{*}(\phi_{2}))\|\leq L(\|\phi_{1}-\phi_{2}\|+\mathcal{W}_{\nu}(\Q^{*}(\phi_{1}),\Q^{*}(\phi_{2}))). 
        \end{align*}
        This together with \eqref{q-perturbe-key} implies that
        \begin{align*}
            (\lambda-L)\cdot\mathcal{W}_{\nu}^{2}(\Q^{*}(\phi_{1}),\Q^{*}(\phi_{2})) \leq L\cdot\mathcal{W}_{\nu}(\Q^{*}(\phi_{1}),\Q^{*}(\phi_{2}))\cdot\|\phi_{1}-\phi_{2}\|+2L\|\phi_{1}-\phi_{2}\|^{2}.
        \end{align*}
        Let $\beta:=\frac{\mathcal{W}_{\nu}(\Q^{*}(\phi_{1}),\Q^{*}(\phi_{2}))}{\|\phi_{1}-\phi_{2}\|}$.
        Then, the above inequality gives
        \begin{align*}
            \beta^{2}\leq \frac{L}{\lambda-L}\beta+\frac{2L}{\lambda-L}\leq \frac{1}{2}\beta^{2}+\frac{L^{2}}{2(\lambda-L)^{2}}+\frac{2L}{\lambda-L} & \leq \frac{1}{2}\beta^{2}+\frac{L^{2}+4L(\lambda-L)}{2(\lambda-L)^{2}}   \\
            & \leq \frac{1}{2}\beta^{2}+\frac{(L+2(\lambda-L))^{2}}{2(\lambda-L)^{2}},
        \end{align*}
        where the second inequality holds because $ab\leq \frac{1}{2}(a^{2}+b^{2}
        )$ for any $a,b\in\R$. Thus, we get
        \[
            \mathcal{W}_{2}(\Q^{*}(\phi_{1}),\Q^{*}(\phi_{2}))\leq \mathcal{W}_{\nu}
            (\Q^{*}(\phi_{1}),\Q^{*}(\phi_{2})) \leq \frac{2\lambda-L}{\lambda-L}
            \cdot \|\phi_{1}-\phi_{2}\|.
        \]
        Furthermore, with the help of the Danskin's theorem, i.e., Proposition \ref{prop:danskin},
        we know that
        \[
            \|\nabla \mathcal{V}(\phi_{1})-\nabla \mathcal{V}(\phi_{2})\|=\|\nabla
            _{\phi}\mathcal{H}(\phi_{1}, \Q^{*}(\phi_{1}))-\nabla_{\phi}\mathcal{H}
            (\phi_{2}, \Q^{*}(\phi_{2}))\| \leq L\left(1+\frac{2\lambda-L}{\lambda-L}
            \right)\|\phi_{1}-\phi_{2}\|.
        \]
        The proof is complete.

    \subsection*{Proof of Theorem \ref{thm:smooth_oracle_complexity}}
        First we observe from Lemma \ref{lem:q-lipschitz} that
        \begin{equation}
            \label{eq:smooth_des}
            \begin{aligned}
                \mathcal{V}(\phi_{k+1}) 
                 & \leq \mathcal{V}(\phi_{k})+\langle\nabla_{\phi}\mathcal{H}(\phi_{k},\Q^{*}_{k}), \phi_{k+1}-\phi_{k}\rangle+\kappa_2L\|\phi_{k+1}-\phi_{k}\|^{2},
            \end{aligned}
        \end{equation}
        where $\Q_{k}^{*}:=\argmax_{\Q\in\mathcal{P}_2}\mathcal{H}(\phi_{k},\Q)$.
        The update
        $\phi_{k+1}=\proj_{\Phi}(\phi_{k}-\eta_k\nabla_{\phi}\mathcal{H}(\phi_{k},\Q
        _{k}))$
        together with its optimality by projection implies that
        \[
            \langle\nabla_{\phi}\mathcal{H}(\phi_{k},\Q_{k}), \phi_{k+1}-\phi_{k}
            \rangle+\frac{1}{2\eta_k}\|\phi_{k+1}-\phi_{k}\|^{2}\leq0.
        \]
        Then combining this with \eqref{eq:smooth_des} we have that
        \begin{align}
            \label{eq:smooth_key} & \mathcal{V}(\phi_{k+1}) \notag     \\
            \leq\                 & \mathcal{V}(\phi_{k})+\langle\nabla_{\phi}\mathcal{H}(\phi_{k},\Q^{*}_{k})-\nabla_{\phi}\mathcal{H}(\phi_{k},\Q_{k})+\nabla_{\phi}\mathcal{H}(\phi_{k},\Q_{k}), \phi_{k+1}-\phi_{k}\rangle +\kappa_2L\|\phi_{k+1}-\phi_{k}\|^{2}\notag \\
            \leq\                 & \mathcal{V}(\phi_{k})+\langle\nabla_{\phi}\mathcal{H}(\phi_{k},\Q^{*}_{k})-\nabla_{\phi}\mathcal{H}(\phi_{k},\Q_{k}), \phi_{k+1}-\phi_{k}\rangle+\left(\frac{(3\lambda-2L)L}{2(\lambda-L)}-\frac{1}{2\eta_k}\right)\|\phi_{k+1}-\phi_{k}\|^{2}\notag               \\
            \leq\                 & \mathcal{V}(\phi_{k})-\left(\frac{1}{2\eta_k}-\frac{(3\lambda-2L)L}{2(\lambda-L)}-\frac{L}{2}\right)\|\phi_{k+1}-\phi_{k}\|^{2}+\frac{L}{2}\mathcal{W}_{\nu}^{2}(\Q_{k}^{*},\Q_{k}),
        \end{align}
        where the last inequality is from similar argument for deriving gradient
        Lipschitz property \eqref{eq:key_distri_perturb}.

        On the other hand, by the definition of $\Q^{*}_{k}$ and the subproblem of
        Algorithm \ref{alg-subgdmax}, we know that
        \[
            \mathcal{H}(\phi_{k},\Q_{k})\geq \mathcal{H}(\phi_{k},\Q_{k}^{*}) -\epsilon
            .
        \]
        This together with the a.g.g. strong concavity with respect to $\Q$ centered
        at $\nu \in \mathcal{P}_{2}^{r}$ implies that
        \[
            \epsilon\geq \mathcal{H}(\phi_{k},\Q^{*}_{k}) -\mathcal{H}(\phi_{k},\Q
            _{k})\ge\frac{\lambda-L}{2}\cdot\mathcal{W}_{\nu}^{2}(\Q_{k},\Q^{*}_{k}
            ),
        \]
        and consequently we know from \eqref{eq:smooth_key} that
        \begin{equation*}
            \mathcal{V}(\phi_{k+1}) \leq \mathcal{V}(\phi_{k})-\left(\frac{1}{2\eta_k}
            -\frac{(3\lambda-2L)L}{2(\lambda-L)}-\frac{L}{2}\right)\|\phi_{k+1}-\phi
            _{k}\|^{2}+\frac{L\epsilon}{\lambda-L}.
        \end{equation*}
        Taking the sum over $k$, we obtain
        \[
            \mathcal{V}(\phi_{K}) \leq \mathcal{V}(\phi_{0})-\left(\frac{1}{2\eta_k}
            -\frac{(4\lambda-3L)L}{2(\lambda-L)}\right) \sum_{k=0}^{K-1}\|\phi_{k+1}
            -\phi_{k}\|^{2}+\frac{L\epsilon}{\lambda-L}\cdot K.
        \]
        Rearranging this, then we have
        \begin{equation*}
            \frac{1}{K}\sum_{k=0}^{K-1}\|\phi_{k+1}-\phi_{k}\|^{2}\leq \frac{2\eta_k(\lambda-L)\cdot(\mathcal{V}(\phi_{0})-\min_{\phi}\mathcal{V}(\phi))}{(\lambda-L-\eta_k
            L(4\lambda-3L))K}
            +\frac{2\eta_k L\epsilon}{\lambda-L-\eta_k L(4\lambda-3L)}.
        \end{equation*}
        In the meantime, by the optimality condition of the projection step
$\phi_{k+1}=\proj_\Phi(\phi_k-\eta_k \nabla_\phi\mathcal H(\phi_k,\Q_k))$, we have
\[
\frac{\phi_k-\phi_{k+1}}{\eta_k}-\nabla_\phi\mathcal H(\phi_k,\Q_k)
\in
\partial\iota_\Phi(\phi_{k+1}).
\]
Therefore,
\[
\begin{aligned}
& \dist^{2}\left(0,\nabla \mathcal{V}(\phi_{k+1})
+\partial\iota_{\Phi}(\phi_{k+1})\right) \\
\leq\;&
\left\|
\nabla\mathcal V(\phi_{k+1})
+\frac{\phi_k-\phi_{k+1}}{\eta_k}
-\nabla_\phi\mathcal H(\phi_k,\Q_k)
\right\|^2 \\
\leq\;&
2\left(\frac{1}{\eta_k}+2\kappa_2L\right)^2
\|\phi_{k+1}-\phi_k\|^2
+
2\|\nabla_\phi\mathcal H(\phi_k,\Q_k)-\nabla_\phi\mathcal H(\phi_k,\Q_k^*)\|^2 \\
\leq\;&
2\left(\frac{1}{\eta_k}+2\kappa_2L\right)^2
\|\phi_{k+1}-\phi_k\|^2
+
\frac{4L^2\epsilon}{\lambda-L}.
\end{aligned}
\]
Then letting $\eta_k=\frac{\lambda-L}{2L(4\lambda-3L)}$ we have
\[
\begin{aligned}
& \min_{k\in[K-1]}
\left\{
\dist^{2}\left(
0,\nabla \mathcal{V}(\phi_{k+1})
+\partial\iota_{\Phi}(\phi_{k+1})
\right)
\right\} \\
\leq\;&
4\left(\frac{1}{\eta_k}+2\kappa_2 L\right)^{2}
\frac{
\eta_k(\lambda-L)
\left(\mathcal{V}(\phi_0)-\min_{\phi}\mathcal{V}(\phi)\right)
}{
\left(\lambda-L-\eta_k L(4\lambda-3L)\right)K
} +
\frac{
4\left(\frac{1}{\eta_k}+2\kappa_2 L\right)^2\eta_k L\epsilon
}{
\lambda-L-\eta_k L(4\lambda-3L)
}
+
\frac{4L^2\epsilon}{\lambda-L}\\
\leq\;&
\frac{
4L(11\lambda-8L)^2
\left(\mathcal{V}(\phi_0)-\min_{\phi}\mathcal{V}(\phi)\right)
}{
(\lambda-L)(4\lambda-3L)K
} 
+
\left(
\frac{(11\lambda-8L)L}{\lambda-L}
\right)^2
\frac{4\epsilon}{4\lambda-3L}
+
\frac{4L^2\epsilon}{\lambda-L}.
\end{aligned}
\]
The oracle complexity results follow from a similar analysis as in Theorem
        \ref{thm:oraclecomplexity}. The proof is complete.

\subsection*{Auxiliary Lemmas for Convergence Analysis of Algorithm~\ref{alg:single}}

The convergence proof of Algorithm \ref{alg:single} relies primarily on a basic descent property, along with an exponential bound on the growth of the associated increasing term.

\begin{lemma}[Basic descent property]\label{lem:singledescent_key1}
Suppose that Assumptions \ref{f_assump_sc} and \ref{f_assump_smooth} hold. For Algorithm \ref{alg:single} with $\eta\leq 1 / 32\kappa_2 L$, 
the iterates $\{\phi_k\}_{k \geq 0}$ satisfy
$$
\mathcal{V}(\phi_{k+1}) \leq \mathcal{V}(\phi_{k})-\frac{7 \eta}{16}\left\|\nabla \mathcal{V}(\phi_{k})\right\|^2+\frac{9 \eta L^2}{16} \|T_{\nu}^{k+1}-T_{\nu}^*(\phi_{k})\|_{\nu}^2,
$$
where $T_{\nu}^*(\phi_{k})$ is the OT map from $\nu$ to $\Q^*(\phi_k)$.
\end{lemma} 
    
\begin{proof} 
Since $\mathcal{V}$ is $2\kappa_2 L$-gradient Lipschitz from Lemma \ref{lem:q-lipschitz}, we have
\begin{equation}
\label{eq:vlsmooth}
\mathcal{V}(\phi_{k+1}) - \mathcal{V}(\phi_{k})-\left(\phi_{k+1}- \phi_{k}\right)^{\top} \nabla \mathcal{V}(\phi_{k}) \leq \kappa_2 L\|\phi_{k+1}- \phi_{k}\|^2 .
\end{equation}
Plugging $\phi_{k+1}- \phi_{k}= - \eta\cdot\nabla_{\phi} \mathcal{H}(\phi_{k},\Q_{k+1})$ into \eqref{eq:vlsmooth} yields that
$$
\begin{aligned}
\mathcal{V}(\phi_{k+1})  
\leq\ &  \mathcal{V}(\phi_{k})-\eta\|\nabla \mathcal{V}(\phi_{k})\|^2+\eta^2 \kappa_2 L\left\|\nabla_{\phi} \mathcal{H}(\phi_{k},\Q_{k+1})\right\|^2 \\
& +\eta\left(\nabla \mathcal{V}(\phi_{k})-\nabla_{\phi} \mathcal{H}(\phi_{k},\Q_{k+1})\right)^{\top} \nabla \mathcal{V}(\phi_{k}).
\end{aligned}
$$
Using the Young's inequality, we have
$$
\left(\nabla \mathcal{V}(\phi_{k})-\nabla_{\phi} \mathcal{H}(\phi_{k},\Q_{k+1})\right)^{\top} \nabla \mathcal{V}(\phi_{k}) \leq \frac{\left\|\nabla \mathcal{V}(\phi_{k})-\nabla_{\phi} \mathcal{H}(\phi_{k},\Q_{k+1})\right\|^2+\left\|\nabla \mathcal{V}(\phi_{k})\right\|^2}{2} .
$$
By the Cauchy--Schwarz inequality, we have
$$
\left\|\nabla_{\phi} \mathcal{H}(\phi_{k},\Q_{k+1})\right\|^2 \leq 2\left(\left\|\nabla \mathcal{V}(\phi_{k})-\nabla_{\phi} \mathcal{H}(\phi_{k},\Q_{k+1})\right\|^2+\left\|\nabla \mathcal{V}(\phi_{k})\right\|^2\right).
$$
Combining these inequalities yields
\begin{align}
\label{eq:des-key12}
\mathcal{V}(\phi_{k+1}) 
\leq\ & \mathcal{V}(\phi_{k})-\left(\frac{\eta}{2}- 2\eta^2 \kappa_2 L\right)\left\|\nabla \mathcal{V}(\phi_{k})\right\|^2+\left(\frac{\eta}{2}+2\eta^2 \kappa_2 L\right)\left\|\nabla \mathcal{V}(\phi_{k})-\nabla_{\phi} \mathcal{H}(\phi_{k},\Q_{k+1})\right\|^2.
\end{align}
Since  $\eta\leq 1 / 32\kappa_2 L$, we know that
\begin{equation}\label{eq:stepcalcu}
\frac{7 \eta}{16} \leq \frac{\eta}{2}-2 \eta^2 \kappa_2 L  \leq \frac{\eta}{2}+2 \eta^2 \kappa_2  L\leq \frac{9 \eta}{16}.
\end{equation}
 Combining \eqref{eq:des-key12} and \eqref{eq:stepcalcu}, it follows that
\begin{align*}
\mathcal{V}(\phi_{k+1}) 
\leq\ & \mathcal{V}(\phi_{k})-\frac{7 \eta}{16}\left\|\nabla \mathcal{V}(\phi_{k})\right\|^2+\frac{9 \eta}{16}\left\|\nabla \mathcal{V}(\phi_{k})-\nabla_{\phi} \mathcal{H}(\phi_{k},\Q_{k+1})\right\|^2.
\end{align*}
Since $\nabla \mathcal{V}(\phi_{k})=\nabla_{\phi} \mathcal{H}(\phi_{k},\Q^*(\phi_{k}))$, we have
$$
\left\|\nabla \mathcal{V}(\phi_{k})-\nabla_{\phi} \mathcal{H}(\phi_{k},\Q_{k+1})\right\|^2\leq L^2\cdot\mathcal{W}_{2}^2(\Q_{k+1},\Q^*(\phi_{k})) \leq L^2 \|T_{\nu}^{k+1}-T_{\nu}^*(\phi_{k})\|_{\nu}^2.
$$
Putting these pieces together yields the desired inequality.
\end{proof}

\begin{lemma}[Exponential error decay]\label{lem:singledescent_key2}
Suppose that Assumptions \ref{f_assump_sc}, \ref{ass:learningerror}, and \ref{f_assump_smooth} hold. For Algorithm \ref{alg:single} with $\eta\leq\frac{1-\tau}{16\kappa_1 L}$ and $\gamma>\frac{36}{\lambda-\rho}$, 
it follows that
$$
\|T_{\nu}^{k+1}-T_{\nu}^*(\phi_{k+1})\|_{\nu}^2 \leq\left(\frac{1+\tau}{2}+\frac{4\kappa_1^2 L^2 \eta^2}{1-\tau} \right) \|T_{\nu}^{k}-T_{\nu}^*(\phi_{k})\|_{\nu}^2+\frac{4\kappa_1^2 \eta^2}{1-\tau} \left\|\nabla \mathcal{V}(\phi_{k})\right\|^2+\frac{4\gamma\epsilon'^2}{\lambda-\rho},
$$
where $\tau\in(0,1)$ is defined by Proposition \ref{prop:dualppm}.
\end{lemma}
    
\begin{proof}
From the JKO update in Proposition \ref{prop:dualppm} we know that
$$
\begin{aligned}
\|T_{\nu}^{k+1}-T_{\nu}^*(\phi_{k})\|_{\nu}^2\leq\tau\cdot\left(\|T_{\nu}^{k}-T_{\nu}^*(\phi_{k})\|_{\nu}^2+\frac{
4 \gamma\epsilon'^2}{\lambda-\rho}\right).
\end{aligned}
$$
Using the Young's inequality, we have
$$
\begin{aligned}
\|T_{\nu}^{k+1}-T_{\nu}^*(\phi_{k+1})\|_{\nu}^2 & \leq\left(1+\frac{1-\tau}{2\tau}\right)\|T_{\nu}^{k+1}-T_{\nu}^*(\phi_{k})\|_\nu^2+\left(1+\frac{2\tau}{1-\tau}\right)\left\|T_{\nu}^*(\phi_{k+1})-T_{\nu}^*(\phi_{k})\right\|_{\nu}^2 \\
& \leq\frac{1+\tau}{2 \tau}\cdot\|T_{\nu}^{k+1}-T_{\nu}^*(\phi_{k})\|_{\nu}^2+\frac{2}{1-\tau}\cdot\left\|T_{\nu}^*(\phi_{k+1})-T_{\nu}^*(\phi_{k})\right\|_{\nu}^2 \\
& \leq \frac{1+\tau}{2} \cdot \|T_{\nu}^{k}-T_{\nu}^*(\phi_{k})\|_{\nu}^2+\frac{2}{1-\tau}\cdot\left\|T_{\nu}^*(\phi_{k+1})-T_{\nu}^*(\phi_{k})\right\|_{\nu}^2+\frac{2(1+\tau) \gamma\epsilon'^2}{\lambda-\rho}.
\end{aligned}
$$
Since $\Q^*(\phi)$ is $\kappa_1$-Lipschitz from Lemma \ref{lem:q-lipschitz}, we have
\[
\mathcal{W}_{2}(\Q^*(\phi_{k+1}),\Q^*(\phi_{k}))\leq \left\|T_{\nu}^*(\phi_{k+1})-T_{\nu}^*(\phi_{k})\right\|_{\nu} \leq  \kappa_1\cdot \|\phi_{k+1}-\phi_{k}\|.
\]
Furthermore, since $\phi_{k+1}- \phi_{k}= -\eta\cdot\nabla_{\phi} \mathcal{H}(\phi_{k},\Q_{k+1})$, we know from \eqref{eq:key_distri_perturb} that
$$
\|\phi_{k+1}- \phi_{k}\|^2=\eta^2\left\|\nabla_{\phi} \mathcal{H}(\phi_{k},\Q_{k+1})\right\|^2 \leq 2 \eta^2 L^2 \|T_{\nu}^{k+1}-T_{\nu}^*(\phi_{k})\|_{\nu}^2+2 \eta^2\left\|\nabla \mathcal{V}(\phi_{k})\right\|^2.
$$
Combining the above estimates, we obtain
\begin{align*}
\|T_{\nu}^{k+1}-T_{\nu}^*(\phi_{k+1})\|_{\nu}^2 
& \leq \frac{1+\tau}{2} \cdot \|T_{\nu}^{k}-T_{\nu}^*(\phi_{k})\|_{\nu}^2+\frac{4\kappa_1^2\eta^2}{1-\tau}\left( L^2 \|T_{\nu}^{k+1}-T_{\nu}^*(\phi_{k})\|_{\nu}^2+\left\|\nabla \mathcal{V}(\phi_{k})\right\|^2\right)\\
&\quad +\frac{2(1+\tau) \gamma\epsilon'^2}{\lambda-\rho}.
\end{align*}
Substituting the JKO contraction bound $\|T_{\nu}^{k+1}-T_{\nu}^*(\phi_{k})\|_{\nu}^2\leq\tau(\|T_{\nu}^{k}-T_{\nu}^*(\phi_{k})\|_{\nu}^2+\frac{4 \gamma\epsilon'^2}{\lambda-\rho})$ into the above inequality, we obtain
\begin{align*}
\|T_{\nu}^{k+1}-T_{\nu}^*(\phi_{k+1})\|_{\nu}^2 
& \leq \left(\frac{1+\tau}{2}+\frac{4\kappa_1^2 L^2 \eta^2\tau}{1-\tau}\right) \|T_{\nu}^{k}-T_{\nu}^*(\phi_{k})\|_{\nu}^2+\frac{4\kappa_1^2 \eta^2}{1-\tau} \left\|\nabla \mathcal{V}(\phi_{k})\right\|^2\\
&\quad +\left(2(1+\tau)+\frac{16\kappa_1^2\eta^2 L^2\tau}{1-\tau}\right)\frac{\gamma\epsilon'^2}{\lambda-\rho}.
\end{align*}
For the $\epsilon'$ coefficient, we verify that $2(1+\tau)+\frac{16\kappa_1^2\eta^2 L^2\tau}{1-\tau}\leq 4$, which requires $\frac{16\kappa_1^2\eta^2 L^2\tau}{1-\tau}\leq 2(1-\tau)$.
Indeed, from the stepsize condition $\eta\leq \frac{1-\tau}{16\kappa_1 L}$, we have $\kappa_1^2\eta^2 L^2\leq \frac{(1-\tau)^2}{256}$, so that
$$
\frac{16\kappa_1^2\eta^2 L^2\tau}{1-\tau}\leq \frac{16\tau(1-\tau)}{256}=\frac{\tau(1-\tau)}{16}<2(1-\tau).
$$
Therefore, we arrive at
\begin{align*}
\|T_{\nu}^{k+1}-T_{\nu}^*(\phi_{k+1})\|_{\nu}^2 
& \leq \left(\frac{1+\tau}{2}+\frac{4\kappa_1^2 L^2 \eta^2}{1-\tau}\right) \|T_{\nu}^{k}-T_{\nu}^*(\phi_{k})\|_{\nu}^2+\frac{4\kappa_1^2 \eta^2}{1-\tau} \left\|\nabla \mathcal{V}(\phi_{k})\right\|^2+\frac{4\gamma\epsilon'^2}{\lambda-\rho}.
\end{align*}
This completes the proof.
\end{proof}

\subsection*{Proof of Theorem \ref{thm:alternating}}
Define $\delta:=(1+\tau)/ 2 +4 \kappa_1^2 L^2 \eta^2/(1-\tau)$. Using the inequality in Lemma \ref{lem:singledescent_key2} recursively yields that
\begin{align*}
\|T_{\nu}^{k}-T_{\nu}^*(\phi_{k})\|_{\nu}^2
&\leq \delta^k C^2+\frac{4 \kappa_1^2 \eta^2}{1-\tau}\left(\sum_{j=0}^{k-1} \delta^{k-1-j}\left\|\nabla \mathcal{V}(\phi_{j})\right\|^2\right)+\frac{4\gamma\epsilon'^2}{\lambda-\rho}\cdot\sum_{j=0}^{k-1} \delta^{k-1-j},
\end{align*}
where $C:=\|T_{\nu}^{0}-T_{\nu}^*(\phi_{0})\|_{\nu}<\infty$.
 
Now we substitute this into the descent inequality of Lemma \ref{lem:singledescent_key1}. From the JKO contraction bound we have 
$$
\|T_{\nu}^{k+1}-T_{\nu}^*(\phi_{k})\|_{\nu}^2\leq\tau\left(\|T_{\nu}^{k}-T_{\nu}^*(\phi_{k})\|_{\nu}^2+\frac{4 \gamma\epsilon'^2}{\lambda-\rho}\right),
$$
and therefore Lemma \ref{lem:singledescent_key1} gives
\begin{align*}
\mathcal{V}(\phi_{k+1}) 
\leq\ & \mathcal{V}(\phi_{k})-\frac{7 \eta}{16}\left\|\nabla \mathcal{V}(\phi_{k})\right\|^2+\frac{9 \eta L^2 \tau}{16}\|T_{\nu}^{k}-T_{\nu}^*(\phi_{k})\|_{\nu}^2 +\frac{9 \eta L^2\tau}{4}\cdot\frac{\gamma\epsilon'^2}{\lambda-\rho}.
\end{align*}
Substituting the recursive bound for $\|T_{\nu}^{k}-T_{\nu}^*(\phi_{k})\|_{\nu}^2$ and noting $\tau\leq 1$, we obtain
\begin{align*}
\mathcal{V}(\phi_{k+1}) 
\leq\ & \mathcal{V}(\phi_{k})-\frac{7 \eta}{16}\left\|\nabla \mathcal{V}(\phi_{k})\right\|^2+\frac{9 \eta L^2 \delta^{k} C^2}{16}+\frac{9 \eta^3 L^2 \kappa_1^2}{4(1-\tau)}\left(\sum_{j=0}^{k-1} \delta^{k-1-j}\left\|\nabla \mathcal{V}(\phi_{j})\right\|^2\right)\\
&+ \frac{9 \eta L^2}{16} \cdot\frac{4\gamma\epsilon'^2}{\lambda-\rho}\cdot\left(1+\sum_{j=0}^{k-1} \delta^{k-1-j}\right).
\end{align*}
Summing up over $k=0,1, \ldots, K-1$ and rearranging the terms yields that
\begin{align*}
\mathcal{V}(\phi_{K}) 
\leq\ & \mathcal{V}(\phi_{0})-\frac{7 \eta}{16}\left(\sum_{k=0}^{K-1}\left\|\nabla \mathcal{V}(\phi_{k})\right\|^2\right)+\frac{9 \eta L^2 C^2}{16}\left(\sum_{k=0}^{K-1} \delta^k\right)+\frac{9 \eta^3 L^2 \kappa_1^2}{4(1-\tau)}\left(\sum_{k=0}^{K-1} \sum_{j=0}^{k-1} \delta^{k-1-j}\left\|\nabla \mathcal{V}(\phi_{j})\right\|^2\right)\\
&+ \frac{9 \eta L^2}{4}\cdot \frac{\gamma\epsilon'^2}{\lambda-\rho}\cdot\sum_{k=0}^{K-1}\left(1+\sum_{j=0}^{k-1} \delta^{k-1-j}\right).
\end{align*}
Since $\eta\leq (1-\tau)/ 16\kappa_1 L$, we have $\delta \leq \frac{3+\tau}{4}$ and $\frac{9 \eta^3 L^2 \kappa_1^2}{4(1-\tau)} \leq \frac{9 \eta(1-\tau)}{1024}$. This implies that $\sum_{k=0}^{K-1} \delta^k \leq \frac{4}{1-\tau} $,
$$
\sum_{k=0}^{K-1} \sum_{j=0}^{k-1} \delta^{k-1-j}\left\|\nabla \mathcal{V}(\phi_{j})\right\|^2 \leq \frac{4}{1-\tau}\left(\sum_{k=0}^{K-1}\left\|\nabla \mathcal{V}(\phi_{k})\right\|^2\right)
$$
and
$$
\sum_{k=0}^{K-1}\left(1+\sum_{j=0}^{k-1} \delta^{k-1-j}\right) \leq K\left(1+\frac{4}{1-\tau}\right)\leq \frac{5 K}{1-\tau}.
$$
Putting these pieces together yields that
$$
\mathcal{V}(\phi_{K}) \leq \mathcal{V}(\phi_{0})-\frac{103 \eta}{256}\left(\sum_{k=0}^{K-1}\left\|\nabla \mathcal{V}(\phi_{k})\right\|^2\right)+\frac{9 \eta L^2 C^2}{4(1-\tau)} + \frac{45 \eta L^2}{4(1-\tau)}\cdot \frac{\gamma\epsilon'^2}{\lambda-\rho}\cdot K.
$$
Rearranging the terms, we have
$$
\frac{1}{K}\left(\sum_{k=0}^{K-1}\left\|\nabla \mathcal{V}(\phi_{k})\right\|^2\right) \leq \frac{256\left(\mathcal{V}(\phi_{0})-\min_{\phi}\mathcal{V}(\phi)\right)}{103 \eta K}+\frac{576  L^2 C^2}{103(1-\tau)K}+ \frac{2880 L^2}{103(1-\tau)} \cdot\frac{\gamma\epsilon'^2}{\lambda-\rho}.
$$
Then the oracle complexity results can be directly derived. The proof is complete.

    \section{Additional Experimental Details and Figures}

    \subsection*{Additional Figures}

    \paragraph{Additional two-dimensional Gaussian mixture results.}
    Figure~\ref{fig:gm-objective-reference} reports the empirical classifier loss $\mathbb{E}_{\widehat{\mathbb{Q}}_k}[\ell]$ and empirical minimax objective $\widehat{\mathcal{H}}_k$. These curves complement the stationarity curves in Figure~\ref{fig:gm-main-convergence}.

    \begin{figure}[H]
        \centering
        \includegraphics[width=0.84\textwidth]{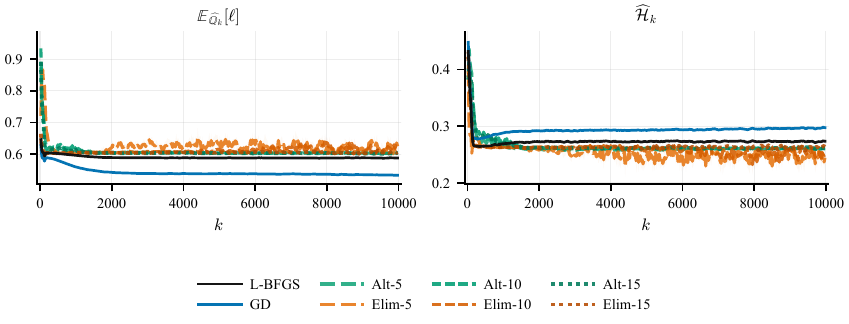}
        \caption{\textbf{Classifier loss and objective on the two-dimensional Gaussian mixture task.} The panels show the empirical classifier loss $\mathbb{E}_{\widehat{\mathbb{Q}}_k}[\ell]$ and empirical minimax objective $\widehat{\mathcal{H}}_k$. The gradient-based convergence metrics are shown separately in Figure~\ref{fig:gm-main-convergence}.}\label{fig:gm-objective-reference}
    \end{figure}

    \begin{figure}[H]
        \centering
        \includegraphics[width=0.9\textwidth]{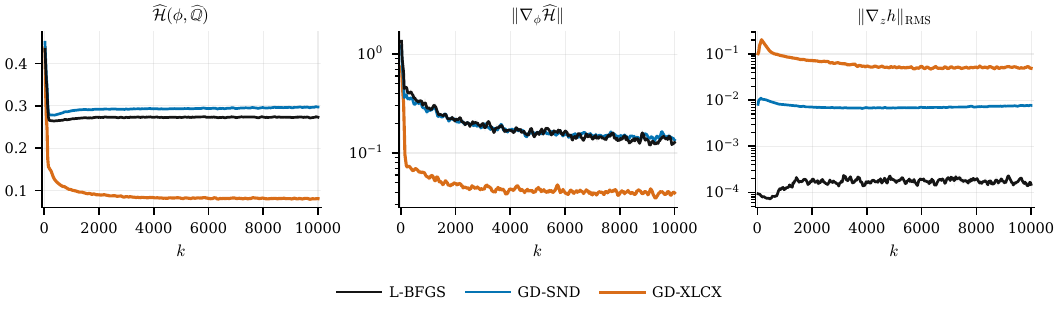}
        \caption{\textbf{Comparison of GD implementations.} GD-SND denotes the rescaled particle GD implementation following \cite{sinha2018certifying}; GD-XLCX denotes the unrescaled particle GD implementation \cite{xu2024flow}. The unrescaled implementation uses a different effective step scaling, so it is excluded from the main comparison and retained only for this implementation check.}\label{fig:gm-gd-comparison}
    \end{figure}

    \subsection*{Experimental Details}

    \paragraph{Two-dimensional Gaussian mixture.}
    The detailed hyperparameter setup can be found in Table~\ref{tab:gm-exp-details}. At each outer iteration, we draw a fresh batch from the two distributions described in Section~\ref{sec:numerical}. The inner maximization is solved only for the class-0 samples; the class-1 samples are kept fixed and are used only in the classifier update. The particle L-BFGS baseline is used for this inner finite-dimensional problem, while the classifier $f_\phi$ is trained by Adam for all methods. For the neural transport maps, we use the residual parameterization $T_\theta(x)=x+R_\theta(x)$ and initialize the map as the identity by setting the final layer of $R_\theta$ to zero.

    \begin{table}[H]
        \small
        \centering
        \caption{\textbf{Experimental setup for the two-dimensional Gaussian mixture experiment.}}\label{tab:gm-exp-details}
        \begin{tabular}{@{}llp{0.52\textwidth}@{}}
            \toprule
            Component & Hyperparameter & Value \\
            \midrule
            \multirow{2}{*}{Data} & Dimension & $2$ \\
            & Batch size & $1000$ per class \\
            \midrule
            \multirow{4}{*}{Classifier $f_\phi$} & MLP hidden layers & $3$ \\
            & MLP width & $512$ \\
            & Activation & SiLU \\
            & Optimizer & Adam, lr $10^{-4}$ \\
            \midrule
            \multirow{4}{*}{Transport map $T_\theta$} & MLP hidden layers & $3$ \\
            & MLP width & $512$ \\
            & Activation & SiLU \\
            & Optimizer & Adam, lr $10^{-3}$ \\
            \midrule
            \multirow{7}{*}{DRO} & Outer iterations & $10{,}000$ \\
            & $\lambda$ & $2\times10^{-2}$ \\
            & $\gamma$ & $5$ (Alt), $\infty$ (Elim) \\
            & Neural updates & $B\in\{5,10,15\}$ \\
            & GD steps & $15$ \\
            & L-BFGS max iter. & $100$ \\
            & L-BFGS tolerances & gtol $10^{-3}$, ftol $10^{-6}$ \\
            \bottomrule
        \end{tabular}
    \end{table}

    \paragraph{CIFAR-10 feature-space experiment.}
    The detailed hyperparameter setup can be found in Table~\ref{tab:cifar10-exp-details}. We first encode CIFAR-10 images with a fixed ViT-base encoder, and all classifiers and transport maps are trained in the feature space. A labeled sample is written as $(x,y)$, where $x$ is the feature vector and $y$ is the class label. The transport map changes only $x$ and leaves $y$ unchanged. We implement the family $\{T_{\theta,y}\}_{y=0}^{9}$ by one shared neural network: the label $y$ is mapped to a learned embedding, the embedding is concatenated with $x$, and the network outputs a residual vector denoted by $R_{\theta,y}(x)$. Thus, $T_{\theta,y}(x)=x+R_{\theta,y}(x)$. The classifier $f_\phi$ takes only the feature vector as input. We initialize $T_{\theta,y}$ as the identity map by setting the final residual layer to zero. All CIFAR-10 experiments use $\gamma=\infty$; in Neural+L-BFGS, the output of $T_{\theta,y}$ is used to initialize the L-BFGS particle solve.

    \begin{table}[H]
        \small
        \centering
        \caption{\textbf{Experimental setup for the CIFAR-10 feature-space experiment.}}\label{tab:cifar10-exp-details}
        \begin{tabular}{@{}llp{0.52\textwidth}@{}}
            \toprule
            Component & Hyperparameter & Value \\
            \midrule
            \multirow{2}{*}{Data} & Dimension & $768$ \\
            & Batch size & $500$ \\
            \midrule
            \multirow{4}{*}{Classifier $f_\phi$} & MLP hidden layers & $3$ \\
            & MLP width & $512$ \\
            & Activation & SiLU \\
            & Optimizer & Adam, lr $10^{-4}$ \\
            \midrule
            \multirow{5}{*}{Transport map $T_{\theta,y}$} & MLP hidden layers & $3$ \\
            & MLP width & $512$ \\
            & Label embedding & $512$ \\
            & Activation & SiLU \\
            & Optimizer & Adam, lr $10^{-3}$ \\
            \midrule
            \multirow{8}{*}{DRO} & Outer iterations & $2000$ \\
            & $\lambda$ & $\{10^{-3},10^{-2},10^{-1},1\}$ \\
            & $\gamma$ & $\infty$ \\
            & Neural updates & $3$ \\
            & GD steps & $15$ \\
            & L-BFGS max iter. & $100$ \\
            & L-BFGS tolerances & gtol $10^{-3}$, ftol $10^{-6}$ \\
            & PGD-50 budget & $0.2\,\overline{\|x\|}$ \\
            \bottomrule
        \end{tabular}
    \end{table}

\end{document}